\title{Convex Analysis and Optimization \\
 with Submodular Functions: a Tutorial}
\author{
Francis Bach  \\
INRIA - Willow project-team \\
Laboratoire d'Informatique de l'Ecole Normale Sup\'erieure \\
Paris, France  \\
\texttt{francis.bach@ens.fr} }
\newcommand{\BEAS}{\begin{eqnarray*}}
\newcommand{\EEAS}{\end{eqnarray*}}
\newcommand{\BEA}{\begin{eqnarray}}
\newcommand{\EEA}{\end{eqnarray}}
\newcommand{\BEQ}{\begin{equation}}
\newcommand{\EEQ}{\end{equation}}
\newcommand{\BIT}{\begin{itemize}}
\newcommand{\EIT}{\end{itemize}}
\newcommand{\BNUM}{\begin{enumerate}}
\newcommand{\ENUM}{\end{enumerate}}
\newcommand{\BA}{\begin{array}}
\newcommand{\EA}{\end{array}}
\newcommand{\var}{\mathop{ \rm var}}
\newcommand{\tr}{\mathop{ \rm tr}}
\newcommand{\rb}{\mathbb{R}}
\newcommand{\BlackBox}{\rule{1.5ex}{1.5ex}}  
\newcommand{\lova}{Lov\'asz }
\newenvironment{proof}{\par\noindent{\bf Proof\ }}{\hfill\BlackBox\\[2mm]}
\newtheorem{proposition}{Proposition}
\newtheorem{definition}{Definition}
\newcommand{\mysec}[1]{Section~\ref{sec:#1}}
\newcommand{\eq}[1]{Eq.~(\ref{eq:#1})}
\newcommand{\myfig}[1]{Figure~\ref{fig:#1}}
\def \lova{Lov\'asz }
\begin{document}

\maketitle

 \section*{Introduction}
 
 Set-functions appear in many areas of computer science and applied mathematics, such as machine learning~\cite{krause2005near,kawahara22submodularity, krause2010submodular,bach2010structured}, computer vision~\cite{boykov2001fast,hochbaum2001efficient}, operations research~\cite{queyranne1995scheduling} or electrical networks~\cite{nara}. Among these set-functions, submodular functions play an important role, similar to convex functions on vector spaces. 
 In this tutorial, the theory of submodular functions is presented, in a self-contained way, with all results shown from first principles. A good knowledge of convex analysis is assumed (see, e.g.,~\cite{boyd,borwein2006caa}).
 
 Several books and tutorial articles already exist on the same topic and the material presented in this tutorial rely mostly on those~\cite{fujishige2005submodular,nara,toshev2010submodular,krause-beyond}. However, in order to present the material in the simplest way, ideas from related research papers have also been used.
 
 \paragraph{Notation.} We consider the set $V = \{1,\dots,p\}$, and its power set $2^V$, composed of the $2^p$ subsets of $V$. Given a vector $s \in \rb^p$, $s$ also denotes the modular set-function defined as $s(A) = \sum_{k \in A }s_k$. Moreover, $A \subset B$ means that $A$ is a subset of $B$, potentially equal to $B$. For $q \in [1,+\infty]$, we denote by $\| w\|_q$ the $\ell_q$-norm of $w$, by $|A|$ the cardinality of the set $A$, and, for $A \subset V = \{1,\dots,p\}$, $1_A$ denotes the indicator vector of the set $A$. If $w \in \rb^p$, and $ \alpha \in \rb$, then $\{ w \geqslant \alpha\}$ (resp.~$\{ w > \alpha\}$) denotes the subset of $V =\{1,\dots,p\}$ defined as $\{ k \in V, \ w_k \geqslant \alpha \}$ (resp.~$\{ k \in V, \ w_k > \alpha \}$). Similarly if $v \in \rb^p$, we have
 $\{ w \geqslant v \} = \{ k \in V, \ w_k \geqslant v_k \}$.

 \paragraph{Tutorial outline.} In \mysec{definitions}, we give the different definitions of submodular functions and of the associated polyhedra. In \mysec{lova}, we define the \lova extension and give its main properties. Associated polyhedra are further studied in \mysec{support}, where support functions and the associated maximizers are computed (we also detail the facial structure of such polyhedra). In \mysec{mini}, we provide some duality theory for submodular functions, while in \mysec{ope}, we present several operations that preserve submodularity. In \mysec{prox}, we consider separable optimization problems associated with the \lova extension; these are reinterpreted in \mysec{base} as separable optimization over the submodular or base polyhedra. In \mysec{sfm}, we present various approaches to submodular function minimization (without all details of algorithms). In \mysec{polym}, we specialize some of our results to non-decreasing submodular functions. Finally, in \mysec{examples}, we present classical examples of submodular functions.

 \tableofcontents

 \section{Definitions}
\label{sec:definitions}

Throughout this tutorial, we consider $V = \{1,\dots,p\}$, $p>0$ and its power set (i.e., set of all subsets) $2^V$, which is of cardinality $2^p$. We also consider a real-valued set-function $F: 2^V \to \rb$ such that $F(\varnothing)=0$. As opposed to the common convention with convex functions, we do not allow infinite values for the function $F$.

\begin{definition}[Submodular function]
\label{def:def}
A set-function $F: 2^V \to \rb$ is   submodular if and only if, for all subsets $A,B \subset V$, we have:
$F(A) + F(B) \geqslant F(A\cup B) + F(A \cap B)$.
\end{definition}

The simplest example of submodular function is the cardinality (i.e., $F(A) =|A|$ where $|A|$ is the number of elements of $A$), which is both submodular and supermodular (i.e., its opposite is submodular), which we refer to as \emph{modular}. 

From Def.~\ref{def:def}, it is clear that the set of submodular functions is closed under addition and multiplication by a positive scalar. The following proposition shows that a submodular has the ``diminishing return'' property, and that this is sufficient to be submodular. Thus, submodular functions may be seen as a discrete analog to \emph{concave} functions. However, in terms of optimization they behave more like \emph{convex} functions (e.g., efficient minimization, duality theory, linked with convex \lova extension).

\begin{proposition}[Equivalent definition with first order differences]
\label{prop:firstorder}
$F$ is submodular if and only if for all $A,B  \subset  V$ and $k \in V$, such that $A \subset B$ and $k \notin B$, we have
 $ F(A \cup \{k\}) - F(A) \geqslant  F(B \cup \{k\}) - F(B) $.
\end{proposition}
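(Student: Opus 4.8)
The plan is to prove both implications directly from the two definitions, exploiting the fact that the first-order (diminishing-returns) inequality is nothing but the submodularity inequality applied to one cleverly chosen pair of sets, and conversely that any pair of sets can be joined by a chain of single-element additions along which the first-order inequality telescopes.

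For the ``only if'' direction, I would assume $F$ is submodular and fix $A \subset B$ with $k \notin B$. Apply Definition~\ref{def:def} to the pair $A \cup \{k\}$ and $B$. Since $A \subset B$, the union is $(A \cup \{k\}) \cup B = B \cup \{k\}$; since $k \notin B$ and $A \subset B$, the intersection is $(A \cup \{k\}) \cap B = A$. Hence submodularity gives $F(A \cup \{k\}) + F(B) \geqslant F(B \cup \{k\}) + F(A)$, which rearranges at once to the desired inequality $F(A \cup \{k\}) - F(A) \geqslant F(B \cup \{k\}) - F(B)$.

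For the ``if'' direction, I would assume the first-order inequality holds for all admissible triples and fix arbitrary $A, B \subset V$. If $B \subset A$ the submodularity inequality is trivial (both $A \cup B = A$ and $A \cap B = B$, so the two sides coincide), so assume $B \setminus A = \{b_1,\dots,b_m\}$ with $m \geqslant 1$. Build two increasing chains $C_i = (A \cap B) \cup \{b_1,\dots,b_i\}$ and $D_i = A \cup \{b_1,\dots,b_i\}$ for $i = 0,\dots,m$, so that $C_0 = A \cap B$, $C_m = B$, $D_0 = A$, $D_m = A \cup B$, with $C_i = C_{i-1} \cup \{b_i\}$ and $D_i = D_{i-1} \cup \{b_i\}$. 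For each $i$ one has $C_{i-1} \subset D_{i-1}$ and $b_i \notin D_{i-1}$ (because $b_i \notin A$ and the $b_j$ are distinct), so the hypothesis yields $F(C_i) - F(C_{i-1}) \geqslant F(D_i) - F(D_{i-1})$. Summing these $m$ inequalities, the left side telescopes to $F(B) - F(A \cap B)$ and the right side to $F(A \cup B) - F(A)$; rearranging gives $F(A) + F(B) \geqslant F(A \cup B) + F(A \cap B)$.

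No step is genuinely hard. The only points needing care are the two set-identities in the first direction (which use $A \subset B$ and $k \notin B$ essentially) and, in the second direction, verifying that the inclusion and non-membership conditions persist at every link of the chain so that each application of the hypothesis is legitimate; isolating the degenerate case $B \subset A$ keeps that verification clean.
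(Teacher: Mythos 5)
Your proof is correct and follows essentially the same route as the paper: the forward direction applies the submodularity inequality to the pair $A\cup\{k\}$ and $B$, and the converse telescopes the single-element inequality along parallel chains from $A\cap B$ to $B$ and from $A$ to $A\cup B$ (the paper phrases this as first proving the multi-element version $F(A\cup C)-F(A)\geqslant F(B\cup C)-F(B)$ for $C\cap B=\varnothing$ and then specializing to $A=X\cap Y$, $B=Y$, $C=X\setminus Y$, which is the same computation). Your explicit handling of the degenerate case $B\subset A$ and the check that the chain conditions persist are fine.
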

\begin{proof} Let $A \subset B$, and $k \notin B$,
$F(A \cup\{k\}) - F(A) - F(B \cup \{k\}) + F(B) = F(C)+F(D) - F(C \cup D) - F(C \cap D)$ with $C = A \cup \{k\}$ and $D = B$, which shows that the condition is necessary.  To prove the opposite, we assume that the condition is satisfied; one can first show that if $A \subset B $ and $ C \cap B = \varnothing$, then $F(A \cup C) - F(A) \geqslant F(B \cup C) - F(B)$ (this can be obtained by summing the $m$ inequalities $F(A \cup \{c_1,\dots,c_k\} ) - F(A \cup \{c_1,\dots,c_{k-1}\}) \geqslant
F(B \cup \{c_1,\dots,c_k\} ) - F(B \cup \{c_1,\dots,c_{k-1}\})$ where $C = \{c_1,\dots,c_m\}$).

Then for any $X,Y \subset V$, take $A = X \cap Y$, $C=X \backslash Y$ and $B=Y$ to obtain
$F(X) + F(Y) \geqslant F(X\cup Y) + F(X \cap Y)$, which shows that the condition is sufficient.
\end{proof}

The following proposition gives the tightest condition for submodularity (easiest to show in practice).

\begin{proposition}[Equivalent definition with second order differences]
$F$ is submodular if and only if for all $A \subset  V$ and $j,k \in V \backslash A$, we have
 $ F(A \cup \{k\}) - F(A) \geqslant  F(A \cup \{j,k\}) - F(A \cup \{ j\} ) $.
\end{proposition}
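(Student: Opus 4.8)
The plan is to prove the two implications separately: necessity follows immediately from the defining inequality (it is literally a special case of Proposition~\ref{prop:firstorder}), and sufficiency is obtained by bootstrapping the stated second-order condition back into the first-order condition of Proposition~\ref{prop:firstorder}. For \textbf{necessity}, suppose $F$ is submodular and fix $A \subset V$ together with distinct $j,k \in V \backslash A$. Put $C = A \cup \{k\}$ and $D = A \cup \{j\}$, so that $C \cup D = A \cup \{j,k\}$ and $C \cap D = A$. The defining inequality $F(C) + F(D) \geqslant F(C \cup D) + F(C \cap D)$ then rearranges verbatim to $F(A \cup \{k\}) - F(A) \geqslant F(A \cup \{j,k\}) - F(A \cup \{j\})$ (equivalently, take $B = A \cup \{j\}$ in Proposition~\ref{prop:firstorder}).

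For \textbf{sufficiency}, assume the second-order condition holds; I will deduce the first-order condition of Proposition~\ref{prop:firstorder}, from which submodularity follows. Fix $A \subset B \subset V$ and $k \notin B$, write $B \backslash A = \{b_1,\dots,b_m\}$, and set $A_0 = A$ and $A_i = A \cup \{b_1,\dots,b_i\}$ for $1 \leqslant i \leqslant m$, so $A_m = B$. Applying the hypothesis with base set $A_{i-1}$ and the two elements $b_i, k \in V \backslash A_{i-1}$, and noting that $A_{i-1} \cup \{b_i\} = A_i$, gives
\[ F(A_{i-1} \cup \{k\}) - F(A_{i-1}) \ \geqslant \ F(A_i \cup \{k\}) - F(A_i) . \]
Hence $i \mapsto F(A_i \cup \{k\}) - F(A_i)$ is non-increasing, and comparing $i=0$ with $i=m$ yields $F(A \cup \{k\}) - F(A) \geqslant F(B \cup \{k\}) - F(B)$, which is exactly the condition of Proposition~\ref{prop:firstorder}; therefore $F$ is submodular.

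There is no genuine obstacle: the whole content is the observation that the weak-looking inequality (two elements added to a common set) propagates along a chain of single-element insertions to recover the apparently stronger first-order inequality. The only points requiring care are the telescoping bookkeeping — in particular, that at each step $b_i$ and $k$ really do lie outside $A_{i-1}$, so the hypothesis applies — and the harmless convention $j \neq k$ (the case $j=k$ being trivial or excluded). I would not try to verify $F(X) + F(Y) \geqslant F(X \cup Y) + F(X \cap Y)$ from scratch; reducing to Proposition~\ref{prop:firstorder} is the cleaner route.
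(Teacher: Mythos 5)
Your proof is correct and follows essentially the same route as the paper: necessity as a direct specialization of the submodularity inequality (equivalently of Prop.~2), and sufficiency by telescoping the second-order inequality along the chain $A = A_0 \subset A_1 \subset \cdots \subset A_m = B$ to recover the first-order condition of Prop.~2. Your explicit remark that $j \neq k$ must be assumed (or the case excluded) is a small point of care the paper leaves implicit.
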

\begin{proof} 
This condition is weaker than the one from previous proposition. To prove that it is still sufficient, simply apply it to subsets $A \cup \{b_1,\dots,b_{s-1}\}$, $j=b_{s}$ for $B = A \cup \{b_1,\dots,b_m\} \supset A$ with $k \notin B$, and sum the $m$ inequalities $F(A \cup \{b_1,\dots,b_{s-1} \} \cup \{k\} ) - F( A\cup \{b_1,\dots,b_{s-1} \} \ )\geqslant F(A \cup \{b_1,\dots,b_{s} \} \cup \{k\} ) - F( A\cup \{b_1,\dots,b_{s} \}  )$, to obtain the condition in Prop.~\ref{prop:firstorder}.
\end{proof}

A vector $s \in \rb^p$ naturally leads to a modular set-function defined as $s(A) = \sum_{k \in A} s_k = s^\top 1_A$, where $1_A \in \rb^p$ is the indicator vector of the set $A$. We now define specific polyhedra in $\rb^p$.
These play a crucial role in submodular analysis, as most results may be interpreted or proved using such polyhedra.

\begin{definition}[Submodular and base polyhedra]
\label{def:polyhedra}
Let $F$ be a submodular function such that $F(\varnothing)=0$. The submodular polyhedron $P(F)$ and the base polyhedron $B(F)$ are defined as:
\BEAS
P(F) & = &  \{ s \in \rb^p, \ \forall A \subset V, s(A) \leqslant F(A) \} \\
B(F) & = &  \{ s \in \rb^p, \ s(V) = F(V), \ \forall A \subset V, s(A) \leqslant F(A) \}
=P(F) \cap \{ s(V) = F(V) \} .
\EEAS
\end{definition}

As shown in the following proposition, the submodular polyhedron $P(F)$ has non empty-interior and is unbounded. Note that the other polyhedron (the base polyhedron) will be shown to be non-empty and bounded as a consequence of Prop.~\ref{prop:greedy}. It has empty interior since it is included in the subspace $s(V)=F(V)$. See \myfig{poly} for examples with $p=2$ and $p=3$.

\begin{proposition}[Properties of submodular polyhedron]
\label{prop:nonemptyinterior}
Let $F$ be a submodular function such that $F(\varnothing)=0$. If $s \in P(F)$, then for all $t \in \rb^p$, such that $t \leqslant s$, we have $t \in P(F)$. Moreover, $P(F)$ has non-empty interior.
\end{proposition}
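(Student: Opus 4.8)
The plan is to treat the two assertions separately; neither will need submodularity, only the definition of $P(F)$ together with the fact that a modular set-function is assembled coordinatewise.

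For the first assertion I would argue directly: suppose $s \in P(F)$ and $t \leqslant s$ coordinatewise, and fix $A \subset V$. Since $t_k \leqslant s_k$ for every $k$, summing over $k \in A$ gives $t(A) = \sum_{k \in A} t_k \leqslant \sum_{k \in A} s_k = s(A) \leqslant F(A)$, the last inequality being exactly membership of $s$ in $P(F)$. As $A$ is arbitrary, $t \in P(F)$.

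For the second assertion, the key reduction is that it suffices to exhibit a single point $s$ at which \emph{all} the defining inequalities hold strictly: because $2^V$ is finite and each map $s \mapsto s(A)$ is continuous, if $s(A) < F(A)$ for every $\varnothing \neq A \subset V$ at one point $s$, then the same strict inequalities persist on a small ball around $s$, which is therefore contained in $P(F)$. (The constraint indexed by $A = \varnothing$ reads $0 \leqslant 0$ and imposes nothing, so it need not be made strict.) To produce such a point I would take $s = \alpha 1_V$ for a scalar $\alpha$, so that $s(A) = \alpha |A|$; choosing $\alpha$ strictly below the finite number $\min_{\varnothing \neq A \subset V} F(A)/|A|$ yields $\alpha |A| < F(A)$ for every nonempty $A$ (multiply $\alpha < F(A)/|A|$ by $|A| > 0$). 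Hence $s$ lies in the interior of $P(F)$, which in particular also shows $P(F)$ is non-empty.

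I do not anticipate a genuine obstacle. The only points deserving care are that ``non-empty interior'' reduces to finding one strictly feasible point precisely because the index set $2^V$ is finite, and that the empty-set inequality is vacuous; it is also worth remarking that submodularity of $F$ is not used anywhere in this argument.
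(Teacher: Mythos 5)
Your proof is correct and follows essentially the same route as the paper: the first assertion is the same coordinatewise summation, and the witness for the second is the same constant vector built from $\min_{A \subset V,\ A \neq \varnothing} F(A)/|A|$. The only (harmless) difference is in how non-empty interior is deduced: the paper uses the first assertion to reduce to mere non-emptiness of $P(F)$ (a non-empty set that is closed under lowering coordinates contains a full open lower orthant), whereas you take $\alpha$ strictly below the minimum and observe that finitely many strict linear inequalities remain valid on a small ball.
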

\begin{proof}
The first part is trivial, since $t(A) \leqslant s(A)$ if $t \leqslant s$. For the second part, we only need to show that $P(F)$ is non-empty, which is true since the constant vector equal to $\min_{ A \subset V, \ A \neq \varnothing} \frac{ F(A) }{|A|}$ belongs to $P(F)$.
\end{proof}

\begin{figure}

\begin{center}
\includegraphics[scale=.5]{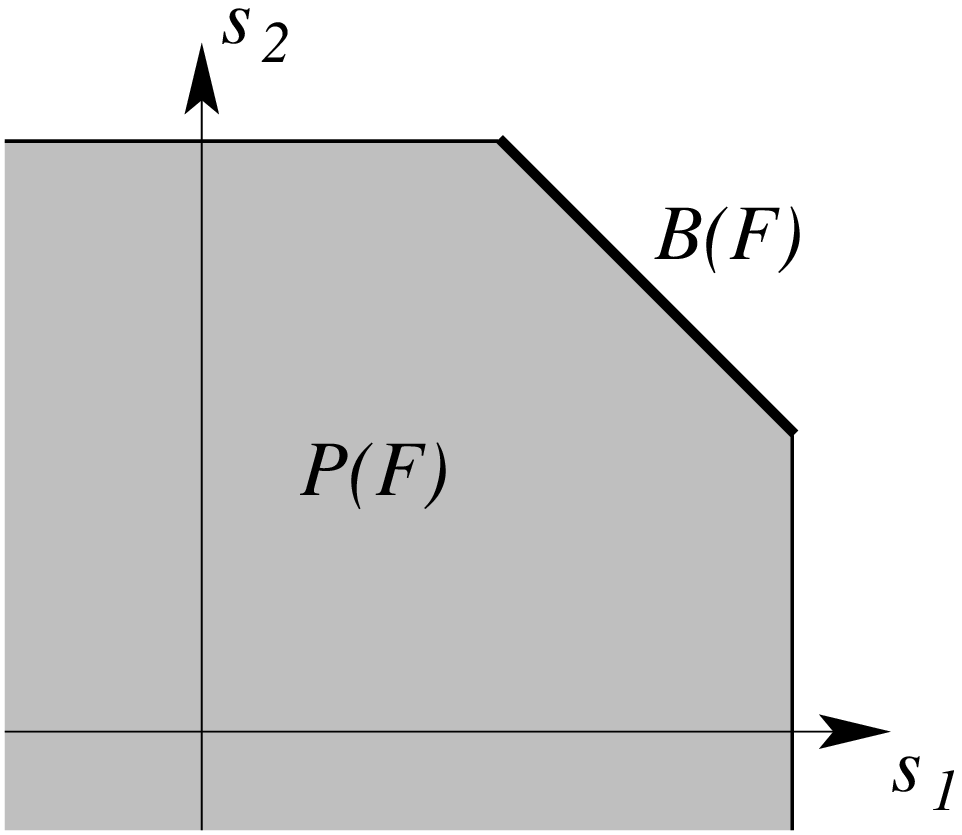}
\hspace*{1cm}
\includegraphics[scale=.5]{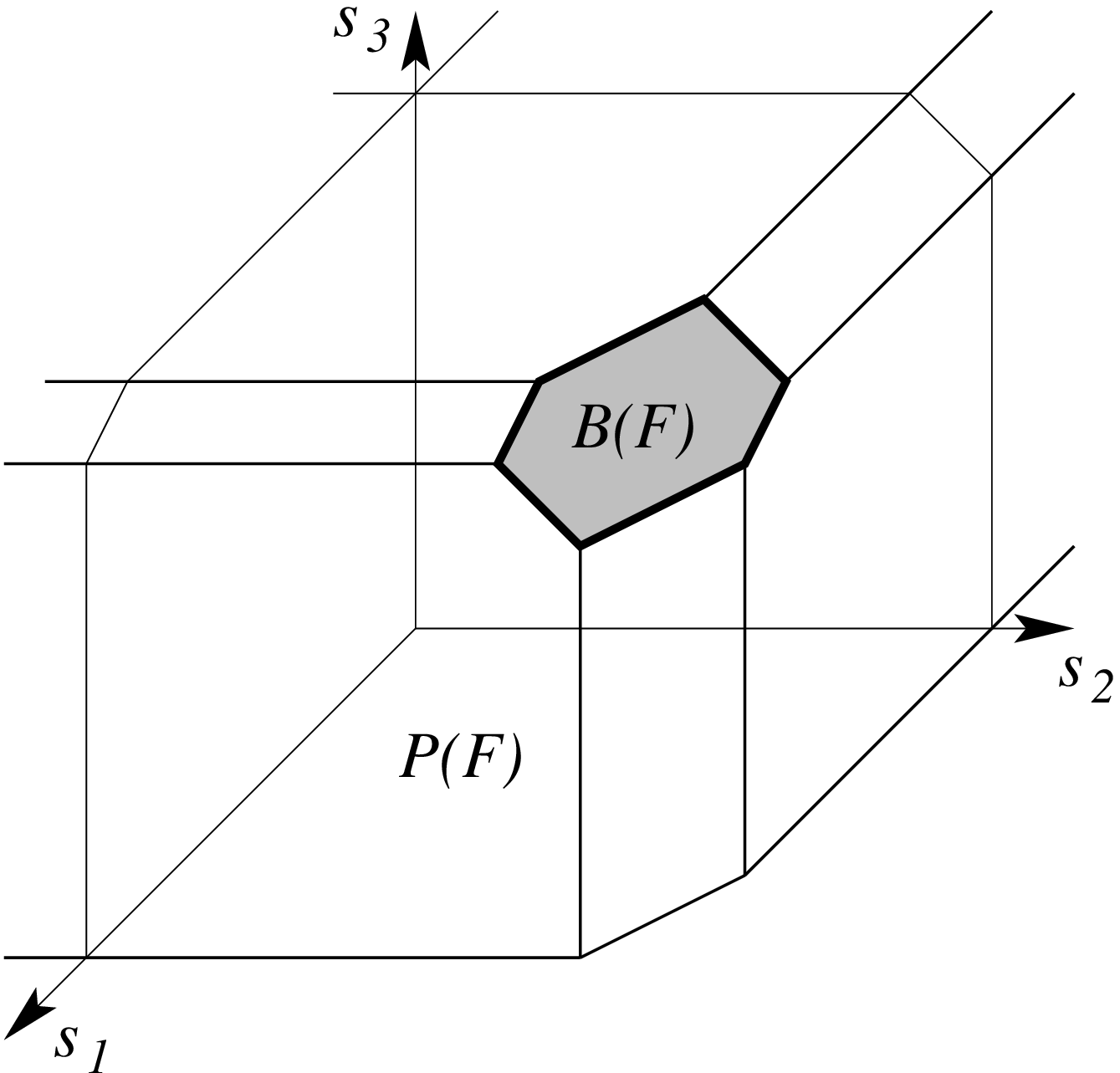}
\end{center}

\caption{Submodular polyhedron $P(F)$ and base polyhedron $B(F)$ for $p=2$ (left) and $p=3$ (right), for a non-decreasing submodular function.}
\label{fig:poly}
\end{figure}

\section{\lova extension}
\label{sec:lova}
We consider a set-function $F$ such that $F(\varnothing)=0$, \emph{which is not necessary submodular}. We can define its \lova extension~\cite{lovasz1982submodular}, which is often referred to as its Choquet integral~\cite{choquet1953theory}.
The \lova extension allows to draw links between submodular set-functions and regular convex functions, and transfer known results   from convex analysis, such as duality.

\begin{definition}[\lova extension]
Given a set-function $F$ such that $F(\varnothing)=0$, the \lova extension $f:\rb^p \to \rb$ is defined as follows; for $w \in \rb^p$, order the components $w_{j_1} \geqslant \cdots \geqslant w_{j_p}$, and define $f(w)$ through any of the following equations:
\BEA
\label{eq:lova1} f(w) &  = &  w_{j_1} F_{j_1}  + \sum_{k=2}^p w_{j_k} \big[ F(\{j_1,\dots,j_k\}) - F(\{j_1,\dots,j_{k-1}\}) \big] ,\\
\label{eq:lova2} &  = &  \sum_{k=1}^{p-1} F(\{j_1,\dots,j_k\}) (w_{j_k} - w_{j_{k+1}} ) + F(V)w_{j_p},\\
\label{eq:lova3}& = & \int_{\min \{ w_1,\dots,w_p \}}^{+\infty} F(  \{w \geqslant z\}   ) dz + F(V) \min \{ w_1,\dots,w_p \}, \\
\label{eq:lova4}& = & \int_{0}^{+\infty} F(   \{w \geqslant z\}   ) dz + 
 \int_{-\infty}^0 [ F(  \{w \geqslant z\} ) - F(V) ] dz.
 \EEA
\end{definition}
\begin{proof}
To prove that we actually define a function,
one needs to prove that the definition is independent of the non unique ordering 
$w_{j_1} \geqslant \cdots \geqslant w_{j_p}$, which is trivial from the last formulation in \eq{lova4}. The first and second formulations in \eq{lova1} and \eq{lova2} are equivalent (by integration by parts, or Abel summation formula). To show equivalence with \eq{lova3}, one may notice that 
$z \mapsto F(   \{w \geqslant z\}   ) $ is piecewise constant, with value zero for $z >  w_{j_1} = \max \{w_1,\dots,w_p\}$, and equal to $F(\{j_1,\dots,j_k\}) $ for $ z \in (w_{j_{k+1}},w_{j_k} ) $, $k=\{1,\dots,p-1\}$, and equal to $F(V)$ for $z < w_{j_p}  = \min \{ w_1,\dots,w_p \}$. What happens at break points is irrelevant for integration.

To prove \eq{lova4}, notice that for $\alpha \leqslant \min \{ 0, w_1,\dots,w_p \} $, \eq{lova3}
\BEAS
f(w)&  = &  \int_{\alpha}^{+\infty} F(   \{w \geqslant z\}   ) dz 
-  \int_{\alpha}^{\min \{ w_1,\dots,w_p \}} F(   \{w \geqslant z\}   ) dz 
 + F(V) \min \{ w_1,\dots,w_p \}  \\
 & = &   \int_{\alpha}^{+\infty} F(  \{w \geqslant z\}   ) dz 
-  \int_{\alpha}^{\min \{ w_1,\dots,w_p \}} F( V ) dz 
 +   \int_0^{ \min \{ w_1,\dots,w_p \}} F(V) dz  \\
& = &  
\int_{\alpha}^{+\infty} F(   \{w \geqslant z\}  ) dz -  \int_{\alpha}^{0} F(V) dz,
\EEAS
 and we get the result by letting $\alpha$ tend to $-\infty$.   
 \end{proof}

Note that for modular functions $A \mapsto s(A)$, with $s \in \rb^p$, then the \lova extension is the linear function $w \mapsto w^\top s$.
The following proposition details classical properties of the Choquet integral. Property~(e) below implies that the \lova extension is equal to the original set-function on $\{0,1\}^p$ (which can canonically be identified to $2^V$), and hence is indeed an \emph{extension} of $F$. 

\begin{proposition}[Properties of \lova extension]
\label{prop:lova}
Let $F$ be any set-function such that $F(\varnothing)=0$. We have:

(a) if $F$ and $G$ are set-functions with \lova extensions $f$ and $g$, then $f+g$ is the \lova extension of  $F+G$, and for all $\lambda \in \rb_+$, $\lambda f$ is the \lova extension of $\lambda F$,

(b) for $w \in \rb^p_+$,  
$f(w) = \int_{0}^{+\infty} F( \{ w \geqslant z \} ) dz$,

(c) if $F(V)=0$,   for all $w \in \rb^p$, $f(w) = \int_{-\infty}^{+\infty} F( \{ w \geqslant z \} ) dz $,

(d) for all $w \in \rb^p$ and $\alpha \in \rb$,
$f(w + \alpha 1_V) = f(w) + \alpha F(V)$,

(e) the \lova extension $f$ is positively homogeneous,

(f) for all $A \subset V$, $F(A) = f(1_A)$,

(g) if $F$ is symmetric (i.e., $\forall A \subset V, \ F(A) = F( V \backslash A)$), then $f$ is even,

(h) if $V=A_1 \cup \cdots \cup A_m$ is a partition of $V$, and $w = \sum_{i=1}^m v_i 1_{A_i}$ (i.e., is constant on each set $A_i$), with $v_1 \geqslant \cdots \geqslant v_m$, then $f(w) = \sum_{i=1}^{m-1} (v_i-v_{i+1}) F(A_1 \cup \cdots \cup A_i) + v_{i+1} F(V)$.
\end{proposition}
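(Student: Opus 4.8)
The plan is to obtain each of the eight claims by specializing one of the four equivalent expressions for $f$; none of them requires $F$ to be submodular (only $F(\varnothing)=0$), and the fact that $f$ is well defined has already been established. So the whole proof is a sequence of short bookkeeping arguments, and the only real decision is which of \eq{lova1}, \eq{lova2}, \eq{lova3}, \eq{lova4} to use for each item.

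Properties (a), (e) and (d) I would read off the ``sorted'' formula \eq{lova1}. For fixed $w$, a valid ordering $w_{j_1}\geqslant\cdots\geqslant w_{j_p}$ depends only on $w$, and \eq{lova1} writes $f(w)$ as a linear combination of the numbers $F(\{j_1,\dots,j_k\})$ with coefficients built only from the coordinates of $w$; hence $f$ depends linearly on $F$, which is (a). Replacing $w$ by $\lambda w$ with $\lambda>0$ keeps the ordering and scales every coordinate by $\lambda$, so $f(\lambda w)=\lambda f(w)$, and $\lambda=0$ is trivial since every term of \eq{lova1} vanishes; this is (e). Replacing $w$ by $w+\alpha 1_V$ again preserves the ordering, and the coefficient of $\alpha$ is $F(\{j_1\})+\sum_{k=2}^p\big(F(\{j_1,\dots,j_k\})-F(\{j_1,\dots,j_{k-1}\})\big)$, which telescopes to $F(V)$; this is (d). For (h) I would use \eq{lova2} with an ordering that lists the elements of $A_1$, then those of $A_2$, and so on: then $w_{j_k}-w_{j_{k+1}}$ vanishes except across a block boundary $k=|A_1|+\cdots+|A_i|$, where it equals $v_i-v_{i+1}$ and $\{j_1,\dots,j_k\}=A_1\cup\cdots\cup A_i$, while $w_{j_p}=v_m$; substituting gives the stated sum (its last term, written $v_{i+1}F(V)$, being read at $i=m-1$).

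Properties (b) and (c) come from \eq{lova4}. If $w\geqslant 0$, then $\{w\geqslant z\}=V$ for all $z<0\leqslant\min_k w_k$, so the integrand $F(\{w\geqslant z\})-F(V)$ of the second integral in \eq{lova4} is identically zero and only $\int_0^{+\infty}F(\{w\geqslant z\})\,dz$ survives; this is (b). If $F(V)=0$, the two integrals in \eq{lova4} merge into $\int_{-\infty}^{+\infty}F(\{w\geqslant z\})\,dz$, which is (c). Property (f) is then immediate from (b): since $1_A\in\rb_+^p$, and $\{1_A\geqslant z\}$ equals $A$ for $z\in(0,1]$ and $\varnothing$ for $z>1$, we get $f(1_A)=\int_0^1 F(A)\,dz=F(A)$, with the case $A=\varnothing$ reducing to $f(0)=0$.

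The only item needing a genuine argument is (g). I would first note that symmetry forces $F(V)=F(V\backslash V)=F(\varnothing)=0$, so by (c) one may use $f(w)=\int_{-\infty}^{+\infty}F(\{w\geqslant z\})\,dz$. Then $f(-w)=\int_{-\infty}^{+\infty}F(\{-w\geqslant z\})\,dz=\int_{-\infty}^{+\infty}F(\{w\leqslant z\})\,dz$ after substituting $z\mapsto-z$, and since $\{w\leqslant z\}=V\backslash\{w>z\}$, symmetry gives $F(\{w\leqslant z\})=F(\{w>z\})$. Finally $\{w>z\}$ and $\{w\geqslant z\}$ differ only for the at most $p$ values of $z$ equal to a coordinate of $w$, a null set, so the two integrals agree and $f(-w)=f(w)$. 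The mild subtlety --- essentially the only place where care is needed in the whole proposition --- is precisely this passage between the strict and non-strict super-level sets under the integral sign, made available by first observing that $F(V)=0$ so that formula (c) applies.
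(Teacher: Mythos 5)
Your proof is correct and follows essentially the same route as the paper: every item is read off one of the equivalent formulas \eq{lova1}--\eq{lova4}, and property (g) is handled exactly as in the paper, by reducing to the integral formula of (c) and exchanging strict and non-strict super-level sets under the integral (a null set of $z$ values). The only cosmetic differences are that you use \eq{lova1} where the paper cites \eq{lova2} for (a), (d), (e), and you deduce (f) from (b) rather than directly from \eq{lova2}; you also spell out (h), which the paper's proof omits, and correctly read the paper's dangling index $v_{i+1}$ in the statement of (h) as $v_m$.
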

\begin{proof}
Properties (a), (b) and (c) are immediate from \eq{lova4} and \eq{lova2}. (d), (e) and (f) are straightforward from \eq{lova2}. If $F$ is symmetric, then $F(V)=0$, and thus
$f(-w) = \int_{-\infty}^{+\infty} F( \{ -w \geqslant z \} ) dz
\int_{-\infty}^{+\infty} F( \{ w  \leqslant -z \} ) dz =
\int_{-\infty}^{+\infty} F( \{ w \leqslant z \} ) dz
=
\int_{-\infty}^{+\infty} F( \{ w > z \} ) dz  = f(w)$ (because we may replace strict inequalities by regular inequalities), i.e., $f$ is even.
\end{proof}

Note that when the function is a cut function, then the \lova extension is related to the total variation and property (c) is often referred to as the co-area formula (see~\cite{chambolle2009total} and references therein, as well as \mysec{cuts}).

The next result relates the \lova extension with the support function of the submodular polyhedron $P(F)$ which is defined in Def.~\ref{def:polyhedra}. This is the basis for many of the theoretical results and algorithms related to submodular functions. It shows that maximizing a linear function with non-negative coefficients on the submodular polyhedron may be obtained in closed form, by the so-called ``greedy algorithm'' (see \cite{lovasz1982submodular} for an intuitive explanation), and the optimal value is equal to the value $f(w)$ of the \lova extension.
Note that otherwise, solving a linear programming problem with $2^p$ constraints would then be required.

\begin{proposition}[Greedy algorithm]
\label{prop:greedy}
Let $F$ be a submodular function such that $F(\varnothing)=0$. Let $w \in \rb_+^p$. A maximizer of $\max_{ s \in P(F) } w^\top s$ may be obtained by the following algorithm: order the components of $w$, as $w_{j_1} \geqslant \cdots \geqslant w_{j_p} \geqslant 0$ and define
$s_{j_k} =  F(\{j_1,\dots,j_k\}) - F(\{j_1,\dots,j_{k-1}\})$. Moreover,  for all $w \in \rb_+^p$, $\max_{ s \in P(F) } w^\top s = f(w)$.
\end{proposition}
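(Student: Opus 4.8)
The plan is to check that the vector $s$ returned by the greedy algorithm lies in $P(F)$, to recognise $w^\top s$ as one of the defining expressions of the \lova extension, and then to bound $w^\top t$ by $f(w)$ for every $t \in P(F)$; together these give both that $s$ is optimal and that the optimal value is $f(w)$.

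\emph{Step 1: $s \in P(F)$.} Set $A_0 = \varnothing$ and $A_k = \{j_1,\dots,j_k\}$, so that the construction gives $s(A_k) = \sum_{l=1}^k \big[ F(A_l) - F(A_{l-1}) \big] = F(A_k)$; thus $s$ is tight on the chain $A_0 \subset A_1 \subset \cdots \subset A_p = V$. For an arbitrary $A \subset V$ I would bound the coordinates one at a time: if $j_k \in A$ then $j_k \notin A_{k-1}$ and $A \cap A_{k-1} \subset A_{k-1}$, so Proposition~\ref{prop:firstorder} yields
\[
F\big( (A \cap A_{k-1}) \cup \{j_k\} \big) - F(A \cap A_{k-1}) \ \geqslant\ F(A_{k-1} \cup \{j_k\}) - F(A_{k-1}) = s_{j_k}.
\]
Since $(A \cap A_{k-1}) \cup \{j_k\} = A \cap A_k$ whenever $j_k \in A$, summing these inequalities over $\{ k :\ j_k \in A \}$ (the terms with $j_k \notin A$ contribute $0$, as then $A \cap A_k = A \cap A_{k-1}$) makes the right-hand side telescope to $F(A \cap A_p) - F(A \cap A_0) = F(A)$. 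Hence $s(A) \leqslant F(A)$ for all $A \subset V$, i.e.\ $s \in P(F)$.

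\emph{Step 2: value at $s$.} Directly, $w^\top s = \sum_{k=1}^p w_{j_k} s_{j_k} = w_{j_1} F(\{j_1\}) + \sum_{k=2}^p w_{j_k} \big[ F(A_k) - F(A_{k-1}) \big]$, which is precisely \eq{lova1}; hence $w^\top s = f(w)$.

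\emph{Step 3: upper bound.} For an arbitrary $t \in P(F)$, Abel summation (write $t_{j_k} = t(A_k) - t(A_{k-1})$ and rearrange) gives
\[
w^\top t = \sum_{k=1}^{p-1} (w_{j_k} - w_{j_{k+1}})\, t(A_k) + w_{j_p}\, t(V).
\]
The ordering $w_{j_1} \geqslant \cdots \geqslant w_{j_p} \geqslant 0$ makes every coefficient nonnegative, and $t \in P(F)$ gives $t(A_k) \leqslant F(A_k)$ and $t(V) \leqslant F(V)$; substituting and using \eq{lova2} yields $w^\top t \leqslant \sum_{k=1}^{p-1} (w_{j_k} - w_{j_{k+1}}) F(A_k) + w_{j_p} F(V) = f(w)$. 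Combined with Step 2 and $s \in P(F)$, this shows $s$ maximises $w^\top s$ over $P(F)$ and that the maximum equals $f(w)$.

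The step I expect to be the main obstacle is Step 1: the right way to exploit submodularity is not obvious, and the trick is to compare the increment $F(A_{k-1} \cup \{j_k\}) - F(A_{k-1})$ used by the algorithm with the increment at the smaller set $A \cap A_{k-1}$ via the diminishing-returns inequality, after which a telescoping sum closes the argument. Steps 2 and 3 are routine bookkeeping (recognising the \lova formulas and one Abel summation), and they also make clear why the hypothesis $w \in \rb^p_+$ is needed — it is exactly what guarantees $w_{j_p} \geqslant 0$, without which the coefficient of $t(V)$ in Step 3 need not be nonnegative.
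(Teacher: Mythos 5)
Your proof is correct and complete, but it takes a genuinely different route from the paper on both key points. For feasibility ($s \in P(F)$), the paper decomposes an arbitrary $A$ into maximal integer intervals and applies the submodular inequality repeatedly along that decomposition; you instead apply the diminishing-returns form (Prop.~\ref{prop:firstorder}) one coordinate at a time, comparing the greedy increment at $A_{k-1}$ with the increment at the smaller set $A \cap A_{k-1}$, and close with a telescoping sum $\sum_k [F(A\cap A_k) - F(A\cap A_{k-1})] = F(A)$. Your version is the classical Edmonds-style argument and is both shorter and easier to verify than the interval decomposition. For optimality, the paper sets up the Lagrangian dual of the LP, exhibits explicit multipliers $\lambda_{A_k} = w_{j_k} - w_{j_{k+1}}$, $\lambda_V = w_{j_p}$, and invokes strong duality; you instead give a direct primal certificate, using Abel summation to show $w^\top t \leqslant f(w)$ for \emph{every} $t \in P(F)$. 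These are really the same computation in disguise --- your Abel coefficients are exactly the paper's dual variables --- but your formulation only needs weak-duality-style reasoning and avoids justifying strong duality (which the paper does via the nonemptiness of $P(F)$ from Prop.~\ref{prop:nonemptyinterior}). The trade-off is that the paper's dual construction is reused later (e.g., in Prop.~\ref{prop:optsupporttightSUB} to characterize all maximizers), whereas your self-contained argument would have to be redone there; but as a proof of this proposition alone, yours is cleaner. You also correctly isolate where $w \geqslant 0$ enters (the coefficient of $t(V)$), which matches the paper's observation that $\lambda_V = w_{j_p}$ must be nonnegative.
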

\begin{proof}
	By convex duality (which applies because $P(F)$ has non empty interior from Prop.~\ref {prop:nonemptyinterior}), we have, by introducing Lagrange multipliers $\lambda_A \in \rb_+$ for the constraints $s(A) \leqslant F(A)$, $A \subset V$:
\BEAS
\max_{ s \in P(F) } w^\top s
 & = &  \min_{\lambda_A \geqslant 0, A \subset V} \max_{s \in \rb^p} \ 
 \bigg\{  w^\top s - \sum_{A \subset V} \lambda_A [ s(A) - F(A) ] \bigg\} \\
 & = &  \min_{\lambda_A \geqslant 0, A \subset V} \max_{s \in \rb^p} \ 
 \bigg\{  \sum_{A \subset V} \lambda_A F(A) + \sum_{k=1}^p s_k \big(
 w_k -   \sum_{A \ni k} \lambda_A
 \big) \bigg\} \\
 & = & \min_{\lambda_A \geqslant 0, A \subset V} \sum_{A \subset V} \lambda_A F(A)
 \mbox{ such that } \forall k \in V, \ w_k = \sum_{A \ni k} \lambda_A.
 \EEAS
 If we take the (primal) solution $s$ of the greedy algorithm, we have $f(w) = w^\top s$ from \eq{lova1}, and $s$ is feasible (i.e., in $P(F)$), because of the submodularity of $F$. Indeed, without loss of generality, we assume that $j_k=k$ for all $k \in \{1,\dots,p\}$. We can decompose $A = A_1 \cup \dots \cup A_m$, where $A_k =  (u_k,v_k]$ are \emph{integer} intervals. We then have:
\BEAS
s(A) & = & \sum_{k=1}^m \big\{  F((0,v_k]) -  F( (0,u_k]) \big\}\\
 & \leqslant & 
 \sum_{k=1}^m  \big\{ F( (u_1,v_k]) -  F( (u_1,u_k])  \big\} \mbox{ by submodularity} \\
&  = &  F((u_1,v_1]) +   \sum_{k=2}^m \big\{ F( (u_1,v_k]) -  F( (u_1,u_k])   \big\}
\\
  & \leqslant &   F((u_1,v_1]) +
 \sum_{k=2}^m   \big\{ F( (u_1,v_1] \cup (u_2,v_k]) -  F(  (u_1,v_1] \cup (u_2,u_k]) \big\}  \mbox{ by submodularity} \\
 & = &  F( (u_1,v_1] \cup (u_2,v_2]) +  \sum_{k=3}^m \big\{ F( (u_1,v_1] \cup (u_2,v_k]) -  F(  (u_1,v_1] \cup (u_2,u_k])   \big\} .
 \EEAS
 By pursuing applying submodularity, we finally obtain that
 $S(A)  \leqslant 
    F( (u_1,v_1] \cup \dots (u_m,v_m]) = F(A)$, i.e., $s \in P(F)$.

 Moreover, we can define
 dual variables $\lambda_{ \{j_1,\dots,j_k\}} = w_{j_k} - w_{j_{k+1}}$ for $k \in \{1,\dots,p-1\}$ and $\lambda_{V} = w_{j_p}$ with all other $\lambda_A$ equal to zero. Then they are all non negative (notably because $w \geqslant 0$), and satisfy the constraint  $\forall k \in V, \ w_k = \sum_{A \ni k} \lambda_A
$. Finally, the dual cost function has also value $f(w)$ (from \eq{lova2}). Thus by duality (which holds, because $P(F)$ is not empty), $s$ is an optimal solution. Note that it is not unique (see Prop.~\ref{prop:optsupport} for a description of the set of solutions).
\end{proof}

The next proposition draws precise links between convexity and submodularity, by showing that a set-function $F$ is submodular if and only if its \lova extension $f$ is convex. This is further developed in Prop.~\ref{prop:minsub} where it is shown that minimizing $F$ on $2^V$ (which is equivalent to minimizing $f$ on $\{0,1\}^p$ since $f$ is an extension of $F$) and minimizing $f$ on $[0,1]^p$ is equivalent (when $F$ is submodular).

\begin{proposition}[Convexity and submodularity]
\label{prop:convexity}
A set-function $F$ is submodular if and only if its \lova extension $f$ is convex.
\end{proposition}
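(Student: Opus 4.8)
The plan is to prove the two implications separately: submodularity of $F$ $\Rightarrow$ convexity of $f$ via Prop.~\ref{prop:greedy}, and convexity of $f$ $\Rightarrow$ submodularity of $F$ via the explicit value of $f$ at the vector $1_A+1_B$.

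\textbf{Submodular $\Rightarrow$ $f$ convex.} By Prop.~\ref{prop:greedy}, for every $w \in \rb_+^p$ we have $f(w) = \max_{s \in P(F)} w^\top s$; being a pointwise supremum of the linear functions $w \mapsto w^\top s$, $s \in P(F)$, this is a convex function of $w$ on the orthant $\rb_+^p$. To promote this to convexity on all of $\rb^p$, I would use properties (d) and (e) of Prop.~\ref{prop:lova}: given $w_1,w_2 \in \rb^p$ and $t \in [0,1]$, pick $\alpha \in \rb$ with $w_1 + \alpha 1_V$ and $w_2 + \alpha 1_V$ in $\rb_+^p$; then $t(w_1+\alpha 1_V)+(1-t)(w_2+\alpha 1_V) \in \rb_+^p$ as well, and applying convexity on $\rb_+^p$ to these three points, together with $f(\,\cdot\, + \alpha 1_V) = f(\,\cdot\,) + \alpha F(V)$, the terms $\alpha F(V)$ cancel and leave $f(tw_1+(1-t)w_2) \leqslant t f(w_1)+(1-t)f(w_2)$. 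Alternatively, one checks that the greedy vector of Prop.~\ref{prop:greedy} actually lies in $B(F)$ (its coordinates telescope to $F(V)$) and that Abel summation gives $w^\top s \leqslant f(w)$ for all $s \in B(F)$ and all $w \in \rb^p$ — using only the ordering of the $w_{j_k}$, not $w \geqslant 0$ — so that $f(w) = \max_{s \in B(F)} w^\top s$ for every $w$, a support function, hence convex.

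\textbf{$f$ convex $\Rightarrow$ submodular.} Fix $A,B\subset V$ and evaluate $f$ at $w = 1_A + 1_B \in \rb_+^p$, which takes the value $2$ on $A\cap B$, the value $1$ on $(A\backslash B)\cup(B\backslash A)$, and $0$ elsewhere. By property (b) of Prop.~\ref{prop:lova}, $f(w) = \int_0^{+\infty} F(\{w\geqslant z\})\,dz$; since $\{w\geqslant z\} = A\cup B$ for $z\in(0,1]$ and $\{w\geqslant z\} = A\cap B$ for $z\in(1,2]$, this integral equals $F(A\cup B)+F(A\cap B)$. On the other hand, positive homogeneity (property (e)) and convexity give $f(1_A+1_B) = 2f\big(\tfrac12 1_A + \tfrac12 1_B\big) \leqslant f(1_A)+f(1_B)$, which by property (f) equals $F(A)+F(B)$. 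Comparing the two expressions for $f(1_A+1_B)$ yields $F(A)+F(B)\geqslant F(A\cup B)+F(A\cap B)$, i.e., $F$ is submodular.

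\textbf{Main obstacle.} The converse implication is a short computation once the test vector $1_A+1_B$ is identified. The only point requiring care is the forward implication: Prop.~\ref{prop:greedy} identifies $f$ with a maximum over $P(F)$ \emph{only for $w\geqslant 0$}, and since $P(F)$ is unbounded that maximum is $+\infty$ for general $w$, so one must transport convexity from $\rb_+^p$ to all of $\rb^p$. The shift $w\mapsto w+\alpha 1_V$ afforded by property (d) does this cleanly, as does passing from $P(F)$ to the bounded base polyhedron $B(F)$.
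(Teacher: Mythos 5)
Your proposal is correct and follows essentially the same route as the paper: the forward implication via Prop.~\ref{prop:greedy} (pointwise maximum of linear functions on $\rb_+^p$) extended to $\rb^p$ by the shift property $f(w+\alpha 1_V)=f(w)+\alpha F(V)$, and the converse by evaluating $f$ at $1_A+1_B$ through the integral formula and comparing with $f(1_A)+f(1_B)$ via convexity and positive homogeneity. Your explicit verification that the $\alpha F(V)$ terms cancel is a welcome elaboration of a step the paper states in one line.
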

\begin{proof}
Let $A,B \subset V$. The vector $1_{A \cup B} + 1_{A \cap B} = 1_A + 1_B$ has components equal to $0$ (on $V \backslash (A\cup B)$), $2$ (on $A \cap B$) and $1$ (on $A \Delta B = (A \backslash B)  \cup (B \backslash A) $). Therefore,
$f(1_{A \cup B} + 1_{A \cap B}) =
\int_{0}^2 F(1_{\{w \geqslant z\}}) dz = \int_0^1 F(A\cup B) dz +  \int_1^2 F(A\cap B) dz =
F(A \cup B) + F(A \cap B)$. 

If $f$ is convex, then by homogeneity, 
$f(1_{A  } + 1_{  B})  \leqslant  f(1_{A  })  +  f(1_{ B}) $, which is equal to 
$ F({A    })  +  F({  B})$, and thus $F$ is submodular.

If $F$ is submodular, then by Proposition~\ref{prop:greedy}, for all $w \in \rb_+^p$, $f(w)$ is a maximum of linear functions, thus, it is convex on $\rb^p_+$. Moreover, because $f(w+ \alpha 1_V) = f(w) + \alpha F(V)$, it is convex on $\rb^p$.
\end{proof}

The next proposition completes Prop.~\ref{prop:convexity} by showing that minimizing the \lova extension on $[0,1]^p$ is equivalent to minimizing it on $\{0,1\}^p$, and hence to minimizing the set-function $F$ on $2^V$  (when $F$ is submodular).

\begin{proposition}[Minimization of submodular functions]
\label{prop:minlova}
\label{prop:minsub}
\label{prop:min}
Let $F$ be a submodular function and $f$ its \lova extension; then $\min_{A \subset V}F(A)  = \min_{ w \in [0,1]^p } f(w)$.

\end{proposition}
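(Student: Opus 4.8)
The plan is to prove the two inequalities separately, and the whole argument rests on the level-set integral representation of the \lova extension. The inequality $\min_{w\in[0,1]^p}f(w)\leqslant \min_{A\subset V}F(A)$ is immediate: by Proposition~\ref{prop:lova}(f) we have $F(A)=f(1_A)$, and every indicator vector $1_A$ lies in $[0,1]^p$, so minimizing $f$ over the larger set $[0,1]^p$ can only decrease the value. One small point worth recording here is that the left-hand minimum is actually attained, since $f$ is continuous (indeed piecewise linear) and $[0,1]^p$ is compact, so the statement is a genuine equality of minima, not of infima.

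For the reverse inequality $\min_{w\in[0,1]^p}f(w)\geqslant \min_{A\subset V}F(A)$, I would fix $w\in[0,1]^p$ and use Proposition~\ref{prop:lova}(b), which gives $f(w)=\int_0^{+\infty}F(\{w\geqslant z\})\,dz$ for $w\in\rb_+^p$. Because $w\in[0,1]^p$, the superlevel set $\{w\geqslant z\}$ is empty for every $z>1$, so by $F(\varnothing)=0$ the integral truncates to $\int_0^1 F(\{w\geqslant z\})\,dz$. Each integrand value is $F$ evaluated at some subset of $V$, hence at least $\min_{A\subset V}F(A)$, so $f(w)\geqslant\int_0^1\min_{A\subset V}F(A)\,dz=\min_{A\subset V}F(A)$; taking the infimum over $w\in[0,1]^p$ finishes this direction, and combining with the first inequality yields the equality.

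I do not expect any real obstacle: the only step needing a touch of care is truncating the integration range from $[0,+\infty)$ to $[0,1]$ using $w\in[0,1]^p$ together with $F(\varnothing)=0$. It is worth noting that the argument gives slightly more than the statement: if $w^\ast$ minimizes $f$ over $[0,1]^p$, then $\int_0^1\big[F(\{w^\ast\geqslant z\})-\min_{A\subset V}F(A)\big]\,dz=0$ with a nonnegative integrand, so $F(\{w^\ast\geqslant z\})=\min_{A\subset V}F(A)$ for almost every $z\in(0,1)$, i.e.\ each such superlevel set is a minimizer of $F$. Together with Proposition~\ref{prop:convexity}, which makes $\min_{w\in[0,1]^p}f(w)$ a convex optimization problem, this is precisely what reduces submodular function minimization to convex minimization followed by thresholding.
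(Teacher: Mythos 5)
Your proof is correct and follows essentially the same route as the paper: the easy direction via $F(A)=f(1_A)$, and the reverse direction by writing $f(w)$ as an integral of $F$ over the superlevel sets of $w$ and bounding the integrand below by $\min_{A\subset V}F(A)$ (the paper phrases this as the decomposition $w=\sum_i\lambda_i 1_{A_i}$ with $\sum_i\lambda_i\leqslant 1$, which is the same level-set computation). Your truncation of the integral to $[0,1]$ using $F(\varnothing)=0$ is a clean equivalent of the paper's use of $\min_{A\subset V}F(A)\leqslant 0$.
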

\begin{proof} Because $f$ is an extension from $\{0,1\}^p$ to $[0,1]^p$ (property (d) from Proposition~\ref{prop:lova}), then 
we must have $\min_{A \subset V} F(A) =  \min_{ w \in \{0,1\}^p } f(w) \geqslant  \min_{ w \in [0,1]^p } f(w)$. For the other inequality, any $w \in [0,1]^p$ may be decomposed as
$w = \sum_{i=1}^p \lambda_i 1_{A_i}$ where $A_1\subset \cdots \subset A_p=V$, where $\lambda$ is nonnegative and has a sum smaller than or equal to  one (this can be obtained by considering $A_i$ the set of indices of the $i$ largest values of $w$). We then have $f(w) 
= \sum_{i=1}^p \int_{\sum_{k=1}^{i-1} \lambda_k}^{\sum_{k=1}^{i} \lambda_k} F(A_i) dz = 
\sum_{i=1}^p \lambda_i F(A_i) \geqslant \sum_{i=1}^p \lambda_i \min_{A\subset V} F(A) \geqslant \min_{A\subset V} F(A)$ (because $\min_{A\subset V} F(A) \leqslant 0$). This leads to the desired result.
\end{proof}

\section{Support function of submodular and base polyhedra}
\label{sec:support}

The next proposition completes Prop.~\ref{prop:greedy} by computing the full support function of $B(F)$ and $P(F)$ (see~\cite{boyd,borwein2006caa} for definitions of support functions), i.e., computing $\max_{s \in B(F)} w^\top s$ and $\max_{s \in P(F)} w^\top s$ for all possible $w$ (with positive and/or negative coefficients). Note the different behaviors for $B(F)$ and $P(F)$.

\begin{proposition}[Support function of submodular and base polyhedra]
\label{prop:support}
Let $F$ be a submodular function such that $F(\varnothing)=0$. We have:

 (a) for all $w \in \rb^p$, $\max_{ s \in B(F) } w^\top s = f(w)$,
 
  (b) if
$w \in \rb_+^p$,  $\max_{ s \in P(F) } w^\top s = f(w)$,

 (c) if  there exists $j$ such that $w_j<0$, then $\max_{ s \in P(F) } w^\top s = + \infty$.
\end{proposition}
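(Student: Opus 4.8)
The plan is to prove the three parts in the order (b), (c), (a), since (b) is already available and the other two are short additions built on the greedy machinery and on part (b).

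\textbf{Part (b).} This is exactly the second statement of Proposition~\ref{prop:greedy}, so nothing new is needed: for $w \in \rb_+^p$ we already have $\max_{s \in P(F)} w^\top s = f(w)$, with an explicit maximizer given by the greedy algorithm.

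\textbf{Part (c).} Suppose some coordinate $w_j < 0$. The idea is to exhibit a ray inside $P(F)$ along which $w^\top s \to +\infty$. By Proposition~\ref{prop:nonemptyinterior}, $P(F)$ is nonempty; fix any $s^0 \in P(F)$. Again by Proposition~\ref{prop:nonemptyinterior}, if $t \leqslant s$ and $s \in P(F)$ then $t \in P(F)$; in particular $s^0 - \alpha 1_{\{j\}} \in P(F)$ for every $\alpha \geqslant 0$, since decreasing a single coordinate only decreases every $s(A)$. Then $w^\top (s^0 - \alpha 1_{\{j\}}) = w^\top s^0 - \alpha w_j$, and because $w_j < 0$ this tends to $+\infty$ as $\alpha \to +\infty$. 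Hence the supremum over $P(F)$ is $+\infty$.

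\textbf{Part (a).} Here the plan is to reduce to part (b) using the translation property (d) of Proposition~\ref{prop:lova} together with the defining equality $s(V) = F(V)$ on the base polyhedron. Given arbitrary $w \in \rb^p$, choose $\alpha \in \rb$ large enough that $w + \alpha 1_V \in \rb_+^p$ (take $\alpha = -\min_k w_k$ if this minimum is negative, and $\alpha=0$ otherwise). For any $s \in B(F)$ we have $(w + \alpha 1_V)^\top s = w^\top s + \alpha\, s(V) = w^\top s + \alpha F(V)$, a constant shift independent of $s$; therefore $\max_{s \in B(F)} w^\top s = \max_{s \in B(F)} (w+\alpha 1_V)^\top s - \alpha F(V)$. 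It remains to compute $\max_{s \in B(F)} (w+\alpha 1_V)^\top s$ for the nonnegative vector $w + \alpha 1_V$. On one hand $B(F) \subset P(F)$, so this is at most $\max_{s \in P(F)}(w+\alpha 1_V)^\top s = f(w+\alpha 1_V)$ by part (b). On the other hand, the greedy maximizer $s$ from Proposition~\ref{prop:greedy} for the vector $w+\alpha 1_V$ satisfies $s(V) = F(\{j_1,\dots,j_p\}) = F(V)$ by telescoping, so $s \in B(F)$ and it attains the value $f(w+\alpha 1_V)$; hence the max over $B(F)$ equals $f(w+\alpha 1_V)$. Combining, $\max_{s \in B(F)} w^\top s = f(w+\alpha 1_V) - \alpha F(V) = f(w)$, the last equality being property (d) of Proposition~\ref{prop:lova}.

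The only genuinely delicate point is making sure in part (a) that the greedy solution does lie in $B(F)$ and not merely in $P(F)$ — i.e., that the single equality constraint $s(V)=F(V)$ is met — but this is immediate from the telescoping sum $\sum_{k=1}^p \big[F(\{j_1,\dots,j_k\}) - F(\{j_1,\dots,j_{k-1}\})\big] = F(V)$. Everything else is bookkeeping with the translation identity and the inclusion $B(F) \subset P(F)$.
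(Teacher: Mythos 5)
Your proof is correct and follows essentially the same route as the paper's: part (b) is quoted from Proposition~\ref{prop:greedy}, part (c) uses the downward ray $s_0 - \alpha 1_{\{j\}}$ inside $P(F)$, and part (a) first establishes the nonnegative case via the greedy solution lying in $B(F)$ (telescoping to $s(V)=F(V)$) and then extends to all $w$ by the translation identity $f(w+\alpha 1_V) = f(w) + \alpha F(V)$ combined with $s(V)=F(V)$ on the base polyhedron. Your write-up is just a more explicit version of the paper's argument.
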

\begin{proof} (a) From the proof of Prop.~\ref{prop:greedy}, for $w \in \rb_+^p$, then the result of the greedy algorithm satisfies $s(V) = F(V)$, and hence $(a)$ is true on $\rb_+^p$. For all $w$, for $\alpha $ large enough, $w + \alpha 1_V \geqslant 0$, and thus
$f(w) + \alpha F(V) = f(w+ \alpha 1_V) = \max_{ s \in B(F) } (w+ \alpha 1_V) ^\top s =  
\alpha F(V) +  \max_{ s \in B(F) } w ^\top s $, i.e., (a) is true.

Property (b) is shown in Proposition~\ref{prop:greedy}. For (c), notice that $s(\lambda) = s_0 - \lambda \delta_j \in P(F)$ for $\lambda \to + \infty$ and $s_0 \in P(F)$ and that  $w^\top s(\lambda) \to +\infty$.
\end{proof}

The next proposition shows necessary and sufficient conditions for optimality in the definition of support functions. Note that Prop.~\ref{prop:greedy} gave one example obtained from the greedy algorithm, and that we can now characterize all maximizers. Moreover, note that the maximizer is unique only when $w$ has distinct values, and otherwise, the ordering of the components of $w$ is not unique, and hence, the greedy algorithm may have multiple outputs (and all convex combinations of these are also solutions). The following proposition essentially shows what is exactly needed  to be a maximizer.

\begin{proposition}[Maximizers of the support function of submodular polyhedron]
\label{prop:optsupporttightSUB}
Let $F$ be a submodular function such that $F(\varnothing)=0$. Let $w \in (\rb_+^\ast)^p$, with unique values $v_1 > \cdots > v_m > 0 $, taken at sets $A_1,\dots,A_m$ (i.e., $V = A_1 \cup \cdots \cup A_m$ and $\forall i \in \{1,\dots,m\}, \ \forall k \in A_i, \ w_k = v_i$). Then $s$ is optimal for $\max_{s \in P(F)} w^\top s$ if and only if for all $i=1,\dots,m$,
$s(A_1 \cup \cdots \cup A_i) = F(A_1 \cup \cdots \cup A_i)$.
 \end{proposition}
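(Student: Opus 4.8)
The plan is to characterize optimality by combining the greedy-algorithm construction from Prop.~\ref{prop:greedy} with linear-programming duality (complementary slackness). The support function $\max_{s \in P(F)} w^\top s$ is a linear program whose dual (as written in the proof of Prop.~\ref{prop:greedy}) is $\min \sum_{A \subset V} \lambda_A F(A)$ over $\lambda_A \geqslant 0$ with $\sum_{A \ni k} \lambda_A = w_k$ for all $k$. Since $w$ has the block structure with $w_k = v_i$ on $A_i$, the greedy algorithm (for any ordering consistent with $v_1 > \cdots > v_m$) produces a primal feasible $s$ and a dual feasible $\lambda$ supported on the nested sets $A_1 \cup \cdots \cup A_i$, with $\lambda_{A_1 \cup \cdots \cup A_i} = v_i - v_{i+1} > 0$ for $i < m$ and $\lambda_V = v_m > 0$; both have objective value $f(w)$ by \eq{lova1} and \eq{lova2}. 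So the optimal value is $f(w)$ and this fixed $\lambda$ is an optimal dual solution.

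\textbf{Sufficiency.} Suppose $s \in P(F)$ satisfies $s(A_1 \cup \cdots \cup A_i) = F(A_1 \cup \cdots \cup A_i)$ for all $i$. Using the Abel-summation identity underlying the equivalence of \eq{lova1} and \eq{lova2}, write $w = \sum_{i=1}^{m-1}(v_i - v_{i+1}) 1_{A_1 \cup \cdots \cup A_i} + v_m 1_V$; then $w^\top s = \sum_{i=1}^{m-1}(v_i-v_{i+1}) s(A_1 \cup \cdots \cup A_i) + v_m s(V)$. Here I should note that $s(V) = F(V)$: this follows because the decomposition already forces $s(A_1 \cup \cdots \cup A_m) = s(V) = F(V)$ (taking $i = m$, as $A_1 \cup \cdots \cup A_m = V$). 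Substituting the hypothesized equalities turns this into $\sum_{i=1}^{m-1}(v_i-v_{i+1}) F(A_1 \cup \cdots \cup A_i) + v_m F(V) = f(w)$ by \eq{lova2}. Hence $w^\top s$ attains the optimal value, so $s$ is optimal.

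\textbf{Necessity.} Suppose $s \in P(F)$ is optimal, so $w^\top s = f(w)$. The chain of inequalities above gives $w^\top s = \sum_{i=1}^{m-1}(v_i-v_{i+1}) s(A_1 \cup \cdots \cup A_i) + v_m s(V) \leqslant \sum_{i=1}^{m-1}(v_i-v_{i+1}) F(A_1 \cup \cdots \cup A_i) + v_m F(V) = f(w)$, using $s(A) \leqslant F(A)$ termwise together with $v_i - v_{i+1} > 0$ and $v_m > 0$. Equality of the two ends forces equality in every term with a strictly positive coefficient, i.e. $s(A_1 \cup \cdots \cup A_i) = F(A_1 \cup \cdots \cup A_i)$ for all $i = 1, \dots, m$. (This is exactly complementary slackness against the strictly positive dual solution $\lambda$ above.)

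\textbf{The main subtlety} is just making sure the coefficients $v_i - v_{i+1}$ and $v_m$ are all strictly positive — this is where the hypotheses $w \in (\rb_+^\ast)^p$ and that the $v_i$ are the \emph{distinct} values are used, and it is what lets equality in the summed inequality be pushed back to equality in each individual constraint. No genuine obstacle remains beyond bookkeeping; everything reduces to the Abel-summation rewriting of $w^\top s$ and LP duality already established in Prop.~\ref{prop:greedy}.
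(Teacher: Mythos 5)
Your proof is correct and follows essentially the same route as the paper's: both use the dual solution $\lambda$ supported on the nested sets $B_i = A_1\cup\cdots\cup A_i$ coming from the greedy algorithm, rewrite $w^\top s$ by Abel summation as $\sum_{i=1}^{m-1}(v_i-v_{i+1})s(B_i)+v_m s(V)$, and conclude by termwise comparison with $f(w)$, using the strict positivity of the coefficients to force equality in each constraint. The paper's displayed computation is exactly this complementary-slackness argument written in the other direction (starting from the dual objective rather than from $w^\top s$), so there is nothing substantive to distinguish the two.
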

\begin{proof}
Let $B_i = A_1 \cup \cdots \cup A_i$, for $i=1,\dots,m$.
From the optimization problems defined in the proof of Prop.~\ref{prop:greedy}, let $\lambda_V = v_m >0 $, and $\lambda_{B_i} = v_i - v_{i+1}>0$ for $i<m$, with all other $\lambda_A$, $A \subset V$, equal to zero. Such $\lambda$ is optimal (because the dual function is equal to $f(w)$).

Let $s \in B(F)$. We have:
\BEAS
\sum_{A \subset V} \lambda_A F(A) & = & v_m F(V) + \sum_{i=1}^{m-1} F(B_i) ( v_i - v_{i+1}) \\
 & = & v_m ( F(V) - s(V) ) + \sum_{i=1}^{m-1} [F(B_i) -s(B_i) ]( v_i - v_{i+1})   \\
 & & \hspace*{6cm} + v_m s(V) + \sum_{i=1}^{m-1} s(B_i) ( v_i - v_{i+1}) \\
& \geqslant & v_m s(V) + \sum_{i=1}^{m-1} s(B_i) ( v_i - v_{i+1}) = s^\top w.
\EEAS
Thus $s$ is optimal, if and only if the primal objective value $s^\top w$ is equal to the optimal dual objective value $\sum_{A \subset V} \lambda_A F(A) $, and thus, if and only if
there is equality in all above inequalities, hence the desired result.
 \end{proof}
 
 Note that if $v_m = 0$ in Prop~\ref{prop:optsupporttightSUB} (i.e., we take $w  \in \rb^p_+$ and there is a $w_k$ equal to zero), then the optimality condition is that for  all $i=1,\dots,m-1$,
$s(A_1 \cup \cdots \cup A_i) = F(A_1 \cup \cdots \cup A_i)$ (i.e., we don't need that $s(V)=F(V)$, i.e., the optimal solution is not necessarily in the base polyhedron).

\begin{proposition}[Maximizers of the support function of base polyhedron]
\label{prop:optsupporttight}
Let $F$ be a submodular function such that $F(\varnothing)=0$. Let $w \in \rb^p$, with unique values $v_1 > \cdots > v_m$, taken at sets $A_1,\dots,A_m$. Then $s$ is optimal for $\max_{s \in B(F)} w^\top s$ if and only if for all $i=1,\dots,m$,
$s(A_1 \cup \cdots \cup A_i) = F(A_1 \cup \cdots \cup A_i)$.
 \end{proposition}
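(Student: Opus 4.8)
The plan is to mimic the structure of the proof of Proposition~\ref{prop:optsupporttightSUB} as closely as possible, replacing the submodular polyhedron $P(F)$ by the base polyhedron $B(F)$. Since $w$ may now have negative components, the index set runs over all of $V = A_1 \cup \cdots \cup A_m$ with $v_1 > \cdots > v_m$ (no positivity assumed), and the relevant dual variables are $\lambda_V = v_m$ and $\lambda_{B_i} = v_i - v_{i+1}$ for $i < m$, where $B_i = A_1 \cup \cdots \cup A_i$; all other $\lambda_A$ vanish. Note that $v_m$ need no longer be nonnegative, but this is precisely why we are over $B(F)$ and not $P(F)$: the coefficient of the unconstrained equality $s(V) = F(V)$ is allowed to have any sign. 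First I would recall, via Proposition~\ref{prop:support}(a), that $\max_{s \in B(F)} w^\top s = f(w)$ for all $w \in \rb^p$, and that (from the proof of Prop.~\ref{prop:greedy}, extended in Prop.~\ref{prop:support}) this $\lambda$ is an optimal dual solution with $\sum_{A \subset V}\lambda_A F(A) = f(w)$.

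Next, for an arbitrary $s \in B(F)$, I would reproduce the telescoping identity from Prop.~\ref{prop:optsupporttightSUB}: using $\lambda_{B_i} = v_i - v_{i+1}$ and $\lambda_V = v_m$ together with Abel summation, $\sum_{A \subset V}\lambda_A F(A) = v_m F(V) + \sum_{i=1}^{m-1}(v_i - v_{i+1}) F(B_i)$, and the same rearrangement gives
\BEAS
\sum_{A \subset V} \lambda_A F(A) &=& v_m\bigl(F(V) - s(V)\bigr) + \sum_{i=1}^{m-1}\bigl(v_i - v_{i+1}\bigr)\bigl(F(B_i) - s(B_i)\bigr) \\
&& \hspace*{3cm} {}+ v_m s(V) + \sum_{i=1}^{m-1}\bigl(v_i - v_{i+1}\bigr) s(B_i),
\EEAS
where the last two terms sum (again by Abel summation) to $s^\top w$. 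The essential difference from the $P(F)$ case is the term $v_m\bigl(F(V) - s(V)\bigr)$: since $s \in B(F)$ we have $s(V) = F(V)$ exactly, so this term is identically zero regardless of the sign of $v_m$. For the remaining terms, $v_i - v_{i+1} > 0$ by strict ordering of the distinct values, and $F(B_i) - s(B_i) \geqslant 0$ because $s \in B(F) \subset P(F)$; hence $\sum_{A \subset V}\lambda_A F(A) \geqslant s^\top w$.

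Finally, since $\lambda$ is dual-optimal with value $f(w) = \max_{s \in B(F)} w^\top s$, a given $s \in B(F)$ is a maximizer if and only if $s^\top w = \sum_{A \subset V}\lambda_A F(A)$, i.e., if and only if all the inequalities above are equalities. The term involving $s(V)$ is automatically tight on $B(F)$, so tightness is equivalent to $F(B_i) = s(B_i)$ for $i = 1,\dots,m-1$, and together with $s(V) = F(V)$ (which holds by membership in $B(F)$) this is exactly the stated condition $s(A_1 \cup \cdots \cup A_i) = F(A_1 \cup \cdots \cup A_i)$ for all $i = 1,\dots,m$. I do not anticipate a serious obstacle here; the only point requiring a little care is bookkeeping the Abel-summation rearrangement and observing that, unlike in Prop.~\ref{prop:optsupporttightSUB}, no sign hypothesis on $v_m$ is needed precisely because the $s(V) = F(V)$ constraint is built into $B(F)$.
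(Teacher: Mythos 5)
Your proposal is correct and follows exactly the route the paper intends: it reproduces the dual-variable/Abel-summation argument of Prop.~\ref{prop:optsupporttightSUB}, with the one relevant observation (that the $v_m\bigl(F(V)-s(V)\bigr)$ term vanishes identically on $B(F)$, so no sign condition on $v_m$ is needed) made explicit. The paper's own proof is just the one-line remark that the same arguments apply, so your write-up is a faithful and complete expansion of it.
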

\begin{proof} The proof follows the same arguments than for Prop.~\ref{prop:optsupporttightSUB}.
 \end{proof}

Given the last proposition, we may now give necessary and sufficient conditions for characterizing faces of the base polyhedron. We first characterize when the base polyhedron $B(F)$ has full relative interior.

\begin{definition}[Inseparable set] Let $F$ be a submodular function such that $F(\varnothing)=0$. A set $A \subset V$ is said separable if and only there is a set $B \subset A$, such that $B \neq \varnothing$, $B \neq A$ and $F(A) = F(B) + F( A \backslash A)$. If $A$ is non separable, $A$ is said inseparable.
\end{definition}

\begin{proposition}[Full-dimensional base polyhedron]
\label{prop:fulldim}
Let $F$ be a submodular function such that $F(\varnothing)=0$. 
The base polyhedron has full relative interior if and only if $V$ is not separable.
\end{proposition}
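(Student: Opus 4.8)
The plan is first to unwind the statement. By Prop.~\ref{prop:greedy}, $B(F)$ is nonempty, and by construction it lies in the affine hyperplane $H=\{s\in\rb^p : s(V)=F(V)\}$, which has dimension $p-1$; so ``$B(F)$ has full relative interior'' means that $B(F)$ is full-dimensional inside $H$, i.e.\ $\dim B(F)=p-1$, equivalently that there is a point $\bar s\in B(F)$ at which every constraint $s(A)\le F(A)$ with $\varnothing\subsetneq A\subsetneq V$ holds \emph{strictly}. I will prove the two implications separately.

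\emph{$V$ separable $\Rightarrow$ $B(F)$ not full-dimensional.} Suppose $F(V)=F(A)+F(V\setminus A)$ for some $A$ with $\varnothing\subsetneq A\subsetneq V$. For any $s\in B(F)$ we have $s(A)\le F(A)$ and $s(V\setminus A)\le F(V\setminus A)$, while $s(A)+s(V\setminus A)=s(V)=F(V)=F(A)+F(V\setminus A)$; hence both inequalities are equalities, so $B(F)\subseteq H\cap\{s : s(A)=F(A)\}$. Since $A\neq\varnothing,V$, the vectors $1_A$ and $1_V$ are linearly independent, so this intersection is an affine subspace of dimension $p-2$, whence $\dim B(F)\le p-2$ and $B(F)$ does not have full relative interior.

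\emph{$V$ not separable $\Rightarrow$ $B(F)$ full-dimensional.} For each $A$ with $\varnothing\subsetneq A\subsetneq V$, choose an ordering $j_1,\dots,j_p$ of $V$ that lists the elements of $V\setminus A$ first, so $V\setminus A=\{j_1,\dots,j_{p-|A|}\}$, and let $s^A$ be the corresponding greedy vector $s^A_{j_k}=F(\{j_1,\dots,j_k\})-F(\{j_1,\dots,j_{k-1}\})$. By the proof of Prop.~\ref{prop:greedy}, $s^A\in P(F)$ and $s^A(V)=F(V)$, so $s^A\in B(F)$; telescoping over the prefix gives $s^A(V\setminus A)=F(V\setminus A)$, hence $s^A(A)=F(V)-F(V\setminus A)$. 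Applying Def.~\ref{def:def} to the pair $V\setminus A$, $A$ (both proper and nonempty) gives $F(V\setminus A)+F(A)\ge F(V)$, and this inequality is strict because $V$ is not separable; therefore $s^A(A)<F(A)$. Now average over all $N=2^p-2$ proper nonempty subsets: $\bar s:=\frac{1}{N}\sum_A s^A\in B(F)$, and for each such $A$, using $s^B(A)\le F(A)$ for every $B$, we get $\bar s(A)\le\frac{1}{N}\big[(N-1)F(A)+s^A(A)\big]<F(A)$. Thus every constraint $s(A)\le F(A)$, $\varnothing\subsetneq A\subsetneq V$, is strict at $\bar s$; being finitely many strict inequalities they remain satisfied on a relative neighbourhood of $\bar s$ in $H$ (where $s(\varnothing)\le 0$ and $s(V)\le F(V)$ hold automatically), so $\bar s$ is a relative interior point of $B(F)$ and $\dim B(F)=p-1$.

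The one slightly delicate point is obtaining strict inequalities for \emph{all} proper nonempty subsets simultaneously; the device is to put each such $A$ last in the greedy ordering, so that strictness of $s^A(A)<F(A)$ is precisely the failure of separability of $V$ along $V\setminus A$, and then to average the resulting $2^p-2$ vertices. (Alternatively, one could invoke the standard description of the affine hull of a polyhedron by its implicit equalities to reduce the converse to a single tight set $A$, and then derive $F(V)=F(A)+F(V\setminus A)$ from the same ``$A$-last'' greedy vertex.)
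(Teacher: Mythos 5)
Your proof is correct, and while the easy direction (separable $\Rightarrow$ not full-dimensional) coincides with the paper's, your argument for the converse is genuinely different. The paper argues by contraposition: if $B(F)$ lacks full relative interior, then some constraint $s(A)=F(A)$ with $\varnothing \subsetneq A \subsetneq V$ is an implicit equality holding on all of $B(F)$ (this step, a standard fact about affine hulls of polyhedra, is left implicit), and then $B(F)$ factorizes as $B(F_A)\times B(F^A)$ for the restriction and contraction, so that $f(w)=f_A(w_A)+f^A(w_{V\backslash A})$; evaluating at $w=1_V$ gives $F(V)=F(A)+F(V\backslash A)$. You instead argue directly: for each proper nonempty $A$ you pick the greedy vertex $s^A$ whose ordering lists $V\backslash A$ first, so that $s^A(A)=F(V)-F(V\backslash A)$, and inseparability of $V$ upgrades the submodular inequality $F(A)+F(V\backslash A)\geqslant F(V)$ to a strict one, giving $s^A(A)<F(A)$; averaging the $2^p-2$ resulting points of $B(F)$ produces a single point at which every nontrivial constraint is slack, hence a relative interior point. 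Your route is more elementary and self-contained --- it needs only the feasibility part of the greedy construction (Prop.~\ref{prop:greedy}) and convexity of $B(F)$, avoids the implicit-equality machinery, and exhibits an explicit interior point --- whereas the paper's factorization is more structural and is reused elsewhere (e.g., in Prop.~\ref{prop:faces} and the decomposition algorithm of \mysec{decomp}). The only cosmetic caveat is the degenerate case $p=1$, where there are no proper nonempty subsets and the average is over an empty index set; there the statement holds trivially since $B(F)$ is a single point in a zero-dimensional affine hull.
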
 
\begin{proof}
If $V$ is separable into $A$ and $V \backslash A$, then for all $s \in B(F)$, we must have $s(A) = F(A)$ and hence the base polyhedron is included in the intersection of two affine hyperplanes, i.e., $B(F)$ does not have full relative interior in $\{ s(V)=F(V) \}$.

We now assume that $B(F)$ is included in $\{s(A) = F(A)\}$, for $A$ as a non-empty strict subset of $V$. Then $B(F)$ can be factorized in to $B(F_A) \times B(F^A)$ where $F_A$ is the restriction of $F$ to $A$ and $F^A$ the contraction of $F$ on $A$. 
Indeed, if $s \in B(F)$, then $s_A \in B(F_A)$ because $s(A)=F(A)$, and $s_{V \backslash A} \in B(F^A)$, because for $B \subset V \backslash A$, $s_{V \backslash A}(B) = s(B)  = s( A \cup B) - s(A) \leqslant F( A \cup B) - F(A)$. Similarly, if $s \in B(F_A) \times B(F^A)$, then for all set $B \subset V$, $s(B) = s( A \cap B) + S( (V \backslash A) \cap B) \leqslant F(A \cap B) + F( A \cup B) - F(A) \leqslant F(B)$ by submodularity, and $s(A) = F(A)$.

This shows that $f(w) =  f_A(w_A) + f^A(w_{V \backslash A} ) $, which implies that $F(V) = F(A)+F(V \backslash A)$, when applied to $w = 1_V$, i.e., $V$ is separable.
\end{proof}

We can now detail the facial structure of the base polyhedron, which will be dual to the one of the polyhedron defined by $\{ w \in \rb^p, \ f(w) \leqslant 1\}$ (i.e., level set of the \lova extension). As the base polyhedron $B(F)$ is a polytope in dimension $p-1$ (because it is bounded and contained in the affine hyperplane $\{ s(V) = F(V) \}$), one can define a set of \emph{faces}. Faces are the intersections of the polyhedron $B(F)$ with any of its supporting hyperplanes. Supporting hyperplanes are themselves defined as the hyperplanes $w^\top s = \max_{s \in B(F)} w^\top s = f(w)$ for $w \in \rb^p$.
From Prop.~\ref{prop:optsupporttight}, faces (which potentially empty relative interior) are obtained as the intersection of $B(F)$ with $s(A_1 \cup \cdots \cup A_i) = F( A_1 \cup \cdots  \cup A_i)$ for an ordered partition of $V$. Together with Prop.~\ref{prop:fulldim}, we can now provide characterization of the faces of $B(F)$.

\begin{proposition}[Faces of the base polyhedron]
\label{prop:faces}
Let $A_1 \cup \cdots \cup A_m$ be an ordered partition of $V$, such that for all $j \in \{1,\dots,m\}$, $A_j$ is inseparable for the function $G_j: B \mapsto F( A_1 \cup \cdots  \cup A_{j-1} \cup B) - F( A_1 \cup \cdots  \cup A_{j-1})$ defined on subsets of $A_j$, then  the set of bases $s \in B(F)$ such that for all $j \in \{1,\dots,m\}$, $s(A_1 \cup \cdots \cup A_i) = F( A_1 \cup \cdots  \cup A_i)$ is a proper face of $B(F)$ with non-empty relative interior.

\end{proposition}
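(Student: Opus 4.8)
The plan is to establish three things in turn: that the set in question---call it $\mathcal{F}$---is a nonempty face of $B(F)$, that it is affinely isomorphic to a product $B(G_1)\times\cdots\times B(G_m)$ of base polyhedra, and that this product has nonempty relative interior (of dimension $p-m$). Throughout write $B_i=A_1\cup\cdots\cup A_i$, with $B_0=\varnothing$ and $B_m=V$. First I would fix reals $v_1>\cdots>v_m$ and set $w=\sum_{j=1}^m v_j1_{A_j}$, so that $w$ has exactly the distinct values $v_1,\dots,v_m$, taken on $A_1,\dots,A_m$; then Prop.~\ref{prop:optsupporttight} identifies $\mathcal{F}$ with the set of maximizers of $s\mapsto w^\top s$ over $B(F)$, so $\mathcal{F}$ is a face. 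It is nonempty because the greedy point of Prop.~\ref{prop:greedy} built from any ordering of $V$ that lists the elements of $A_1$ first, then those of $A_2$, and so on, lies in $B(F)$ and satisfies $s(B_i)=F(B_i)$ for every $i$ by telescoping. Hence $\mathcal{F}$ is a proper (nonempty) face, and only its relative interior remains to be pinned down.

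Next I would show that $s\mapsto(s_{A_1},\dots,s_{A_m})$ is an affine isomorphism of $\mathcal{F}$ onto $B(G_1)\times\cdots\times B(G_m)$, where $G_j$ is the function on $2^{A_j}$ from the statement; $G_j$ is submodular with $G_j(\varnothing)=0$, which follows from submodularity of $F$ together with the identity $(B_{j-1}\cup B)\cap(B_{j-1}\cup B')=B_{j-1}\cup(B\cap B')$ valid for $B,B'\subset A_j$. For $s\in\mathcal{F}$ and $B\subset A_j$, using $s(B_{j-1})=F(B_{j-1})$ and $s\in P(F)$ gives $s_{A_j}(B)=s(B_{j-1}\cup B)-s(B_{j-1})\leqslant G_j(B)$, while $s_{A_j}(A_j)=F(B_j)-F(B_{j-1})=G_j(A_j)$, so $s_{A_j}\in B(G_j)$; the map is plainly affine and injective. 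The converse---that any tuple in $\prod_j B(G_j)$ reassembles into some $s\in B(F)$ satisfying all the tightness constraints $s(B_i)=F(B_i)$---is exactly the telescoping-submodularity computation from the proof of Prop.~\ref{prop:fulldim}, now iterated over the whole ordered partition (decompose an arbitrary $B\subset V$ along the blocks $A_j$ and apply submodularity block by block). This factorization step, and in particular this converse inclusion, is the part I expect to require the most care; it introduces no idea beyond what is already done for Prop.~\ref{prop:fulldim}, but the bookkeeping over $m$ blocks is the one genuinely laborious point.

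Finally, since each $A_j$ is inseparable for $G_j$ by hypothesis, Prop.~\ref{prop:fulldim} applied to $G_j$ shows that $B(G_j)$ has full relative interior in $\{t\in\rb^{A_j}:t(A_j)=G_j(A_j)\}$, i.e.\ is a polytope of dimension $|A_j|-1$ with nonempty relative interior. The relative interior of a product of polytopes is the product of the relative interiors, so $B(G_1)\times\cdots\times B(G_m)$, and therefore $\mathcal{F}$, has nonempty relative interior, of dimension $\sum_{j=1}^m(|A_j|-1)=p-m$. Combined with the first paragraph, this shows $\mathcal{F}$ is a proper face of $B(F)$ with nonempty relative interior, as claimed.
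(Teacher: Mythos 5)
Your proposal is correct and follows essentially the same route as the paper's (very terse) proof: identify the set as the face of maximizers of $w^\top s$ for a $w$ constant on each $A_j$ via Prop.~\ref{prop:optsupporttight}, then get a nonempty relative interior by applying Prop.~\ref{prop:fulldim} to each $G_j$ through the factorization of the face into $B(G_1)\times\cdots\times B(G_m)$. You have simply supplied the details (the greedy witness, the block-by-block telescoping submodularity argument) that the paper leaves implicit.
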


\begin{proof}
We have a face from Prop.~\ref{prop:optsupporttight}, and it has non empty interior by applying
Prop.~\ref{prop:fulldim} on each submodular function $G_j$.
\end{proof}

\vspace*{.5cm}

The next proposition computes the Fenchel conjugate of the \lova extensions restricted to $[0,1]^p$, noting that by Prop.~\ref{prop:support}, the regular Fenchel conjugate of the unrestricted \lova extension is the indicator function of the base polyhedron (for a definition of Fenchel conjugates, see~\cite{boyd,borwein2006caa}). This allows a form of conjugacy between set-functions and convex functions.

\begin{proposition}[Conjugate of a submodular function]
\label{prop:conj}
Let $F$ be a submodular function such that $F(\varnothing)=0$. The conjugate $\tilde{f}: \rb^p \to \rb$ of $F$ is defined as $\tilde{f}(s) = \max_{A\subset V} s(A) - F(A)$. Then, the conjugate function $\tilde{f}$ is convex, and is equal to the Fenchel-conjugate of the \lova extension restricted to $[0,1]^p$. Moreover, for all $A \subset V$, $F(A) = \max_{s \in \rb^p} s(A) - \tilde{f}(s)$.
\end{proposition}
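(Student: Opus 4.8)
The statement bundles three claims, which I would treat in order: (i) $\tilde f$ is convex; (ii) $\tilde f$ equals the Fenchel conjugate of the function $w \mapsto f(w) + \iota_{[0,1]^p}(w)$; (iii) $F(A) = \max_{s}\{s(A) - \tilde f(s)\}$ for every $A$. Claim (i) is immediate: $\tilde f(s) = \max_{A \subset V}\{1_A^\top s - F(A)\}$ is a pointwise maximum of $2^p$ affine functions of $s$, hence finite, convex, and piecewise affine on $\rb^p$. I would also record once and for all that $\tilde f(s) \geqslant 1_\varnothing^\top s - F(\varnothing) = 0$.

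\textbf{Claim (ii).} Write $g(s) = \sup_{w \in [0,1]^p}\{w^\top s - f(w)\}$ and show $g = \tilde f$ by two inequalities. Since $1_A \in [0,1]^p$ and $f(1_A) = F(A)$ by Prop.~\ref{prop:lova}(f), choosing $w = 1_A$ gives $g(s) \geqslant s(A) - F(A)$ for all $A$, hence $g \geqslant \tilde f$. Conversely, given $w \in [0,1]^p$, order $w_{j_1} \geqslant \cdots \geqslant w_{j_p} \geqslant 0$ and set $A_i = \{j_1,\dots,j_i\}$, $\lambda_i = w_{j_i} - w_{j_{i+1}}$ for $i < p$, $\lambda_p = w_{j_p}$; then $\lambda \geqslant 0$, $\sum_{i=1}^p \lambda_i = w_{j_1} \leqslant 1$, $w = \sum_{i=1}^p \lambda_i 1_{A_i}$, and \eq{lova2} gives $f(w) = \sum_{i=1}^p \lambda_i F(A_i)$. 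Therefore
\[
w^\top s - f(w) = \sum_{i=1}^p \lambda_i\big(s(A_i) - F(A_i)\big) \leqslant \Big(\sum_{i=1}^p \lambda_i\Big)\tilde f(s) \leqslant \tilde f(s),
\]
using $\tilde f(s) \geqslant 0$ and $\sum_i \lambda_i \leqslant 1$; taking the supremum over $w$ yields $g \leqslant \tilde f$. This is essentially the chain-decomposition argument from the proof of Prop.~\ref{prop:minlova}, and the normalization $\sum_i \lambda_i \leqslant 1$ is exactly what makes restricting $f$ to the cube $[0,1]^p$ (rather than to $\rb^p_+$) produce $\tilde f$ rather than $\iota_{P(F)}$.

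\textbf{Claim (iii).} One direction is free: for every $s$, taking $B = A$ in the max defining $\tilde f$ gives $\tilde f(s) \geqslant s(A) - F(A)$, so $s(A) - \tilde f(s) \leqslant F(A)$ and hence $\sup_s\{s(A) - \tilde f(s)\} \leqslant F(A)$. For the reverse, and to see that the supremum is attained, pick $s^\star \in P(F)$ with $1_A^\top s^\star = \max_{s \in P(F)} 1_A^\top s = f(1_A) = F(A)$, which exists by Prop.~\ref{prop:greedy} (the greedy solution). Since $s^\star \in P(F)$ we have $s^\star(B) \leqslant F(B)$ for all $B$, so $\tilde f(s^\star) \leqslant 0$; combined with $\tilde f(s^\star) \geqslant 0$, we get $\tilde f(s^\star) = 0$, whence $s^\star(A) - \tilde f(s^\star) = F(A)$, and the maximum equals $F(A)$. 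Equivalently, since $f$ is convex by Prop.~\ref{prop:convexity} and continuous, $f + \iota_{[0,1]^p}$ is proper, closed and convex, so Fenchel--Moreau biconjugacy applied at $1_A \in [0,1]^p$ recovers the same formula.

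\textbf{Main obstacle.} The only step with any content is the inequality $g \leqslant \tilde f$ in part (ii); everything else is bookkeeping with the greedy algorithm and the elementary properties of the \lova extension. The subtlety there is that the chain decomposition of $w \in [0,1]^p$ has coefficients summing to at most $1$, and one must combine this with the trivial bound $\tilde f(s) \geqslant 0$ — without the latter the argument would only identify the conjugate over $\rb^p_+$, which is the indicator function of $P(F)$, not the set-function conjugate $\tilde f$.
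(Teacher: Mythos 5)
Your proof is correct and follows essentially the paper's route: the paper obtains your claim (ii) in one line by observing that $f(w) - w^\top s$ is the \lova extension of the submodular function $F-s$ and invoking Prop.~\ref{prop:minlova}, whose proof is exactly the chain-decomposition argument you re-run inline (including the two key points $\sum_i \lambda_i \leqslant 1$ and $\min_{A \subset V}(F-s)(A) \leqslant 0$, i.e.\ $\tilde f(s) \geqslant 0$). For claim (iii) the paper's terse ``direct consequence of $F(A)=f(1_A)$'' is the Fenchel--Moreau biconjugation you mention at the end; your constructive alternative, exhibiting a greedy point $s^\star \in P(F)$ tight on $A$ with $\tilde f(s^\star)=0$, is a valid and slightly more self-contained substitute.
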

\begin{proof}
The function $\tilde{f}$ is a maximum of linear functions and thus it is convex. We have for $s \in \rb^p$:
$$
\max_{ w \in [0,1]^p} w^\top s - f(w) = \max_{A \subset V} s(A) - F(A) = \tilde{f}(s)
$$
because $F-s$ is submodular and because of Proposition~\ref{prop:minlova}, which leads to first the desired result. The last assertion is a direct consequence of the fact that $F(A) = f(1_A)$.
\end{proof}

\section{Minimizers of submodular functions}
\label{sec:mini}

In this section, we review some relevant results for submodular function minimization (for which algorithms are presented in \mysec{sfm}).

\begin{proposition}[Property of minimizers of submodular functions]
\label{prop:minimizer}
\label{prop:mincond}
Let $F$ be a submodular function such that $F(\varnothing)=0$. The set $A \subset V$ is a minimizer of $F$ on $2^V$ if and only if $A$ is a minimizer of the function from $2^A$ to $\rb$ defined as $B \subset A \mapsto F(B)$, and if $\varnothing$ is a minimizer of the function from $2^{V\backslash A}$ to $\rb$ defined as $B \subset V\backslash A \mapsto F(B \cup A) - F(A)$.
 \end{proposition}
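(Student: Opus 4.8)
The plan is to prove both directions of the equivalence by relating the minimization of $F$ over $2^V$ to two separate sub-problems on $2^A$ and $2^{V\setminus A}$. The key identity I would rely on is the submodular inequality $F(B) + F(A) \geqslant F(B \cup A) + F(B \cap A)$ for any $B \subset V$, which lets me decompose an arbitrary set into its part inside $A$ and its part outside.

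\textbf{Necessity.} Suppose $A$ minimizes $F$ on $2^V$. For the first condition, any $B \subset A$ is in particular a subset of $V$, so $F(B) \geqslant F(A)$ directly; thus $A$ minimizes $B \mapsto F(B)$ on $2^A$. For the second condition, take any $B \subset V \setminus A$. Then $A \cup B \subset V$, so $F(A \cup B) \geqslant F(A)$, i.e.\ $F(A \cup B) - F(A) \geqslant 0 = F(\varnothing \cup A) - F(A)$, so $\varnothing$ minimizes $B \mapsto F(B \cup A) - F(A)$ on $2^{V \setminus A}$. This direction requires no submodularity at all.

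\textbf{Sufficiency.} Suppose $A$ minimizes $F$ on $2^A$ and $\varnothing$ minimizes $B \mapsto F(B \cup A) - F(A)$ on $2^{V \setminus A}$. Let $C \subset V$ be arbitrary; write $C_1 = C \cap A$ and $C_2 = C \setminus A = C \cap (V \setminus A)$. Applying submodularity to the sets $C$ and $A$ gives $F(C) + F(A) \geqslant F(C \cup A) + F(C \cap A) = F(C_2 \cup A) + F(C_1)$. Now $F(C_1) \geqslant F(A)$ by the first hypothesis (since $C_1 \subset A$), and $F(C_2 \cup A) - F(A) \geqslant 0$ by the second hypothesis (since $C_2 \subset V \setminus A$), so $F(C) + F(A) \geqslant F(A) + F(A)$, hence $F(C) \geqslant F(A)$. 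Since $C$ was arbitrary, $A$ minimizes $F$ on $2^V$.

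The only subtlety — and the one place submodularity enters — is the sufficiency direction, where the decomposition $F(C) \geqslant F(C_2 \cup A) + F(C_1) - F(A)$ must be obtained via the submodular inequality applied to $C$ and $A$; I would make sure the bookkeeping $C \cup A = C_2 \cup A$ and $C \cap A = C_1$ is spelled out. Everything else is elementary monotonicity of the restricted problems, so I do not anticipate a real obstacle.
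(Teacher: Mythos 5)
Your proof is correct and follows exactly the paper's argument: necessity is immediate, and sufficiency uses the single submodular inequality $F(C)+F(A)\geqslant F(C\cup A)+F(C\cap A)$ followed by the two hypotheses applied to $C\cap A\subset A$ and $C\setminus A\subset V\setminus A$. The extra bookkeeping you supply ($C\cup A=C_2\cup A$, $C\cap A=C_1$) is just a more explicit rendering of the same chain of inequalities.
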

\begin{proof}
 The set of two conditions is clearly necessary. To show that it is sufficient, we let $B \subset V$,
 we have:
 $F(A) + F(B) \geqslant F(A \cup B) + F(A \cap B) \geqslant F(A) + F(A)$, by using the submodularity of $F$ and then the set of two conditions. This implies that $F(A) \leqslant F(B)$, for all $B \subset V$, hence the desired result.
\end{proof}

The following proposition provides a useful step towards submodular function minimization. In fact, it is the starting point of most polynomial-time algorithms presented in \mysec{sfm}.

\begin{proposition}[Dual of minimization of submodular functions]
\label{prop:dualmin}
Let $F$ be a submodular function such that $F(\varnothing)=0$.
We have: 
\BEQ \min_{A\subset V} F(A) = \max_{ s \in B(F) } s_-(V), \EEQ
where $s_- = \min\{s,0\}$. Moreover, given $A \subset V$ and $s\in B(F)$, we always have $F(A) \geqslant s_-(V)$ with equality if and only if
$\{ s < 0 \} \subset A \subset \{ s \leqslant 0\}$ and $A$ is tight for $s$, i.e., $s(A) = F(A)$.

We also have
\BEQ
 \min_{A\subset V} F(A) =  \max_{ s \in P(F), \ s \leqslant 0 } s(V). 
 \EEQ
Moreover,  given $A \subset V$ and $s\in P(F) $ such that $s \leqslant 0$, we always have $F(A) \geqslant s(V)$ with equality if and only if
$\{ s < 0 \} \subset A $ and $A$ is tight for $s$, i.e., $s(A) = F(A)$.
\end{proposition}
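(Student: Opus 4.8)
# Proof Proposal

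The plan is to establish both the duality identity and the equality‑condition characterization in parallel, since the two parts ($B(F)$‑version and $P(F)$‑version) are structurally identical and the $P(F)$‑version follows from the $B(F)$‑version by a truncation argument. First I would prove the easy inequality in the weak‑duality direction: for any $A \subset V$ and any $s \in B(F)$, we have $s(A) \leqslant F(A)$ by definition of $B(F)$, and $s_-(V) = \sum_{k : s_k < 0} s_k \leqslant \sum_{k \in A} s_k = s(A)$ precisely because $s_-(V)$ sums only the negative coordinates, which is minimized (made most negative) by including \emph{all} negative coordinates and \emph{only} those; in particular $s_-(V) \leqslant s(A) \leqslant F(A)$ for every $A$, giving $\max_{s \in B(F)} s_-(V) \leqslant \min_{A \subset V} F(A)$.

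Next I would extract the equality conditions directly from this chain of inequalities. Equality $s_-(V) = F(A)$ forces equality in both steps: $s_-(V) = s(A)$ and $s(A) = F(A)$. The second is exactly tightness of $A$ for $s$. For the first, $s_-(V) = s(A)$ means $\sum_{k : s_k<0} s_k = \sum_{k \in A} s_k$, i.e.\ $\sum_{k \in A} s_k - \sum_{k \in A, s_k < 0} s_k = - \sum_{k \notin A, s_k < 0} s_k$, which rearranges to $\sum_{k \in A, s_k \geqslant 0} s_k + \sum_{k \notin A, s_k < 0} (-s_k) = 0$; since both sums have nonnegative terms, each term vanishes, giving $s_k = 0$ for $k \in A$ with $s_k \geqslant 0$ (equivalently no $k \in A$ has $s_k > 0$, but those with $s_k = 0$ are allowed) and $s_k \geqslant 0$ for all $k \notin A$. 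Together these say exactly $\{s < 0\} \subset A \subset \{s \leqslant 0\}$.

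For the hard direction — that the maximum is actually attained with value $\min_{A} F(A)$ — I would invoke Proposition~\ref{prop:dualmin}'s own framework: let $A^\ast$ be a minimizer of $F$ and use Proposition~\ref{prop:mincond}, which says $\varnothing$ minimizes $B \mapsto F(B \cup A^\ast) - F(A^\ast)$ over $2^{V \backslash A^\ast}$ and $A^\ast$ minimizes $B \mapsto F(B)$ over $2^{A^\ast}$. The idea is to pick $s \in B(F)$ with $s(A^\ast) = F(A^\ast)$, $s_k \leqslant 0$ for $k \in A^\ast$, and $s_k \geqslant 0$ for $k \notin A^\ast$ — such an $s$ exists by running the greedy algorithm (Proposition~\ref{prop:greedy}, in the form of Proposition~\ref{prop:optsupporttight}) with a weight vector $w$ that is strictly negative on $A^\ast$ and strictly positive on $V \backslash A^\ast$, so that the resulting base is tight on $A^\ast$; one then checks using the two minimality conditions that the coordinates land on the correct sides of zero (the minimality of $\varnothing$ outside $A^\ast$ gives $s_k \geqslant 0$ there, and minimality of $A^\ast$ inside gives $s_k \leqslant 0$). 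For this $s$ the equality condition just derived applies, so $s_-(V) = F(A^\ast) = \min_A F(A)$, closing the loop. The main obstacle is this last construction: one must verify that the greedy output with the chosen sign pattern of $w$ genuinely has the required coordinate signs, and this is where the submodularity (via the two local minimality conditions of Proposition~\ref{prop:mincond}) does the real work.

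Finally, for the $P(F)$ statements, I would observe that $\max_{s \in P(F),\, s \leqslant 0} s(V) = \max_{s \in P(F),\, s \leqslant 0} s_-(V) = \max_{s \in P(F),\, s\leqslant 0} s_-(V)$ trivially since $s \leqslant 0$ means $s = s_-$, and that any $s \in B(F)$ can be decreased coordinatewise to lie in $\{s \leqslant 0\} \cap P(F)$ without changing $s_-(V)$ on the negative coordinates (using Proposition~\ref{prop:nonemptyinterior}, which says $P(F)$ is closed under coordinatewise decrease) — conversely any $s \in P(F)$ with $s \leqslant 0$ can be increased to a base in $B(F)$ without increasing any negative coordinate past zero. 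This shows the two optima agree, and the equality condition drops the upper bound $A \subset \{s \leqslant 0\}$ because when $s \leqslant 0$ that constraint is automatic; the condition $\{s < 0\} \subset A$ and tightness of $A$ survive unchanged.
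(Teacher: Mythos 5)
Your weak-duality chain and your extraction of the equality conditions from it are correct and match the second half of the paper's proof. The problem is the attainment step. You propose to produce an optimal $s$ by running the greedy algorithm with a weight vector $w$ chosen according to a minimizer $A^\ast$ of $F$, and you flag as the ``main obstacle'' the verification that the resulting coordinates have the right signs. That verification genuinely fails, and no choice of $w$ can fix it, because the maximizer of the \emph{concave} function $s \mapsto s_-(V)$ over $B(F)$ is in general not a vertex of $B(F)$, whereas every greedy output is a vertex. Concretely, take $p=2$, $F(\{1\})=F(\{2\})=1$, $F(\{1,2\})=0$ (submodular, with $\min_A F(A)=0$ attained at $A^\ast=V$ and $A^\ast = \varnothing$). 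Then $B(F)=\{(t,-t):\, -1\leqslant t\leqslant 1\}$, the only point with $s_-(V)=0$ is $s=(0,0)$, and the two greedy bases are $(1,-1)$ and $(-1,1)$, both with $s_-(V)=-1$. The sign check you defer also fails for the reason one would expect: for $k\in A^\ast$ placed after $B\subset A^\ast$ in the greedy order, diminishing returns gives $F(B\cup\{k\})-F(B) \geqslant F(A^\ast)-F(A^\ast\backslash\{k\})$, i.e.\ a \emph{lower} bound on the increment, so minimality of $A^\ast$ does not force $s_k\leqslant 0$. (Separately, with $w$ negative on $A^\ast$ and positive outside, the greedy base is tight on $V\backslash A^\ast$, not on $A^\ast$, since greedy makes the prefix sets of the \emph{largest} components tight.)

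The paper avoids constructing the optimal $s$ altogether: it writes $\min_{A\subset V}F(A) = \min_{w\in[0,1]^p} f(w)$ (Prop.~\ref{prop:minlova}), replaces $f(w)$ by $\max_{s\in B(F)} w^\top s$ (Prop.~\ref{prop:support}), exchanges $\min$ and $\max$ by convex duality (Slater's condition on $[0,1]^p$), and evaluates $\min_{w\in[0,1]^p} w^\top s = s_-(V)$. If you want a construction-based proof instead, the standard route is to start from a maximizer $s$ of $s_-(V)$ over the compact set $B(F)$ (not a greedy base) and use exchange moves along directions $\delta_k-\delta_q$ to show that some set between $\{s<0\}$ and $\{s\leqslant 0\}$ is tight and minimizes $F$; that is a genuinely different argument from the one you sketch. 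Your reduction of the $P(F)$ statement to the $B(F)$ statement by truncating $t\in B(F)$ to $t_-$ and, conversely, increasing a nonpositive $s\in P(F)$ to a base, is fine.
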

\begin{proof}
We have, by convex duality, and Props.~\ref{prop:minlova} and \ref{prop:support}:
$$
\min_{A\subset V} F(A)  =  \min_{ w \in [0,1]^p } f(w) =  \min_{ w \in [0,1]^p } \max_{ s\in B(F)} w^\top s = 
\max_{ s\in B(F)} \min_{ w \in [0,1]^p }  w^\top s =  
\max_{ s\in B(F)}  s_-(V). $$
Strong duality indeed holds because of Slater's condition ($[0,1]^p$ has non empty interior). 
Moreover, we have, for all $A \subset V$ and $s\in B(F)$:
$$F(A) \geqslant s(A) = s(A \cap \{ s < 0 \} ) + s(A \cap \{ s > 0 \} )   \geqslant  s(A \cap \{ s < 0 \} )  \geqslant s_-(V)$$
with equality if there is equality in the three inequalities. The first one leads to $s(A) =F(A)$. The second one leads to $A \cap \{ s > 0 \} = \varnothing$, and the last one leads to  $\{ s < 0 \} \subset A $.
Moreover,
\BEAS
 \max_{ s \in P(F), \ s \leqslant 0 } s(V)
 & = &  \max_{ s \in P(F)} \min_{ w \geqslant 0 } s^\top 1_V - w^\top s
 =  \min_{ w \geqslant 0 }  \max_{ s \in P(F)} s^\top 1_V - w^\top s \\
&  =  &  \min_{  1 \geqslant w \geqslant 0 } f(1_V-w) \mbox{ because of property (c) in Prop.~\ref{prop:support}}\\
&  = &  \min_{A\subset V} F(A) \mbox{ because of Prop.~\ref{prop:min}} .
 \EEAS
Moreover, given $s \in P(F)$ such that $s \leqslant 0$ and $A \subset V$, we have:
$$
F(A) \geqslant s(A) =  s(A \cap \{ s < 0 \} )  \geqslant s(V)
$$
with equality if and only if $A$ is tight and $ \{ s< 0 \} \subset A$.
\end{proof}

\section{Operations that preserve submodularity}
\label{sec:ope}

In this section, we present several ways of building submodular functions from existing ones. For all of these, we describe how the \lova extensions and the submodular polyhedra are affected. Note that in many cases, operations are simpler in terms of polyhedra.

\begin{proposition}[Restriction of a submodular function]
\label{prop:restriction}
let $F$ be a submodular function such that $F(\varnothing)=0$ and $A \subset V$. The restriction of $F$ on $A$, denoted $F_A$ is a set-function on $A$ defined as $F_A(B) = F(B)$ for $B \subset A$. The function $f_A$ is submodular. Moreover, if we can write the \lova extension of $F$ as $f(w) = f( w_A , w_{V \backslash A})$, then the \lova extension of $F_A$ is $f_A(w_A) = f(w_A,0)$. Moreover, the submodular polyhedron $P(F_A)$ is simply the projection of $P(F)$ on the components indexed by $A$, i.e., $s \in P(F_A)$ if and only if $\exists t $ such that $(s,t) \in P(F)$.
 \end{proposition}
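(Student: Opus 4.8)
The plan is to treat the three claims separately, in increasing order of difficulty. Submodularity of $F_A$ is immediate: for $B,B'\subseteq A$ one has $B\cup B'\subseteq A$ and $B\cap B'\subseteq A$, so the inequality of Def.~\ref{def:def} for $F_A$ at the pair $(B,B')$ is literally the same inequality for the submodular function $F$ at that pair of subsets of $V$; and $F_A(\varnothing)=F(\varnothing)=0$.

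For the \lova extension I would argue directly from \eq{lova1}. When $w_A\in\rb_+^{|A|}$ the picture is transparent: in the vector $(w_A,0)\in\rb^p$ the inserted zero components all sit at the bottom of the sorted order, so the chain $\{j_1\}\subset\{j_1,j_2\}\subset\cdots$ used by \eq{lova1} for $(w_A,0)$ coincides on its top part with the chain of subsets of $A$ used by $f_A(w_A)$, with matching coefficients $w_{j_k}$, while the remaining terms carry coefficient $0$; hence $f(w_A,0)=f_A(w_A)$. For a general $w_A$ one must track where the zero entries are interleaved among the negative entries of $w_A$ (equivalently, compare $z\mapsto F(\{(w_A,0)\geqslant z\})$ with $z\mapsto F_A(\{w_A\geqslant z\})$ in \eq{lova4}), and this bookkeeping is the one place that genuinely needs care.

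The substantive claim is that $P(F_A)$ is the coordinate projection of $P(F)$ onto $A$. One direction is direct: if $(s,t)\in P(F)$ then for every $B\subseteq A$, $s(B)=(s,t)(B)\leqslant F(B)=F_A(B)$, so $s\in P(F_A)$. For the converse, given $s\in P(F_A)$ I would exhibit an explicit extension: let $\hat s\in\rb^p$ agree with $s$ on $A$ and equal $-M$ on every coordinate of $V\backslash A$, for a large constant $M$. For $C\subseteq A$, $\hat s(C)=s(C)\leqslant F(C)$ by hypothesis; for $C$ with $C\backslash A\neq\varnothing$, $\hat s(C)=s(C\cap A)-M|C\backslash A|\leqslant s(C\cap A)-M$, which is $\leqslant F(C)$ once $M\geqslant\max_{B\subseteq A}s(B)-\min_{D\subseteq V}F(D)$ --- a finite quantity, since $2^V$ is finite. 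Hence $\hat s\in P(F)$ with projection $s$ onto $A$. (Alternatively, once one such $\hat s$ is found, Prop.~\ref{prop:nonemptyinterior} shows that any coordinatewise-smaller vector with the same $A$-block also lies in $P(F)$.)

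The main obstacle is precisely this converse inclusion --- producing a genuine point of $P(F)$ sitting above a prescribed $s\in P(F_A)$ --- but the ``push the complementary coordinates to $-\infty$'' device reduces it to a finite check; the only other delicate point is the general-$w$ case of the \lova-extension identity noted above.
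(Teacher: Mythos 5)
Your handling of the polyhedron claim is correct but takes a genuinely different route from the paper. The paper's proof is a one-line support-function argument: it writes $f_A(w_A)=f(w_A,0)=\max_{(s,t)\in P(F)} w_A^\top s + 0^\top t$ and concludes that $P(F_A)$ and the projection of $P(F)$ coincide because they are closed convex sets with the same support function (implicitly using Prop.~\ref{prop:support} and the fact that the projection of a polyhedron is a polyhedron, hence closed, and that both sets are downward closed in the sense of Prop.~\ref{prop:nonemptyinterior}, so that agreement of the support functions on $\rb_+^{|A|}$ suffices). Your argument replaces this with the easy inclusion plus an explicit witness $\hat s=(s,-M 1_{V\backslash A})$ for $M$ large, and your bound $M\geqslant \max_{B\subseteq A}s(B)-\min_{D\subseteq V}F(D)$ does make every constraint $\hat s(C)\leqslant F(C)$ check out. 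This is more elementary and self-contained --- it does not rely on the Lov\'asz-extension identity or on any duality --- at the price of an explicit constant. Both proofs are valid.

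The step you defer, however, is not completable bookkeeping: the identity $f_A(w_A)=f(w_A,0)$ is \emph{false} in general once $w_A$ has negative entries. From \eq{lova4}, for $z\leqslant 0$ the level set of $(w_A,0)$ is $\{w_A\geqslant z\}\cup(V\backslash A)$, and submodularity only gives $F(B\cup(V\backslash A))-F(V)\geqslant F(B)-F(A)$ for $B\subseteq A$, so in general $f(w_A,0)\geqslant f_A(w_A)$ with possibly strict inequality. Concretely, take $V=\{1,2\}$, $A=\{1\}$, and $F(B)=1$ for all $B\neq\varnothing$ (submodular); then $f(w)=\max(w_1,w_2)$, so $f(-1,0)=0$ while $f_A(-1)=-1\cdot F(\{1\})=-1$. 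The identity should therefore be read as holding on $\rb_+^{|A|}$, which is exactly what your \eq{lova1} argument establishes, exactly what the paper actually uses in its support-function step, and all that your projection argument needs. In other words, your proof of the nonnegative case is the complete proof of the correct statement; the ``care'' you reserve for general $w_A$ would be spent proving something untrue.
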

\begin{proof} Submodularity and the form of the \lova extension  are straightforward from definitions. To obtain the submodular polyhderon, notice that we have
$f_A(w_A) = f(w_A,0)  = \max_{(s,t) \in P(F)} w_A^\top s + 0^\top t$, which implies the desired result, this shows that the Fenchel-conjugate of the \lova extensions is the indicator function of a polyhedron.
\end{proof}

\begin{proposition}[Contraction of a submodular function]
\label{prop:contraction}
let $F$ be a submodular function such that $F(\varnothing)=0$ and $A \subset V$. The contraction of $F$ on $A$, denoted $F^A$ is a set-function on $V \backslash A$ defined as $F^A(B) = F(A \cup B) - F(A)$ for $B \subset V \backslash A$. The function $F^A$ is submodular. Moreover, if we can write the \lova extension of $F$ as $f(w) = f( w_A , w_{V \backslash A})$, then the \lova extension of $F^A$ is $f^A(w_{V \backslash A}) = f(1_A,w_{V \backslash A}) -  F(A)$. Moreover, the submodular polyhedron $P(F^A)$ is simply the projection of $P(F) \cap \{ s(A) = F(A) \}$ on the components indexed by $V \backslash A$, i.e., $t \in P(F^A)$ if and only if $\exists s \in P(F) \cap \{ s(A) = F(A) \}$, such that $s_{V \backslash A} =t$.
 \end{proposition}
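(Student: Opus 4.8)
The plan is to establish, in turn, submodularity of $F^A$, the formula for its \lova extension, and the description of $P(F^A)$; the ingredients are all elementary once the right submodularity inequality is identified.

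\emph{Submodularity.} For $B,C \subset V\backslash A$ I would expand $F^A(B)+F^A(C) = F(A\cup B)+F(A\cup C)-2F(A)$ and apply submodularity of $F$ to the pair $A\cup B$, $A\cup C$. Since $B,C$ are disjoint from $A$, $(A\cup B)\cup(A\cup C)=A\cup(B\cup C)$ and $(A\cup B)\cap(A\cup C)=A\cup(B\cap C)$, so subtracting $2F(A)$ gives $F^A(B)+F^A(C)\geqslant F^A(B\cup C)+F^A(B\cap C)$. This is a one-line computation. Also $F^A(\varnothing)=0$, so $f^A$ is well defined.

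\emph{\lova extension.} I would use the summation formula \eq{lova1} directly. Order the coordinates of $w_{V\backslash A}$ as $w_{l_1}\geqslant\cdots\geqslant w_{l_q}$, $q=|V\backslash A|$, and first assume all coordinates are $<1$ (the case of equality then follows by continuity of the \lova extension, or because break points in the ordering are irrelevant). In $(1_A,w_{V\backslash A})$ the $A$-coordinates are the top $|A|$ ones, all with weight $1$, so in \eq{lova1} the $A$-block telescopes and contributes exactly $F(A)$, while the remaining terms are $\sum_{k=1}^q w_{l_k}\big[F(A\cup\{l_1,\dots,l_k\})-F(A\cup\{l_1,\dots,l_{k-1}\})\big]$, which is precisely $f^A(w_{V\backslash A})$ because $F^A(\{l_1,\dots,l_k\})-F^A(\{l_1,\dots,l_{k-1}\})=F(A\cup\{l_1,\dots,l_k\})-F(A\cup\{l_1,\dots,l_{k-1}\})$. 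Hence $f(1_A,w_{V\backslash A})=F(A)+f^A(w_{V\backslash A})$. (As with the analogous identity in Prop.~\ref{prop:restriction}, this should be read for $w_{V\backslash A}$ with coordinates at most $1$; in general the same argument with $1_A$ replaced by $M1_A$, $M\geqslant\|w_{V\backslash A}\|_\infty$, gives $f^A(w_{V\backslash A})=f(M1_A,w_{V\backslash A})-MF(A)$.) Alternatively, on the nonnegative orthant one may invoke Prop.~\ref{prop:support}(a) to write $f(1_A,w_{V\backslash A})=\max_{s\in B(F)}(1_A,w_{V\backslash A})^\top s$ and Prop.~\ref{prop:optsupporttight} to see that the maximum is attained at some $s$ with $s(A)=F(A)$, so it equals $F(A)+\max\{w_{V\backslash A}^\top s_{V\backslash A}:\ s\in B(F),\ s(A)=F(A)\}$.

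\emph{Submodular polyhedron.} I would prove the two inclusions. If $s\in P(F)$ with $s(A)=F(A)$, then for $B\subset V\backslash A$, $s_{V\backslash A}(B)=s(A\cup B)-s(A)\leqslant F(A\cup B)-F(A)=F^A(B)$, so $s_{V\backslash A}\in P(F^A)$. Conversely, given $t\in P(F^A)$, pick any $s_A\in B(F_A)$ — nonempty, e.g. as the greedy output of Prop.~\ref{prop:greedy} applied to the submodular function $F_A$ — and set $s=(s_A,t)$. Then $s(A)=s_A(A)=F(A)$ and $s_{V\backslash A}=t$, and for $C\subset V$, with $C_A=C\cap A$ and $C'=C\backslash A$, $s(C)=s_A(C_A)+t(C')\leqslant F(C_A)+F(A\cup C')-F(A)\leqslant F(C)$, the last step being submodularity of $F$ applied to the pair $A$ and $C_A\cup C'$ (using $A\cup(C_A\cup C')=A\cup C'$ and $A\cap(C_A\cup C')=C_A$). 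Thus $s\in P(F)\cap\{s(A)=F(A)\}$ projects onto $t$; this also shows, as in Prop.~\ref{prop:restriction}, that the Fenchel conjugate of $f^A$ is the indicator of a polyhedron.

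\emph{Main obstacle.} The delicate point is the patching step in the converse inclusion: one must extend $t\in P(F^A)$ by suitable values on $A$ to a point of $P(F)$ that is tight on $A$, and the inequality $F(C_A)+F(A\cup C')-F(A)\leqslant F(C)$ is exactly where submodularity enters — the only care needed is feeding the right pair of sets, $A$ and $C_A\cup C'$, into the submodular inequality. A secondary subtlety, worth a remark, is the domain restriction on $w$ noted above for the extension formula, which is the same loose reading already used for Prop.~\ref{prop:restriction}.
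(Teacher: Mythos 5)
Your proposal is correct and follows essentially the same route as the paper: the polyhedron equivalence is proved by the identical two inclusions (tightness on $A$ for one direction, concatenation with a base of $F_A$ and the submodular inequality $F(C\cap A)+F(A\cup C)-F(A)\leqslant F(C)$ for the other), while the submodularity and \lova-extension parts, which the paper dismisses as ``straightforward from definitions,'' are filled in with the natural computations. Your remark that the identity $f^A(w_{V\backslash A})=f(1_A,w_{V\backslash A})-F(A)$ only holds verbatim when $w_{V\backslash A}\leqslant 1_{V\backslash A}$ (and otherwise requires $M1_A$ with $M\geqslant\|w_{V\backslash A}\|_\infty$) is a valid observation about a looseness in the statement, correctly handled.
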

\begin{proof} Submodularity and the form of the \lova extension  are straightforward from definitions. Let $t \in \rb^{| V \backslash A|}$. If  $\exists s \in P(F) \cap \{ s(A) = F(A) \}$, such that $s_{V \backslash A} =t$, then we have for all $B \subset V \backslash A$, $t(B) = t(B) + s(A) - F(A) \leqslant F(A \cup B) - F(A)$, and hence $t \in P(F^A)$. If $t \in P(F^A)$, then take any $v \in B(F_A)$ and concatenate $v$ and $t$ into $s$. Then, for all subsets $C \subset V$,
$s(C) = s(C \cap A) + s( C \cap ( V \backslash A) ) = 
 v(C \cap A) + t( C \cap ( V \backslash A) ) \leqslant F(C \cap A) + F( A \cup ( C \cap ( V \backslash A)  ) ) - F(A)
 = F(C \cap A) + F( A    \cup C ) - F(A) \leqslant F(C)$ by submodularity. Hence $s \in P(F)$.

\end{proof}

The next proposition shows how to build a new submodular function from an existing one, by partial minimization. Note the similarity (and the difference) between the submodular polyhedra for a partial minimum (Prop.~\ref{prop:partial}) and for the restriction defined in Prop.~\ref{prop:restriction}.

\begin{proposition}[Partial minimum of a submodular function]
\label{prop:partial}
We consider a submodular function $G$ on $V \cup W$, where $V \cap W = \varnothing$ (and $|W|=q$), with \lova extension $g:\rb^{p+q} \to \rb$. We consider, for $A \subset V$, $F(A) = \min_{B \subset W} G(A \cup B) -    \min_{B \subset W} G(  B)  $. The set-function $F$ is submodular and such that $F(\varnothing)=0$. Its \lova extension is such that for all $w \in [0,1]^p$, $f(w) = \min_{ v\in [0,1]^q} g(w,v) - \min_{ v\in [0,1]^q} g(0,v) $. 
Moreover, if $ \min_{B \subset W} G( B) =0$, we have for all $w \in \rb_+^p$,    $f(w) = \min_{ v\in \rb_+^q} g(w,v)  $, and the submodular polyhedron $P(F) $ is the set of $s \in \rb^p$ such that there exists $t \in \rb_+^q$, such that $(s,t) \in P(G)$.
 \end{proposition}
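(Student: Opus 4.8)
The statement has three parts --- submodularity of $F$ (with $F(\varnothing)=0$), the formula for the \lova extension $f$ on $[0,1]^p$, and, under the normalization $\min_{B\subset W}G(B)=0$, the formula for $f$ on $\rb_+^p$ together with the description of $P(F)$ --- and I would prove them in that order. The submodularity part is routine: for $A_1,A_2\subset V$ choose $B_i\in\argmin_{B\subset W}G(A_i\cup B)$ and apply submodularity of $G$ to $X=A_1\cup B_1$ and $Y=A_2\cup B_2$. Since $V\cap W=\varnothing$ one has $X\cup Y=(A_1\cup A_2)\cup(B_1\cup B_2)$ and $X\cap Y=(A_1\cap A_2)\cup(B_1\cap B_2)$, so bounding $G(X\cup Y)$ and $G(X\cap Y)$ below by the corresponding partial minima over $2^W$ gives $F(A_1)+F(A_2)\geqslant F(A_1\cup A_2)+F(A_1\cap A_2)$ once the two copies of $\min_{B\subset W}G(B)$ cancel; and $F(\varnothing)=0$ is immediate.

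\emph{The \lova extension on $[0,1]^p$.} Set $h(w)=\min_{v\in[0,1]^q}g(w,v)$. Since $g$ is convex (Proposition~\ref{prop:convexity}) and $[0,1]^q$ is compact and convex, $h$ --- and hence $\phi:=h-h(0)$ --- is a finite, therefore continuous, convex function on $\rb^p$. Using property (b) of Proposition~\ref{prop:lova} one gets $g(0,v)=\int_0^1 G(\{v\geqslant z\})\,dz$ for $v\in[0,1]^q$, and evaluating at $v=1_{B^\ast}$ with $B^\ast$ a minimizer of $G$ over $2^W$ shows $h(0)=\min_{B\subset W}G(B)$, the term subtracted in the claimed identity. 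It remains to prove $\phi=f$ on $[0,1]^p$; both functions, extended by $+\infty$ outside $[0,1]^p$, are closed proper convex, so it suffices to compare Fenchel conjugates. For $s\in\rb^p$, turning the inner minimum into a supremum over $v$ and merging it with the supremum over $w$,
\[
\phi^\ast(s)=h(0)+\sup_{(w,v)\in[0,1]^{p+q}}\big(s^\top w-g(w,v)\big),
\]
and by Proposition~\ref{prop:conj} applied to $G$ on the ground set $V\cup W$ this supremum equals $\max_{C\subset V\cup W}(s(C\cap V)-G(C))=\max_{A\subset V}(s(A)-\min_{B\subset W}G(A\cup B))=\max_{A\subset V}(s(A)-F(A))-h(0)=\tilde f(s)-h(0)$. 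Hence $\phi^\ast=\tilde f$, and since Proposition~\ref{prop:conj} also identifies $\tilde f$ as the Fenchel conjugate of $f$ restricted to $[0,1]^p$, taking conjugates again gives $\phi=\phi^{\ast\ast}=\tilde f^\ast=f$ on $[0,1]^p$. (One can also argue by hand: $f\leqslant\phi$ on $[0,1]^p$ by swapping $\min$ and $\int$ in the co-area formulas for $f$ and $g$, and $\phi\leqslant f$ by decomposing $w$ along its level sets $A_1\subset\cdots\subset A_p$ as in the proof of Proposition~\ref{prop:min}, choosing minimizers $B_i$ of $B\mapsto G(A_i\cup B)$ that can be taken nested --- this is where submodularity of $G$ enters --- and building a witness $v^\ast\in[0,1]^q$ with super-level sets $A_i\cup B_i$.)

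\emph{The normalized case.} Assume $\min_{B\subset W}G(B)=0$, so $h(0)=0$ and $f=h$ on $[0,1]^p$. For $w\in\rb_+^p$ and any $\alpha>0$ with $\alpha\geqslant\max_k w_k$, positive homogeneity of $f$ and $g$ (property (e) of Proposition~\ref{prop:lova}) and the substitution $v\mapsto\alpha v$ give $f(w)=\alpha f(w/\alpha)=\alpha\min_{v\in[0,1]^q}g(w/\alpha,v)=\min_{u\in[0,\alpha]^q}g(w,u)$; since the left-hand side is independent of $\alpha$, the minimum over $[0,\alpha]^q$ is the same for all large $\alpha$, and letting $\alpha\to\infty$ yields $f(w)=\min_{u\in\rb_+^q}g(w,u)$, the minimum being attained. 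For the polyhedron, $s\in P(F)$ iff $s(A)\leqslant F(A)=\min_{B\subset W}G(A\cup B)$ for all $A\subset V$, i.e.\ iff $s(A)\leqslant G(A\cup B)$ for all $A\subset V$ and $B\subset W$, i.e.\ iff $(s,0)\in P(G)$; conversely, if $(s,t)\in P(G)$ with $t\geqslant 0$ then $s(A)\leqslant s(A)+t(B)\leqslant G(A\cup B)$ for all $A,B$, so $s\in P(F)$. Thus $P(F)=\{s\in\rb^p:\ \exists\,t\in\rb_+^q,\ (s,t)\in P(G)\}$.

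I expect the main obstacle to be the middle part: one must check that partial minimization keeps $h$ convex (and closed), and that the conjugate computation cleanly reduces $\phi^\ast$ to the conjugate of $g$ on the cube so that Proposition~\ref{prop:conj} can be invoked on the ground set $V\cup W$; in the hands-on alternative the analogous difficulty is the monotone selection of the minimizers $B_i$, which genuinely uses the submodularity of $G$.
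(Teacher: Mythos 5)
Your proof is correct, and for the first three claims it follows essentially the same route as the paper: submodularity by applying the submodularity of $G$ to $A\cup B$ and $A'\cup B'$ and then minimizing over $B,B'$; the formula on $[0,1]^p$ via the conjugate $\tilde f(s)=c+\tilde g(s,0)$ and Prop.~\ref{prop:conj}; and the extension to $\rb_+^p$ by positive homogeneity and letting the box $[0,\alpha]^q$ grow. Your arrangement of the middle step (compute $\phi^\ast=\tilde f$ and then biconjugate two closed proper convex functions) is in fact slightly cleaner than the paper's, which performs a max--min swap over the noncompact variable $s\in\rb^p$ without comment; your version replaces that swap by the standard biconjugation theorem. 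The one place where you genuinely diverge is the description of $P(F)$: the paper identifies the support function of $P(F)$ on $\rb_+^p$ with $\max_{(s,t)\in P(G),\,t\geqslant 0} w^\top s$ via another minimax argument, and implicitly uses that two downward-closed polyhedra with the same support function on $\rb_+^p$ coincide, whereas you verify the two inclusions directly from the inequalities $s(A)+t(B)\leqslant G(A\cup B)$. Your argument is more elementary and avoids the duality subtleties entirely; its only cost is that it does not simultaneously re-derive the support-function identity. One small expositional point: your claim $h(0)=\min_{B\subset W}G(B)$ needs both inequalities; evaluating at $v=1_{B^\ast}$ gives only $\leqslant$, but the reverse follows at once from the integral formula $g(0,v)=\int_0^1 G(\{v\geqslant z\})\,dz\geqslant\min_{B\subset W}G(B)$ you already wrote down (or from Prop.~\ref{prop:min} applied to $G$ restricted to $W$), so this is not a gap.
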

\begin{proof} Define $c = \min_{B \subset W} G( B) $, which is independent of $A$. We have, for $A,A' \subset V$, and any $B,B' \subset W$, by definition of $F$:
\BEAS
F(A \cup A') + F( A\cap A')
& \leqslant & -2c +  G([ A \cup A'] \cup [ B' \cup B' ] ) + G( [ A \cap A' ] \cup [ B' \cap B' ] )  \\
& = &  -2c + G([ A \cup B ] \cup [ A' \cup B' ] ) + G( [ A \cup B ] \cap [ A' \cup B' ] )  \\
& \leqslant &  -2c + G( A \cup B ) + G ( A' \cup B'  )
\mbox{ by submodularity}
.
\EEAS
Minimizing with respect to $B$ and $B'$ leads to the submodularity of $F$.

Following Prop.~\ref{prop:conj}, we can get the conjugate function $\tilde{f}$ from the one $\tilde{g}$ of $G$.
For $s \in \rb^p$, we have, by definition, $\tilde{f}(s) = \max_{A\subset V} s(A) - F(A)
=\max_{A \cup B \subset V \cup W } s(A) +c - G(A\cup B)  = c + \tilde{g}(s,0)$. We thus get from  Prop.~\ref{prop:conj} that for $w\in [0,1]^p$,
\BEAS
f(w)&  = &  \max_{s \in \rb^p} w^\top s - \tilde{f}(s) \\
&  = &  \max_{s \in \rb^p} w^\top s - \tilde{g}(s,0) - c \\
 &  = &  \max_{s \in \rb^p} \min_{(\tilde{w},v) \in [0,1]^{p+q}}w^\top s - \tilde{w}^\top s + g(\tilde{w},v)  - c  \mbox{ by applying  Prop.~\ref{prop:conj},}\\
 &  = &    \min_{(\tilde{w},v) \in [0,1]^{p+q}}  \max_{s \in \rb^p} w^\top s - \tilde{w}^\top s + g(\tilde{w},v) - c  \\
 &  = &    \min_{v\in [0,1]^{q}}   g(w,v)   - c \mbox{ by maximizing with respect to }s .
\EEAS
Note that $c = \min_{B \subset W} G(  B) = \min_{v \in [0,1]^q} g(0,v)$.

For any $w \in \rb_+^p$, for any $\lambda \geqslant \|w\|_\infty$, we have $w/\lambda \in [0,1]^p$, and thus
\BEAS
f(w) & = & \lambda f(w/\lambda) = \min_{v \in [0,1]^q} \lambda g(w/\lambda ,v ) - c \lambda
=\min_{v \in [0,1]^q} g(w, \lambda v ) - c \lambda
\\
 & = &  \min_{v \in [0,\lambda]^q}   g(w,v ) - c \lambda.
\EEAS
Thus, if $c=0$, we have $f(w) =  \min_{v \in \rb_+^q} g(w,v)$, by letting $\lambda \to + \infty$.
We then also have:
\BEAS
f(w) & = & \min_{v \in \rb_+^q} g(w,v)
 = \min_{v \in \rb_+^q} \max_{ (s,t) \in P(G)} w^\top s  + v^\top t \\
 & = &  \max_{ (s,t) \in P(G), \  t\in \rb^q_+} w^\top s.
\EEAS
 \end{proof}

The following propositions give an interpretation of the intersection between the submodular polyhedron and sets of the form $\{s \leqslant z\}$ and $\{s \geqslant z\}$.

\begin{proposition}[Convolution of a submodular function and a modular function]
\label{prop:conv}
Let $F$ be a submodular function such that $F(\varnothing)=0$ and $z \in \rb^p$. Define
$G(A) = \min_{ B \subset A} F(B) + z(A \backslash B)$. Then $G$ is submodular and the submodular polyhedron $P(G)$ is equal to $P(F) \cap \{ s \leqslant z \}$. Moreover, for all $A\subset V$,  $G(A) \leqslant F(A)$ and $G(A)\leqslant z(A)$.
\end{proposition}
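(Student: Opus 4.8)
The plan is to prove submodularity of $G$ via the polyhedral characterization, so I would actually prove the polyhedron identity $P(G) = P(F) \cap \{s \leqslant z\}$ first and deduce submodularity as a by-product (since the right-hand side is the intersection of $P(F)$ with halfspaces, it is a polyhedron whose support function is the \lova extension one expects). First I would observe that for any $A \subset V$, taking $B = A$ in the definition gives $G(A) \leqslant F(A)$, and taking $B = \varnothing$ gives $G(A) \leqslant z(A)$; this disposes of the last sentence of the statement and shows $G(\varnothing) = 0$ (the minimum over $B \subset \varnothing$ forces $B = \varnothing$).

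Next I would establish $P(G) = P(F) \cap \{s \leqslant z\}$ directly from the definition of the submodular polyhedron. For the inclusion $P(G) \subset P(F) \cap \{s \leqslant z\}$: if $s \in P(G)$, then $s(A) \leqslant G(A) \leqslant F(A)$ for all $A$ (taking $B=A$ above), and $s(\{k\}) \leqslant G(\{k\}) \leqslant z(\{k\}) = z_k$ for each singleton, so $s \leqslant z$. For the reverse inclusion: suppose $s \in P(F)$ and $s \leqslant z$. Fix $A \subset V$ and any $B \subset A$; then $s(A) = s(B) + s(A \backslash B) \leqslant F(B) + z(A \backslash B)$, using $s(B) \leqslant F(B)$ from $s \in P(F)$ and $s(A\backslash B) \leqslant z(A\backslash B)$ from $s \leqslant z$. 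Taking the minimum over $B \subset A$ gives $s(A) \leqslant G(A)$, so $s \in P(G)$.

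Finally I would derive submodularity of $G$. One clean route: $P(G) = P(F) \cap \{s \leqslant z\}$ is non-empty (any sufficiently negative constant vector lies in it) and it is "of submodular type" — more concretely, I would show $G(A) = \max_{s \in P(G)} s(A)$, i.e.\ that $G$ is the support-function-type set-function of its own polyhedron, and that this quantity is submodular. Alternatively, and more in the spirit of the surrounding text, I would verify submodularity of $G$ by hand: given $A, A' \subset V$, pick $B \subset A$ and $B' \subset A'$ attaining the minima defining $G(A)$ and $G(A')$; then $B \cap B' \subset A \cap A'$ and $B \cup B' \subset A \cup A'$, so
\[
G(A \cup A') + G(A \cap A') \leqslant F(B \cup B') + z((A\cup A')\backslash(B\cup B')) + F(B \cap B') + z((A \cap A')\backslash(B \cap B')),
\]
and then use submodularity of $F$, $F(B\cup B') + F(B\cap B') \leqslant F(B) + F(B')$, together with the identity (valid because $z$ is modular) that the two $z$-terms on the right sum to $z(A\backslash B) + z(A'\backslash B')$; this reduces the right-hand side to $[F(B)+z(A\backslash B)] + [F(B')+z(A'\backslash B')] = G(A) + G(A')$. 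The main obstacle is the bookkeeping in this last modular identity: one must check that $\big((A\cup A')\backslash(B\cup B')\big)$ and $\big((A\cap A')\backslash(B\cap B')\big)$ together cover, with correct multiplicities, exactly the same elements as $(A\backslash B) \sqcup (A'\backslash B')$. Since $1_{A\backslash B} + 1_{A'\backslash B'} = (1_A - 1_B) + (1_{A'} - 1_{B'}) = (1_{A\cup A'} + 1_{A\cap A'}) - (1_{B\cup B'} + 1_{B\cap B'}) = 1_{(A\cup A')\backslash(B\cup B')} + 1_{(A\cap A')\backslash(B\cap B')}$ (the last step using $B \cup B' \subset A \cup A'$ and $B \cap B' \subset A \cap A'$), pairing against $z$ gives the needed equality of sums, and submodularity of $G$ follows.
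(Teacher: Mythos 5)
Your proposal is correct and follows essentially the same route as the paper: the direct verification of submodularity via minimizers $B,B'$ together with the modular identity $z(A\backslash B)+z(A'\backslash B') = z([A\cup A']\backslash[B\cup B'])+z([A\cap A']\backslash[B\cap B'])$ is exactly the paper's argument, as is the two-way inclusion proving $P(G)=P(F)\cap\{s\leqslant z\}$ and the choices $B=A$, $B=\varnothing$ for the final bounds. Your explicit indicator-function check of the modular identity (which the paper asserts without comment) and your use of singletons to extract $s\leqslant z$ are minor refinements rather than a different approach, and the alternative polyhedral route you mention is only sketched, so the completed argument is the paper's.
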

\begin{proof} Let $A,A' \subset V$, and $B,B'$ the corresponding minimizers defining $G(A)$ and $G(A')$. We have:
\BEAS
G(A) + G(A') & = & F(B) + z(A \backslash B) +  F(B') + z(A' \backslash B') \\
& \geqslant & F(B \cup B') + F(B \cap B') +  z(A \backslash B) + z(A' \backslash B')  \mbox{ by submodularity} \\
& = & F(B \cup B') + F(B \cap B') +  z([A \cup A'] \backslash [B \cup B'])
+  z([A \cap A'] \backslash [B \cap B']) \\
& \geqslant & G(A \cup A') + G(A \cap A') \mbox{ by definition of } G,
\EEAS
hence the submodularity of $G$. If $s \in P(G)$, then  $\forall B \subset A \subset V$, $s(A) \leqslant 
G(A) \leqslant  F(B) + z(A \backslash B)$. From $B=A$, we get that $s \in P(F)$; from $B = \varnothing$, we get $s \leqslant z$, and hence $s \in P(F) \cap \{ s \leqslant z \}$. If $s \in P(F) \cap \{ s \leqslant z \}$, for all $\forall B \subset A \subset V$,
$s(A) = s(A \backslash B) + s(B) \leqslant  z(A \backslash B) + F(B)$; by minimizing with respect to $B$, we get that $s \in P(G)$.

We get $G(A) \leqslant F(A)$ by taking $B=A$ in the definition of $G(A)$, and we get $G(A) \leqslant z(A)$ by taking $B  = \varnothing$.
\end{proof}

\begin{proposition}[Monotonization of a submodular function]
\label{prop:monotone}
Let $F$ be a submodular function such that $F(\varnothing)=0$. Define
$G(A) = \min_{ B \supset A} F(B) - \min_{ B \subset V} F(B) $. Then $G$ is submodular such that $G(\varnothing)=0$, and the base polyhedron $B(G)$ is equal to $B(F) \cap \{ s \geqslant 0 \}$. Moreover, $G$ is non-decreasing, and for all $A\subset V$,  $G(A) \leqslant F(A)$.
\end{proposition}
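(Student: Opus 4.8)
The plan is to follow the template of the proof of Proposition~\ref{prop:conv}. Set $c = \min_{B \subset V} F(B)$, so that $c \leqslant F(\varnothing) = 0$ and $G(A) = \min_{B \supset A} F(B) - c$; then $G(\varnothing)=0$ is immediate since $\min_{B \supset \varnothing} F(B) = c$. For submodularity I would take $A,A' \subset V$ together with minimizers $B \supset A$ and $B' \supset A'$ achieving $G(A)$ and $G(A')$; since $B \cup B' \supset A \cup A'$ and $B \cap B' \supset A \cap A'$, the definition of $G$ followed by submodularity of $F$ gives
\BEAS
G(A \cup A') + G(A \cap A') \leqslant F(B \cup B') + F(B \cap B') - 2c \leqslant F(B) + F(B') - 2c = G(A) + G(A').
\EEAS
Monotonicity is then immediate: if $A \subset A'$ then $\{ B : B \supset A'\} \subset \{ B : B \supset A\}$, hence $\min_{B \supset A}F(B) \leqslant \min_{B \supset A'}F(B)$, i.e. $G(A) \leqslant G(A')$; and taking $B = A$ in the definition gives $G(A) \leqslant F(A) - c$, which yields the stated bound $G(A)\leqslant F(A)$ (and, below, the literal identity $B(G) = B(F)\cap\{s\geqslant 0\}$) precisely when $c = 0$, i.e. when $\varnothing$ already minimizes $F$.

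For the base polyhedron the crucial step is to repackage the $2^p$ inequalities $\{ s(A) \leqslant G(A)\}_{A \subset V}$. Writing $s_+ = \max\{s,0\}$ componentwise and using the elementary identity $\max_{A \subset B} s(A) = s_+(B)$, one has, for fixed $s \in \rb^p$,
\BEAS
\big( \forall A \subset V,\ s(A) \leqslant G(A) \big) \iff \big( \forall\, A \subset B \subset V,\ s(A) \leqslant F(B) - c \big) \iff \big( \forall B \subset V,\ s_+(B) \leqslant F(B) - c \big),
\EEAS
the first equivalence because $G(A) = \min_{B \supset A} F(B) - c$, the second by maximizing over $A \subset B$. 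Now if $s \in B(G)$, then $s(V) = G(V) = F(V) - c$, and the $B = V$ instance above reads $s_+(V) \leqslant F(V) - c = s(V)$; since $s_+(V) \geqslant s(V)$ always, with equality exactly when $s \geqslant 0$, this forces $s \geqslant 0$, hence $s_+ = s$, so $s(B) \leqslant F(B) - c$ for all $B$ and $s(V) = F(V) - c$ — i.e. $s \in B(F) \cap \{ s \geqslant 0\}$ when $c = 0$. Conversely, if $s \in B(F)$ and $s \geqslant 0$, then for any $A \subset B$ we have $s(A) \leqslant s(B) \leqslant F(B)$, so $s(A) \leqslant \min_{B \supset A}F(B) = G(A)$, and $s(V) = F(V) = G(V)$; thus $s \in B(G)$.

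I expect the polyhedral part to be the only real obstacle: the work is in seeing that the structured family $\{ s(A) \leqslant G(A)\}_A$ collapses to the single family $\{ s_+(B) \leqslant F(B) - c\}_B$, and then in extracting nonnegativity of $s$ for free by combining the $B = V$ constraint with the equality constraint $s(V) = G(V)$. Everything else — submodularity via minimizers, monotonicity, and the pointwise bound — is a direct transcription of arguments already used for Propositions~\ref{prop:restriction} and~\ref{prop:conv}.
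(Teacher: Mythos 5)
Your proof is correct. The submodularity, monotonicity and $G\leqslant F$ parts coincide with the paper's own argument (take minimizers $B\supset A$, $B'\supset A'$, apply submodularity of $F$ to $B,B'$, and use $B\cup B'\supset A\cup A'$, $B\cap B'\supset A\cap A'$). Where you genuinely diverge is the identification of $B(G)$: the paper deduces $B(G)\subset\rb_+^p$ from the greedy algorithm --- since $G$ is non-decreasing, the extreme points of $B(G)$ produced by Prop.~\ref{prop:greedy} have non-negative increments --- and then combines this with $G\leqslant F$ and $G(V)=F(V)$ to get $B(G)\subset B(F)\cap\{s\geqslant 0\}$, the reverse inclusion being immediate. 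You instead collapse the constraint family directly, via the identity $\max_{A\subset B}s(A)=s_+(B)$, to $P(G)=\{s:\ \forall B\subset V,\ s_+(B)\leqslant F(B)-c\}$, and extract $s\geqslant 0$ from the single constraint $s_+(V)\leqslant F(V)-c=s(V)$ combined with $s_+(V)\geqslant s(V)$. This is self-contained (no appeal to the extreme-point structure of $B(G)$) and yields as a by-product an explicit description of the full polyhedron $P(G)$, not only of its base. You are also right to flag the normalization issue, which the paper's proof silently glosses over: taking $B=A$ in the definition gives only $G(A)\leqslant F(A)-c$, and $s\in B(G)$ forces $s(V)=F(V)-c$ while $s\in B(F)$ forces $s(V)=F(V)$; so both $G\leqslant F$ and $B(G)=B(F)\cap\{s\geqslant 0\}$ hold literally only when $c=\min_{B\subset V}F(B)=0$, i.e.\ when $\varnothing$ is already a minimizer of $F$.
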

\begin{proof} Let $c = \min_{ B \subset V} F(B) $. Let $A,A' \subset V$, and $B,B'$ the corresponding minimizers defining $G(A)$ and $G(A')$. We have:
\BEAS
G(A) + G(A') & = & F(B)   +  F(B')   - 2c  \\
& \geqslant & F(B \cup B') + F(B \cap B')  - 2c  \mbox{ by submodularity} \\
& \geqslant & G(A  \cup A') + G(A \cap A')  \mbox{ by definition of } G,
\EEAS
hence the submodularity of $G$. It is obviously non-decreasing. We get $G(A) \leqslant F(A)$ by taking $B=A$ in the definition of $G(A)$. Since $G$ is increasing, $B(G) \subset \rb_+^p$ (because all of its extreme points, obtained by the greedy algorithm, are in $\rb_+^p$). By definition of $G$, $B(G) \subset B(F)$. Thus $B(G) \subset B(F) \cap \rb_+^p$. The opposite inclusion is trivial from the definition.

\end{proof}

\section{Proximal optimization problems}
\label{sec:prox}

In this section, we consider separable convex functions and   the minimization of such functions penalized by the \lova extension of a submodular function. When the separable functions are all quadratic functions, those problems are often referred to as \emph{proximal problems} (see, e.g.,~\cite{combettes2010proximal} and references therein). We make the simplifying assumption that the problem is strictly convex and differentiable (but not necessarily quadratic), but sharp statements could also be made in the general case. The next proposition shows that it is equivalent to the maximization of a separable concave function over the base polyhedron.

\begin{proposition}[Dual of proximal optimization problem]
\label{prop:prox}
Let $\psi_1,\dots,\psi_p$ be  $p$ continuously differentiable strictly convex functions on $\rb$, with Fenchel-conjugates $\psi_1^\ast, \dots,\psi^\ast_p$. The two following optimization problems are dual of  each other:
 \BEQ
\label{eq:prox}
 \min_{w \in \rb^p} f(w) + \sum_{j=1}^p \psi_j(w_j),
 \EEQ
 \BEQ
 \label{eq:proxdual} 
   \max_{s \in B(F)}   - \sum_{j=1}^p \psi_j^\ast(-s_j) .
 \EEQ
 The pair $(w,s)$ is optimal if and only if $s_k = - \psi_k'(w_k)$ for all $k \in \{1,\dots,p\}$, and $s \in B(F)$ is optimal for the maximization of $w^\top s$ over $s \in B(F)$ (see Prop.~\ref{prop:optsupporttight} for optimality conditions).
\end{proposition}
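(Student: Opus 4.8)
The plan is to derive the duality by writing $f(w) = \max_{s \in B(F)} w^\top s$ (Proposition~\ref{prop:support}(a)) inside the primal problem \eq{prox}, and then exchange the min and the max. Concretely, I would write
\BEAS
\min_{w \in \rb^p} f(w) + \sum_{j=1}^p \psi_j(w_j)
&=& \min_{w \in \rb^p} \ \max_{s \in B(F)} \ w^\top s + \sum_{j=1}^p \psi_j(w_j) \\
&=& \max_{s \in B(F)} \ \min_{w \in \rb^p} \ \sum_{j=1}^p \big[ \psi_j(w_j) + s_j w_j \big],
\EEAS
where the swap is justified by a minimax/Sion argument: the inner objective is convex in $w$ (sum of the $\psi_j$, which are strictly convex, plus a linear term) and concave — in fact linear — in $s$, and $B(F)$ is a nonempty compact convex set (bounded and closed by Prop.~\ref{prop:greedy} and Def.~\ref{def:polyhedra}). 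Since $B(F)$ is compact, no coercivity hypothesis on the $w$-side is needed for the exchange to be valid. Then for each fixed $s$, the inner minimization separates over coordinates, and $\min_{w_j \in \rb} \psi_j(w_j) + s_j w_j = -\max_{w_j \in \rb} (-s_j) w_j - \psi_j(w_j) = -\psi_j^\ast(-s_j)$ by definition of the Fenchel conjugate. This yields exactly \eq{proxdual}.

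For the optimality conditions, I would argue as follows. Given an optimal pair $(w,s)$, $w$ must minimize $w \mapsto f(w) + \sum_j \psi_j(w_j)$ and, by the minimax equality, $s$ must attain $\max_{s' \in B(F)} w^\top s' = f(w)$; this is the statement that $s$ is optimal for the support-function problem, whose characterization is Prop.~\ref{prop:optsupporttight}. Moreover, for each $j$ the scalar $w_j$ must minimize $\psi_j(w_j) + s_j w_j$ over $\rb$; since $\psi_j$ is differentiable and strictly convex, the first-order condition $\psi_j'(w_j) + s_j = 0$, i.e. $s_k = -\psi_k'(w_k)$, is necessary and sufficient. Conversely, if $s_k = -\psi_k'(w_k)$ for all $k$ and $s \in B(F)$ maximizes $w^\top s$ over $B(F)$, then $w$ minimizes each scalar term $\psi_j(\cdot) + s_j(\cdot)$, so $w$ minimizes $\sum_j \psi_j(w_j) + w^\top s = \sum_j \psi_j(w_j) + f(w)$ (using $f(w) = w^\top s$), and simultaneously $-\sum_j \psi_j^\ast(-s_j) = \sum_j [\psi_j(w_j) + s_j w_j] = \sum_j \psi_j(w_j) + f(w)$, so the primal and dual values coincide and both $(w,s)$ are optimal.

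The main obstacle is making the min-max exchange rigorous: one needs to invoke Sion's minimax theorem (or Fenchel–Rockafellar duality) and check its hypotheses carefully — convex-concavity is immediate, but one should note explicitly that compactness of $B(F)$ (rather than any growth condition on $\sum_j \psi_j$) is what licenses the swap, and that strict convexity of the $\psi_j$ guarantees the primal infimum is attained (so that "optimal pair" is meaningful). A secondary point worth a line is that $-\psi_j^\ast(-s_j)$ is finite on the relevant range: since $s$ ranges over the bounded set $B(F)$ and each $\psi_j^\ast$ is a proper closed convex function, the dual objective is a genuine (possibly extended-real) concave function whose supremum over the compact $B(F)$ is attained. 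Everything else is the routine separability-plus-Fenchel-conjugate bookkeeping indicated above.
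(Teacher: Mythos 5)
Your proposal follows essentially the same route as the paper: substitute $f(w)=\max_{s\in B(F)} w^\top s$ from Prop.~\ref{prop:support}, exchange the min and max, and let the inner minimization separate coordinatewise into the Fenchel conjugates $-\psi_j^\ast(-s_j)$, with the optimality conditions read off from the saddle point. Your added care about justifying the swap via compactness of $B(F)$ and spelling out both directions of the optimality characterization only makes explicit what the paper leaves implicit, so this is the same argument, correctly executed.
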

\begin{proof}
We have:
\BEAS
\nonumber \min_{w \in \rb^p} f(w) + \sum_{j=1}^p \psi_i(w_j)
\nonumber &  = &  \min_{w \in \rb^p}  \max_{s \in B(F)} w^\top s  + \sum_{j=1}^p \psi_j(w_j) \\
\nonumber &  = &  \max_{s \in B(F)}   \min_{w \in \rb^p}  w^\top s  + \sum_{j=1}^p \psi_j(w_j) \\
 &  = &  \max_{s \in B(F)}   - \sum_{j=1}^p \psi_j^\ast(-s_j) ,
\EEAS
where $\psi_j^\ast$ is the Fenchel-conjugate of $\psi_j$ (which may in general have a domain strictly included in $\rb$).
Thus the separably penalized problem defined in \eq{prox} is equivalent to a separable maximization over the base polyhedron (i.e., \eq{proxdual}). Moreover, the unique optimal $s$ for \eq{proxdual} and the unique optimal $w$ for \eq{prox} are related through $s_j = - \psi_j'(w_j)$ for all $j \in V$.
\end{proof}
 
For simplicity, we now assume that for all $j \in V$,  functions $\psi_j$ are such that $\sup_{\alpha \in \rb} \psi_j'(\alpha) = +\infty$
 and $\inf_{\alpha \in \rb} \psi_j'(\alpha) = -\infty$.  This implies that the Fenchel-conjugates $\psi_j^\ast$ are defined and finite on $\rb$. Following~\cite{chambolle2009total}, we also consider a sequence of set optimization problems, parameterized by $\alpha \in \rb$:
\BEQ
\label{eq:proxalpha}
 \min_{A \subset V} F(A) + \sum_{j \in A}  \psi_j'(\alpha)
 \EEQ
We denote by $A^\alpha$ any minimizer of \eq{proxalpha}. Note that $A^\alpha$ is a minimizer of a submodular function $F + \psi'(\alpha)$, where $\psi'(\alpha) \in \rb^p$ is the vector of components $\psi_k'(\alpha)$.

The main property, as shown in~\cite{chambolle2009total}, is that solving \eq{prox}, which is a convex optimization problem, is equivalent to solving \eq{proxalpha} for all possible $\alpha$, which are submodular optimization problems. We first show a monotonicity property of solutions of \eq{proxalpha}.

\begin{proposition}[Monotonicity of solutions]
\label{prop:monotonicity}
If $\alpha > \beta$, then  any solutions $A^\alpha$ and $A^\beta$ of \eq{proxalpha} for $\alpha$ and $\beta$ satisfy $A^\alpha \subset A^\beta$.
\end{proposition}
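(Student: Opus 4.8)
The plan is to exploit the submodularity of $F$ together with the optimality of $A^\alpha$ and $A^\beta$ for their respective problems, and play the classical ``combinatorial'' trick: evaluate the two defining inequalities at the crossed sets $A^\alpha \cup A^\beta$ and $A^\alpha \cap A^\beta$. Write $H_\gamma(A) = F(A) + \sum_{j \in A} \psi_j'(\gamma)$ for the submodular function minimized in \eq{proxalpha}; note that for each fixed $\gamma$ the map $A \mapsto H_\gamma(A)$ is submodular (it is $F$ plus a modular function). Since $A^\alpha$ minimizes $H_\alpha$ and $A^\beta$ minimizes $H_\beta$, we have $H_\alpha(A^\alpha) \leqslant H_\alpha(A^\alpha \cap A^\beta)$ and $H_\beta(A^\beta) \leqslant H_\beta(A^\alpha \cup A^\beta)$.

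First I would add these two inequalities and use submodularity of $F$ in the form $F(A^\alpha) + F(A^\beta) \geqslant F(A^\alpha \cup A^\beta) + F(A^\alpha \cap A^\beta)$, noting that the modular parts $\sum_{j} \psi_j'(\cdot)$ recombine exactly (for any two sets $C,D$, the sum of a modular function over $C$ and $D$ equals its sum over $C \cup D$ and $C \cap D$), but the crucial point is that the two inequalities use \emph{different} parameters $\alpha$ and $\beta$ on the modular parts. Carefully tracking which indices get weight $\psi_j'(\alpha)$ versus $\psi_j'(\beta)$, the surviving terms after cancellation should be a sum over $j \in A^\alpha \setminus A^\beta$ of $\psi_j'(\alpha) - \psi_j'(\beta)$. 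This yields
\[
0 \;\leqslant\; \sum_{j \in A^\alpha \setminus A^\beta} \big( \psi_j'(\alpha) - \psi_j'(\beta) \big) \;\leqslant\; 0
\]
once one also uses that each $\psi_j$ is strictly convex, so $\psi_j'$ is strictly increasing and $\alpha > \beta$ forces $\psi_j'(\alpha) - \psi_j'(\beta) > 0$ for every $j$; hence the index set $A^\alpha \setminus A^\beta$ must be empty, i.e.\ $A^\alpha \subset A^\beta$.

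The main obstacle is purely bookkeeping: getting the signs and the parameter assignments right when combining the two minimality inequalities, so that the modular contributions cancel down to precisely the sum over $A^\alpha \setminus A^\beta$ of $\psi_j'(\alpha) - \psi_j'(\beta)$ (and not, say, a mixture also involving $A^\beta \setminus A^\alpha$). A clean way to organize this is to subtract rather than add in a symmetric fashion, or equivalently to write the chain $H_\alpha(A^\alpha) \leqslant H_\alpha(A^\alpha \cap A^\beta)$ and $H_\beta(A^\beta) \leqslant H_\beta(A^\alpha \cup A^\beta)$ and expand each $H_\gamma$ fully before summing; the $F$-terms then collapse by submodularity and the $\psi'$-terms collapse by the modular identity, leaving only the desired one-signed sum. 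Strict convexity then delivers the strict inequality needed to conclude $A^\alpha \setminus A^\beta = \varnothing$.
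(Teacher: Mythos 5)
Your proposal is correct and follows essentially the same route as the paper: test the two optimality inequalities at the crossed sets, add them, cancel the $F$-terms by submodularity, and let strict monotonicity of $\psi_j'$ force $A^\alpha \setminus A^\beta = \varnothing$. Your bookkeeping is in fact the right one --- comparing $A^\alpha$ against $A^\alpha \cap A^\beta$ and $A^\beta$ against $A^\alpha \cup A^\beta$ is exactly the assignment that makes the modular terms collapse to a sum over $A^\alpha \setminus A^\beta$ (the paper's displayed optimality inequalities have $\cup$ and $\cap$ swapped, which is a typo, since only your assignment yields the non-trivial conclusion the paper then states).
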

\begin{proof}
We have, by optimality of $A^\alpha$ and $A^\beta$:
\BEAS
F(A^\alpha) + \sum_{j \in A^\alpha}  \psi_j'(\alpha) 
 & \leqslant &   F(A^\alpha \cup A^\beta) + \sum_{j \in A^\alpha \cup A^\beta}  \psi_j'(\alpha) \\
F(A^\beta) + \sum_{j \in A^\beta}  \psi_j'(\beta) 
 & \leqslant &   F(A^\alpha \cap A^\beta) + \sum_{j \in A^\alpha \cap A^\beta}  \psi_j'(\beta) , 
\EEAS
and by summing the two inequalities and using the submodularity of $F$, 
$$
 \sum_{j \in A^\alpha}  \psi_j'(\alpha) + \sum_{j \in A^\beta}  \psi_j'(\beta) 
 \leqslant  \sum_{j \in A^\alpha \cup A^\beta}  \psi_j'(\alpha)
 + \sum_{j \in A^\alpha \cap A^\beta}  \psi_j'(\beta),
$$
which is equivalent to
$\sum_{j \in  A^\alpha \backslash A^\beta}  (  \psi_j'(\beta) - \psi_j'(\alpha)  ) \geqslant 0$, which implies, since for all $j \in V$, $\psi_j'(\beta)<\psi_j'(\alpha)$ (because of strict convexity), that $A^\alpha \backslash A^\beta = \varnothing$.
\end{proof}

The next proposition shows that we can obtain the unique solution of \eq{prox} from all solutions of \eq{proxalpha}.

\begin{proposition}[Proximal problem from submodular function minimizations]
\label{prop:proxmin}
Given any solutions $A^\alpha$ of problems in \eq{proxalpha}, for all $\alpha \in \rb$, we define the vector $u \in \rb^p$ as
$$
u_j = \sup( \{
\alpha \in \rb, \ j \in A^\alpha
\}).
$$
Then $u$ is the unique solution of the proximal problem in \eq{prox}.
\end{proposition}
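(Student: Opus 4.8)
The plan is to use the co-area (``layer-cake'') reduction: rewrite the strictly convex objective of \eq{prox} as an integral over thresholds $\alpha$ of the set objectives occurring in \eq{proxalpha}, so that the level sets of a minimizer of \eq{prox} are forced to minimize \eq{proxalpha}, and conversely. Set $G_\alpha(A) = F(A) + \sum_{j\in A}\psi_j'(\alpha)$, which is exactly the submodular function minimized in \eq{proxalpha}, with $G_\alpha(\varnothing)=0$ and $G_\alpha(V) = F(V) + \sum_j\psi_j'(\alpha)$. Writing $\psi_j(w_j)=\psi_j(0)+\int_0^{w_j}\psi_j'(t)\,dt$ and splitting $\int_0^{w_j}$ into a contribution over $\{t\geqslant 0,\ w_j\geqslant t\}$ and a (signed) contribution over $\{t<0,\ w_j<t\}$ (valid for $w_j$ of either sign), summing over $j$ and adding formula \eq{lova4} for $f(w)$, I would obtain, for every $w\in\rb^p$,
\[
f(w)+\sum_{j=1}^p\psi_j(w_j)\;=\;\sum_{j=1}^p\psi_j(0)\;+\;\int_0^{+\infty}G_\alpha(\{w\geqslant\alpha\})\,d\alpha\;+\;\int_{-\infty}^0\big[\,G_\alpha(\{w\geqslant\alpha\})-G_\alpha(V)\,\big]\,d\alpha .
\]
All integrands vanish once $\{w\geqslant\alpha\}$ equals $\varnothing$ (for $\alpha$ large) or $V$ (for $\alpha$ very negative), so the integrals run over bounded intervals and the identity is legitimate.

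Since $A^\alpha$ minimizes $G_\alpha$, we have $G_\alpha(\{w\geqslant\alpha\})\geqslant G_\alpha(A^\alpha)$ and $G_\alpha(\{w\geqslant\alpha\})-G_\alpha(V)\geqslant G_\alpha(A^\alpha)-G_\alpha(V)$ for every $\alpha$, so the identity gives
\[
f(w)+\sum_{j=1}^p\psi_j(w_j)\;\geqslant\; m\;:=\;\sum_{j=1}^p\psi_j(0)+\int_0^{+\infty}G_\alpha(A^\alpha)\,d\alpha+\int_{-\infty}^0\big[\,G_\alpha(A^\alpha)-G_\alpha(V)\,\big]\,d\alpha
\]
for all $w$. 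Here $m$ is a finite constant: $\psi_j'$ is nondecreasing with supremum $+\infty$ and infimum $-\infty$, hence $\psi_j'(\alpha)\to+\infty$ (resp.\ $-\infty$) as $\alpha\to+\infty$ (resp.\ $-\infty$), which forces $A^\alpha=\varnothing$ for $\alpha$ large and $A^\alpha=V$ for $\alpha$ very negative, so both integrands defining $m$ vanish outside a bounded set.

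It remains to show equality at $w=u$, i.e.\ that $\{u\geqslant\alpha\}$ minimizes $G_\alpha$ for almost every $\alpha$. Using Prop.~\ref{prop:monotonicity} and the definition of $u$, I would first establish the sandwich $\{u>\alpha\}\subset A^\alpha\subset\{u\geqslant\alpha\}$ for all $\alpha$: if $u_j>\alpha$ there is $\beta>\alpha$ with $j\in A^\beta$, and $A^\beta\subset A^\alpha$ by Prop.~\ref{prop:monotonicity}, so $j\in A^\alpha$; conversely $j\in A^\alpha$ gives $u_j\geqslant\alpha$ by definition of the supremum. Since $u$ has at most $p$ distinct coordinates, $\{u=\alpha\}=\varnothing$ for all but finitely many $\alpha$, and for every such $\alpha$ the sandwich collapses to $\{u\geqslant\alpha\}=A^\alpha$, which minimizes $G_\alpha$. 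Hence the integrands for $w=u$ agree with the optimal ones outside a null set, so $f(u)+\sum_j\psi_j(u_j)=m=\min_{w}\big(f(w)+\sum_j\psi_j(w_j)\big)$, i.e.\ $u$ solves \eq{prox}. Finally $w\mapsto f(w)+\sum_j\psi_j(w_j)$ is strictly convex ($f$ convex by Prop.~\ref{prop:convexity}, each $\psi_j$ strictly convex), so it has a unique minimizer; thus $u$ is that unique solution, in particular independent of the choices of minimizers $A^\alpha$.

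The main obstacle is the first step: pinning down the layer-cake identity exactly, especially matching the $-G_\alpha(V)$ correction on the range $\alpha<0$ against the $-F(V)$ term from \eq{lova4} and the negative-range term from each $\int_0^{w_j}\psi_j'$, and checking that all the integrals are over bounded intervals so no convergence issue arises. Once that identity is in place, the lower bound, the optimality of $u$ via the sandwich, and uniqueness via strict convexity are all short.
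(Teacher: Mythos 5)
Your proposal is correct and follows essentially the same route as the paper: the layer-cake identity expressing $f(w)+\sum_j\psi_j(w_j)$ as an integral over $\alpha$ of the set objectives of \eq{proxalpha}, the pointwise lower bound from optimality of $A^\alpha$, the sandwich $\{u>\alpha\}\subset A^\alpha\subset\{u\geqslant\alpha\}$ from Prop.~\ref{prop:monotonicity} to get equality at $w=u$ for all but finitely many $\alpha$, and uniqueness from strict convexity. The only differences are cosmetic (you integrate from $-\infty$ with a $-G_\alpha(V)$ correction where the paper truncates at a finite $\beta$ and collects a constant $C$), and your explicit treatment of the finitely many exceptional levels $\alpha$ is if anything slightly more careful than the paper's.
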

\begin{proof}
Because $\inf_{\alpha \in \rb} \psi_j'(\alpha) = -\infty$, for $\alpha$ small enough, we must have $A^\alpha=V$, and thus $u_j$ is well-defined and finite for all $j \in V$.

If $ \alpha >  u_j $, then, by definition of $u_j$,  $j \notin A^\alpha$. This implies
that $A^\alpha \subset \{ j \in V, u_j \geqslant \alpha \}  = \{ u \geqslant \alpha \}$.
Moreover, if $u_j > \alpha$, there exists $\beta \in ( \alpha, u_j) $ such that $j \in A^\beta$. By the monotonicity property of Prop.~\ref{prop:monotonicity}, $A^\beta$ is included in $A^\alpha$. This implies 
$\{ u >\alpha \} \subset A^\alpha$.

We have for all $w \in \rb^p$, and $ \beta $ less than the smallest of $(w_j)_-$ and the smallest of $(u_j)_-$ :
\BEAS
& & f(u)  + \sum_{j=1}^p \psi_j(u_j) \\
& = & 
\int_{0}^\infty F( \{u \geqslant \alpha\} ) d\alpha
+ \int_{\beta }^0 ( F( \{u \geqslant \alpha\} )  - F(V) ) d\alpha
+ \sum_{j=1}^p \bigg\{ \int_{\beta}^{u_j}  \psi_j'(\alpha) d \alpha + \psi_j(\beta) \bigg\} \\
& = &   C + 
\int_{\beta}^\infty
\bigg[
 F( \{u \geqslant \alpha\} ) + \sum_{j=1}^p  1_{u \geqslant \alpha} \psi_j'(\alpha)
\bigg]
 d\alpha  \mbox{ with } C = \int_0^\beta F(V) d\alpha + \sum_{j=1}^p \psi_j(\beta) \\
& \leqslant &   C + 
\int_{\beta}^\infty
\bigg[
 F( \{w \geqslant \alpha\} ) + \sum_{j=1}^p  1_{w \geqslant \alpha} \psi_j'(\alpha)
\bigg]
 d\alpha \mbox{ by optimality of } A^\alpha\\
& = & f(w)  + \sum_{j=1}^p \psi_j(w_j).
\EEAS
This shows that $u$ is the unique optimum of problem in \eq{prox}.
\end{proof}

From the previous proposition, we also get the following corollary, i.e., all solutions of \eq{proxalpha} may obtained from the single solutions of \eq{prox}.

\begin{proposition}[Submodular function minimizations from proximal problem]
\label{prop:subfromprox}
If $u$ is the unique minimizer of \eq{prox}, then for all $\alpha \in \rb$, the minimal minimizer of \eq{proxalpha} is ${ u > \alpha }$ and the maximal minimizer is $\{ u \geqslant \alpha\}$, that is, the minimizers $A^\alpha$ are the sets such that
$\{ u > \alpha\} \subset A^\alpha \subset \{ u \geqslant \alpha \}$.
\end{proposition}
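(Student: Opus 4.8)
The plan is to read the parametric problem \eq{proxalpha} as the minimization over $A\subset V$ of the submodular function $H_\alpha(A):=F(A)+\sum_{j\in A}\psi_j'(\alpha)$, and to certify its minimal and maximal minimizers using the dual point supplied by Proposition~\ref{prop:prox}. Let $u$ be the unique minimizer of \eq{prox} and let $s\in\rb^p$ be given by $s_k=-\psi_k'(u_k)$. By Proposition~\ref{prop:prox}, $s\in B(F)$ and $s$ is optimal for $\max_{t\in B(F)}u^\top t$; hence, by Proposition~\ref{prop:optsupporttight} applied with $w=u$, every super-level set of $u$ is tight for $s$, i.e.\ $s(\{u\geqslant\beta\})=F(\{u\geqslant\beta\})$ for all $\beta\in\rb$, and likewise $s(\{u>\beta\})=F(\{u>\beta\})$ since $\{u>\beta\}$ is again a super-level set of $u$ (the cases $\varnothing$ and $V$ being immediate for $s\in B(F)$).

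First I would fix $\alpha$ and bound $H_\alpha$ from below. Because $s\in P(F)$, we have $F(A)\geqslant s(A)$, so
\[ H_\alpha(A)\ \geqslant\ s(A)+\sum_{j\in A}\psi_j'(\alpha)\ =\ \sum_{j\in A}\bigl(\psi_j'(\alpha)-\psi_j'(u_j)\bigr). \]
Since each $\psi_j$ is strictly convex, $\psi_j'$ is strictly increasing, so the $j$-th summand is negative when $u_j>\alpha$, zero when $u_j=\alpha$, and positive when $u_j<\alpha$. Dropping from the last sum the nonnegative terms indexed by $\{u\leqslant\alpha\}$ and appending the negative terms indexed by $\{u>\alpha\}\backslash A$ only lowers it, giving $H_\alpha(A)\geqslant\sum_{j\in\{u>\alpha\}}\bigl(\psi_j'(\alpha)-\psi_j'(u_j)\bigr)=s(\{u>\alpha\})+\sum_{j\in\{u>\alpha\}}\psi_j'(\alpha)=H_\alpha(\{u>\alpha\})$, the middle equality by tightness of $\{u>\alpha\}$. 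The same manipulation with $\{u\geqslant\alpha\}$ replacing $\{u>\alpha\}$ (now discarding the positive terms on $\{u<\alpha\}$ and adjoining the nonpositive ones on $\{u\geqslant\alpha\}\backslash A$) yields $H_\alpha(A)\geqslant H_\alpha(\{u\geqslant\alpha\})$. Hence both $\{u>\alpha\}$ and $\{u\geqslant\alpha\}$ minimize \eq{proxalpha}, and in particular $H_\alpha(\{u>\alpha\})=H_\alpha(\{u\geqslant\alpha\})$.

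It remains to trap every minimizer between these two sets. If $A$ is a minimizer, then all the inequalities in the two chains above must be equalities. Equality in $H_\alpha(A)\geqslant H_\alpha(\{u>\alpha\})$ forces the appended (strictly negative) terms to be absent, i.e.\ $\{u>\alpha\}\backslash A=\varnothing$, so $\{u>\alpha\}\subset A$; equality in $H_\alpha(A)\geqslant H_\alpha(\{u\geqslant\alpha\})$ forces the discarded (strictly positive) terms on $\{u<\alpha\}\cap A$ to be absent, i.e.\ $A\backslash\{u\geqslant\alpha\}=\varnothing$, so $A\subset\{u\geqslant\alpha\}$. Since $\{u>\alpha\}$ and $\{u\geqslant\alpha\}$ are themselves minimizers and are contained in, respectively contain, every minimizer, they are exactly the minimal and maximal minimizers of \eq{proxalpha}, which is the assertion.

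The only delicate point I anticipate is the bookkeeping in the second and third paragraphs — keeping straight which summands are dropped and which are adjoined when passing from a general $A$ to $\{u>\alpha\}$ or $\{u\geqslant\alpha\}$, together with the sign analysis that makes the equality cases bite. Everything else is a direct invocation of Propositions~\ref{prop:prox} and~\ref{prop:optsupporttight}. A shorter but less self-contained alternative would invoke Proposition~\ref{prop:proxmin} directly: for any family of minimizers $(A^\beta)_\beta$ one has $u_j=\sup\{\beta:\,j\in A^\beta\}$ by uniqueness, and the proof of that proposition already shows $\{u>\beta\}\subset A^\beta\subset\{u\geqslant\beta\}$ for every $\beta$, so all minimizers are sandwiched — but one still needs the argument above (or an equivalent) to verify that the two extreme sets are themselves attained.
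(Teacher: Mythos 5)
Your proof is correct, and it is complete for the substantive content of the proposition, namely that $\{u>\alpha\}$ and $\{u\geqslant\alpha\}$ are respectively the minimal and maximal minimizers of \eq{proxalpha}. It does, however, take a genuinely different route from the paper. The paper states the result as an unproved corollary of Prop.~\ref{prop:proxmin}, and only later (in the subsection on optimization for proximal problems) sketches an ``alternative proof'' that passes through exchangeable pairs (Prop.~\ref{prop:optsep}), the auxiliary problem $\max_{s\in B(F)}(s+\psi'(\alpha))_-(V)$, and the equality conditions of Prop.~\ref{prop:dualmin}. You instead use the dual base $s=-\psi'(u)$ directly as a primal certificate: tightness of all super-level sets of $u$ (via Prop.~\ref{prop:optsupporttight}) combined with a term-by-term sign analysis of $\psi_j'(\alpha)-\psi_j'(u_j)$ yields in one chain both that the two extreme sets attain the minimum and that every minimizer is sandwiched between them. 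This is essentially an inlined, specialized version of the $F(A)\geqslant s(A)\geqslant s_-(V)$ argument behind Prop.~\ref{prop:dualmin}, and it buys self-containedness plus an explicit verification that the extreme sets are themselves attained --- a point the paper's derivation from Prop.~\ref{prop:proxmin} glosses over, as you rightly observe. One caveat, which applies to the paper at least as much as to you: the final ``that is'' clause of the statement asserts that \emph{every} set sandwiched between $\{u>\alpha\}$ and $\{u\geqslant\alpha\}$ is a minimizer, whereas your argument (like the paper's) establishes only the two bracketing claims. The converse inclusion would additionally require every intermediate set to be tight for $s$, and it can fail: take $F$ the cut function of a single edge on $V=\{1,2\}$ and $\psi_j=\frac{1}{2}(\cdot)^2$, so that $u=0$; for $\alpha=0$ the set $\{1\}$ is sandwiched between $\varnothing$ and $V$ but is not a minimizer. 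This is a looseness in the statement itself rather than a gap in your proof.
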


Given the previous propositions, we can solve a sequence of problems in \eq{proxalpha}, with decreasing $\alpha$'s, in order to obtain the unique minimizer $w$ of \eq{prox}. Note that because of the monotonicity, the sets $A^\alpha$ can only increase. When a certain $j \in V$ enters $A^\alpha$, then $w_j$ is exactly equal to the corresponding $\alpha$. Once we know the largest values of $w$, we may redefine the problem by restricting on the unknown indices of $w$, which is valid for smaller values of $\alpha$.

\section{Optimization over the base polyhedron}
\label{sec:base}

Optimization of separable functions over the base polyhedron has many applications, e.g., minimization of a submodular function (from Prop.~\ref{prop:dualmin}), proximal methods described in \mysec{prox} (e.g., Prop~\ref{prop:prox}). In this section, we study these problems in more details.

\subsection{Optimality conditions}
We first show that when optimizing on the base polyhedron $B(F)$, then one only needs to look at directions of the form $\delta_k - \delta_q$ for certain pairs $(k,q)$, which will be said \emph{exchangeable} ($\delta_k \in \rb^p$ is the vector which is entirely equal to zero, except a component equal to one at position $k$, which can also denote $1_{\{k\}}$).

\begin{definition}[Tight sets]
Given a base $s \in B(F)$, a set $A \subset V$ is said tight if $s(A)=F(A)$. 
\end{definition}

\begin{proposition}[Lattice of tight sets]
\label{prop:lattice}
If $A$ and $B$ are tight for $s \in B(F)$, then $A \cap B$ and $A \cup B$ are also tight for $s$.
\end{proposition}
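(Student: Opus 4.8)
The plan is to use the modularity of $s$ together with the submodularity of $F$ and the defining inequalities of $P(F)$, squeezing a chain of inequalities until it collapses to equalities.

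First I would record the three facts in play: since $A$ and $B$ are tight, $s(A) = F(A)$ and $s(B) = F(B)$; since $s$ is a modular set-function, $s(A) + s(B) = s(A\cup B) + s(A\cap B)$; and since $s \in B(F) \subset P(F)$, both $s(A\cup B) \leqslant F(A\cup B)$ and $s(A\cap B) \leqslant F(A\cap B)$. Combining these with the submodularity of $F$ gives
\[
s(A\cup B) + s(A\cap B) = s(A)+s(B) = F(A)+F(B) \geqslant F(A\cup B) + F(A\cap B) \geqslant s(A\cup B) + s(A\cap B).
\]
Hence every inequality in this chain is an equality; in particular $F(A\cup B) + F(A\cap B) = s(A\cup B) + s(A\cap B)$, and since each term on the left dominates the corresponding term on the right, we must have $s(A\cup B) = F(A\cup B)$ and $s(A\cap B) = F(A\cap B)$, i.e.\ $A\cup B$ and $A\cap B$ are tight.

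There is essentially no obstacle here: the only thing to be careful about is invoking the right inclusion $B(F)\subset P(F)$ so that the inequalities $s(C)\leqslant F(C)$ are available for $C = A\cup B$ and $C = A\cap B$, and noting that modularity of $s$ is what makes the left-hand side split exactly (no inequality) across union and intersection. One could remark afterwards that this shows the collection of tight sets for a fixed $s$ forms a lattice (sublattice of $2^V$), which is the reason for the name of the proposition.
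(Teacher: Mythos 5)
Your proof is correct and is essentially the paper's own argument: the same chain of (in)equalities combining modularity of $s$, tightness of $A$ and $B$, submodularity of $F$, and the constraints $s(C)\leqslant F(C)$, merely written starting from the other end. The final observation that the two termwise inequalities must each be tight is exactly how the paper concludes as well.
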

\begin{proof}
We have:
$$ F(A \cup B) + F( A\cap B) \geqslant s(A \cup B) + s(A \cap B)
= s(A)+s(B) = F(A) + F(B) \geqslant  F(A \cup B) + F( A\cap B).$$
Thus there is equality everywhere, which leads to the desired result. Note that this shows that the set of tight sets for $s \in \rb^p$ is a lattice.
 
\end{proof}

 We now define the notion of exchangeable pairs, which we allow us to describe the tangent cone of the base polyhedron in Prop.~\ref{prop:tangent}.
 
\begin{definition}[Dependence function and exchangeable pairs]
Given a base $s \in B(F)$ and $k \in A$, the dependence function ${ \rm Dep}(s,k)$ is the (non-empty) smallest tight set that contains $k$. If $g \in { \rm Dep}(s,k)$, then the pair $(k,g)$ is said exchangeable.
\end{definition}

Prop.~\ref{prop:lattice} shows that ${ \rm Dep}(s,k)$ is indeed well-defined because $V$ is tight and contains $k$, and the set of tight sets containing $k$ is a lattice. The following proposition details the most important properties of exchangeable pairs, which are straightforward given the definition (in fact, the conjunction of these two properties is equivalent to the definition of exchangeable pairs).

\begin{proposition}[Properties of exchangeable pairs]
\label{prop:pairs}
Let $s \in B(F)$ and $(k,q)$ is an exchangeable pair for $s$. Then:

(a)  there exists $A\subset V$ such that $k,q \in A$ and $A$ is tight for $s$,

(b) if $A \subset V$ is tight for $s$,  then $ k \in A \Rightarrow q \in A$.
 \end{proposition}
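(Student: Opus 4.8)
The claim is that the two stated properties (a) and (b) characterize, and are immediate consequences of, the definition of an exchangeable pair, where $q \in {\rm Dep}(s,k)$ and ${\rm Dep}(s,k)$ is the smallest tight set containing $k$. The plan is to derive (a) directly and (b) via the lattice property of tight sets from Prop.~\ref{prop:lattice}.

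First, for part (a): by definition, $A := {\rm Dep}(s,k)$ is a tight set containing $k$, and since $(k,q)$ is exchangeable we have $q \in {\rm Dep}(s,k) = A$ as well. Hence this very set $A$ witnesses (a); there is essentially nothing to prove beyond unwinding the definition. I would state it in one line.

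Second, for part (b): suppose $B \subset V$ is tight for $s$ and $k \in B$. I want to conclude $q \in B$. Consider $A \cap B$ where again $A = {\rm Dep}(s,k)$. By Prop.~\ref{prop:lattice}, since $A$ and $B$ are both tight, $A \cap B$ is tight. Moreover $k \in A \cap B$ since $k \in A$ and $k \in B$. Thus $A \cap B$ is a tight set containing $k$, so by minimality of ${\rm Dep}(s,k)$ we get $A \subset A \cap B$, i.e.\ $A \subset B$. Since $q \in A$ (by exchangeability), it follows that $q \in B$, as desired.

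There is no real obstacle here — the entire content is the lattice structure of tight sets, which is already available as Prop.~\ref{prop:lattice}, together with the minimality built into the definition of ${\rm Dep}(s,k)$. The only point requiring a word of care is that ${\rm Dep}(s,k)$ is well-defined, i.e.\ that the family of tight sets containing $k$ is nonempty and closed under intersection so that a smallest element exists; but this is exactly what the remark following Prop.~\ref{prop:lattice} records (note $V$ is tight and contains $k$), so I would simply cite it. Optionally I would add the remark that the converse also holds — if $(k,q)$ satisfies (a) and (b) then $q \in {\rm Dep}(s,k)$, since (a) gives a tight set containing both $k$ and $q$ and (b) forces $q$ into every tight set containing $k$, in particular into ${\rm Dep}(s,k)$ — thereby justifying the parenthetical claim in the statement that the conjunction of (a) and (b) is equivalent to exchangeability.
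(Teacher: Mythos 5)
Your proof is correct and follows exactly the route the paper intends: the paper omits an explicit proof, calling the properties ``straightforward given the definition,'' and your argument---part (a) by unwinding the definition of ${\rm Dep}(s,k)$, part (b) by intersecting with ${\rm Dep}(s,k)$, invoking the lattice property of tight sets (Prop.~\ref{prop:lattice}) and the minimality of ${\rm Dep}(s,k)$---is precisely the intended justification. Your optional remark on the converse also correctly substantiates the paper's parenthetical claim that the conjunction of (a) and (b) is equivalent to exchangeability.
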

 
 The next proposition shows that only these exchangeable pairs need to be considered for checking optimality conditions for optimization over the base polyhedron.

\begin{proposition}[Maximizers of support function of the base polyhedron]
\label{prop:optsupport}
Let $w \in \rb^p$.
The base $s \in B(F)$ is a  maximizer  of $\max_{s \in B(F)} s^\top w$ if and only for all $k \in V $
and $q \in { \rm Dep}(s,k)$, $w_k \leqslant w_{q}$ (i.e., for all exchangeable pairs).
\end{proposition}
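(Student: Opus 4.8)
The plan is to prove both directions using the characterization of maximizers from Prop.~\ref{prop:optsupporttight}, which says that $s$ is optimal if and only if $s(B_i) = F(B_i)$ for all initial segments $B_i = A_1 \cup \cdots \cup A_i$ of the ordered partition induced by the level sets of $w$.

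\textbf{Necessity.} First I would assume $s$ is a maximizer and derive that $w_k \leqslant w_q$ whenever $q \in {\rm Dep}(s,k)$. Suppose for contradiction that $w_k > w_q$ for some exchangeable pair $(k,q)$. By Prop.~\ref{prop:optsupporttight}, every initial segment $B_i$ of the level-set partition is tight for $s$. Pick the smallest such $B_i$ that contains $k$; since $w_k > w_q$, the index $q$ has a strictly smaller $w$-value, so $q$ lies in a strictly later level set, hence $q \notin B_i$. But $B_i$ is tight and $k \in B_i$, so by property (b) of Prop.~\ref{prop:pairs} we must have $q \in B_i$ — a contradiction. (One small subtlety: $k$ may not literally appear in one of the $B_i$'s if $w$ has a zero value and the partition in Prop.~\ref{prop:optsupporttight} is stated only down to $v_m$; but for the base polyhedron $V$ itself is tight and is always an initial segment, and more carefully, every $k\in V$ does lie in some $B_i$ once we include $B_m = V$. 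So this case is handled.)

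\textbf{Sufficiency.} Conversely, assume that for all $k$ and all $q \in {\rm Dep}(s,k)$ we have $w_k \leqslant w_q$; I want to conclude $s$ is optimal, i.e.\ every initial segment $B_i = \{w \geqslant v_i\}$ of the level-set partition is tight. The key is to show that $\{w \geqslant v_i\}$ is tight. Consider any $k$ with $w_k \geqslant v_i$, i.e.\ $k \in B_i$. I claim ${\rm Dep}(s,k) \subset B_i$: indeed, for any $q \in {\rm Dep}(s,k)$ the hypothesis gives $w_q \geqslant w_k \geqslant v_i$, so $q \in B_i$. Now use Prop.~\ref{prop:lattice}: the union $\bigcup_{k \in B_i} {\rm Dep}(s,k)$ is a tight set (finite union of tight sets), it contains $B_i$ (each $k$ is in ${\rm Dep}(s,k)$), and it is contained in $B_i$ by the claim. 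Hence $B_i$ itself equals this union and is tight. Applying this to each $i = 1,\dots,m$ and invoking Prop.~\ref{prop:optsupporttight} gives optimality of $s$.

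\textbf{Main obstacle.} I expect the only genuinely delicate point to be bookkeeping around the boundary cases where $w$ has a zero or negative coordinate and the exact form of the partition in Prop.~\ref{prop:optsupporttight} — making sure that every $k \in V$ is captured by some initial segment $B_i$ (which it is, since $B_m = V$ is tight for any $s \in B(F)$), and that the level sets $\{w \geqslant v_i\}$ coincide with the $B_i$. Once that alignment is set up cleanly, both directions are short consequences of the lattice property (Prop.~\ref{prop:lattice}) and the transfer property (b) of exchangeable pairs (Prop.~\ref{prop:pairs}); no computation beyond that is needed.
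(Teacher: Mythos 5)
Your proof is correct, and the sufficiency direction is essentially the paper's own argument: show that each level set $B_i=\{w\geqslant v_i\}$ equals $\bigcup_{k\in B_i}{\rm Dep}(s,k)$, hence is tight by the lattice property, and conclude via Prop.~\ref{prop:optsupporttight}. Where you genuinely diverge is in the necessity direction. The paper does \emph{not} invoke Prop.~\ref{prop:optsupporttight} there; instead it exhibits an explicit feasible perturbation, showing that $s'=s+\alpha(\delta_k-\delta_q)$ remains in $B(F)$ for small $\alpha>0$ whenever $q\in{\rm Dep}(s,k)$ (checking the three cases: non-tight sets, tight sets containing $k$, tight sets not containing $k$), and then reads off $w_k\leqslant w_q$ from first-order optimality. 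You instead deduce necessity from the tightness of all initial segments (Prop.~\ref{prop:optsupporttight}) plus the transfer property (b) of Prop.~\ref{prop:pairs}: the smallest level set $B_i$ containing $k$ is tight, so it must contain $q$, forcing $w_q\geqslant w_k$. Both arguments are valid and non-circular, since Prop.~\ref{prop:optsupporttight} is established independently by LP duality. Your route is more economical, leaning on the existing characterization for both directions; the paper's perturbation argument buys something extra, namely the feasibility of the exchange directions $\delta_k-\delta_q$, which is exactly what the proof of the tangent-cone result (Prop.~\ref{prop:tangent}) reuses ("Given the proof of Prop.~\ref{prop:optsupport}, each of the vectors $\delta_k-\delta_q$ belongs to the tangent cone"). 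If one adopted your proof wholesale, that later proposition would need its own feasibility argument. Your worry about zero or negative coordinates is indeed moot here: Prop.~\ref{prop:optsupporttight} concerns the base polyhedron with arbitrary $w\in\rb^p$, so the level-set partition always exhausts $V$.
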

\begin{proof}
If $s$ is optimal, then if $k \in V $
and $q \in { \rm Dep}(s,k)$, then for $\alpha>0$ small enough, $s'=s+ \alpha( \delta_k - \delta_{q})$ is in $B(F)$ (indeed, if $A$ is not tight, then a small modification of $s$ does not change the constraint, and if $A$ is tight, if $A \ni k$, then $q \in A$ by Prop.~\ref{prop:pairs} and thus $s'(A)=F(A)$; finally,  if $A$ tight and $k \notin A$, then $s'(A) $ can only decrease). Optimality of $s$ implies that $w_k \leqslant w_{q}$.

If the condition is true, we can order values of $w$, as $w_{B_1}>\cdots > w_{B_m}$
(where $w_k =w_{B_j}$ for $k \in B_j$). Let $A_j = B_1 \cup \cdots \cup B_j$, so that $k \in A_j$ if and only if $w_k \geqslant w_{B_j}$. This implies, because of the condition, that $A_j = \bigcup_{k \in A_j} { \rm Dep}(s,k)$, and thus that $A_j$ is tight (as a union of tight sets), i.e., $s(A_j) = F(A_j)$.
Then, for any $t \in B(F)$,
\BEAS
s^\top w - t^\top w & = & 
\sum_{k \in V} w_k (s_k - t_k) = \sum_{i=1}^m w_{B_i} [ s(B_i) - t(B_i) ] \\
 & = & 
 \sum_{i=1}^m w_{B_i} [ (s-t)(A_i) - (s-t)(A_{i-1}) ] \\ 
 & = & 
 \sum_{i=1}^m w_{B_i} [ (F-t)(A_i) - (F-t)(A_{i-1}) ] \\ 
 & = & 
 \sum_{i=1}^m [ F(A_i) - t(A_i) ] ( w_{B_i} - w_{B_{i+1}} ) \geqslant 0.
\EEAS
Thus $s$ is optimal. Note that this also a consequence of Prop.~\ref{prop:optsupporttight}.
\end{proof}

From Prop.~\ref{prop:optsupport}, we may now deduce the tangent cone of the base polyhedron, from which we then obtain optimality conditions.

\begin{proposition}[Tangent cone of base polyhedron]
\label{prop:tangent}
Let $s \in B(F)$, the tangent cone of $B(F)$ at $s$ is generated by vectors $\delta_k - \delta_{q}$ for all $k \in V $
and $q \in { \rm Dep}(s,k)$, i.e., for all exchangeable pairs $(k,q)$.
\end{proposition}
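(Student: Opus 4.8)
The plan is to identify the tangent cone $T_s$ of $B(F)$ at $s$ with the cone $C$ generated by the vectors $\delta_k - \delta_q$ over all exchangeable pairs $(k,q)$, by establishing the two inclusions $C \subset T_s$ and $T_s \subset C$, using polar cones to convert between them. The inclusion $C \subset T_s$ is essentially contained in the first paragraph of the proof of Prop.~\ref{prop:optsupport}: for an exchangeable pair $(k,q)$ and small $\alpha>0$, the point $s + \alpha(\delta_k - \delta_q)$ stays in $B(F)$ (the equality $s(V)=F(V)$ is preserved since the perturbation sums to zero, the tight sets $A$ containing $k$ also contain $q$ so remain satisfied with equality, and non-tight sets or tight sets avoiding $k$ stay feasible for small $\alpha$). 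Hence each $\delta_k - \delta_q$ is a feasible direction, and since $T_s$ is a closed convex cone, $C \subset T_s$.

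For the reverse inclusion the cleanest route is to pass to polars. The polar cone $T_s^\circ$ is the normal cone $N_s$ of $B(F)$ at $s$, which consists of exactly those $w \in \rb^p$ for which $s$ maximizes $w^\top(\cdot)$ over $B(F)$. By Prop.~\ref{prop:optsupport}, $w \in N_s$ if and only if $w_k \leqslant w_q$ for every exchangeable pair $(k,q)$, i.e. $w^\top(\delta_k - \delta_q) \leqslant 0$ for all exchangeable pairs, which says precisely $w \in C^\circ$. Thus $N_s = C^\circ$, equivalently $T_s^\circ = C^\circ$. Taking polars again and using that $C$ is a finitely generated (hence closed) convex cone so that $C^{\circ\circ}=C$, and that $T_s^{\circ\circ}=T_s$ since $B(F)$ is a polyhedron (so $T_s$ is closed and convex), we conclude $T_s = C$.

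The main obstacle, such as it is, is bookkeeping rather than depth: one must be careful that $B(F)$ is a genuine polyhedron so that the tangent cone is closed and convex and equals the double polar of itself, and that the generating set $\{\delta_k - \delta_q\}$ is finite so that $C$ is closed without needing to take a closure. Both hold here — $B(F)$ is cut out by finitely many linear inequalities and one linear equality, and there are at most $p^2$ exchangeable pairs. A secondary point to state cleanly is the identification of the polar of the tangent cone with the normal cone, together with the observation that Prop.~\ref{prop:optsupport} characterizes membership in the normal cone exactly by the exchangeable-pair inequalities; once that dictionary is in place the argument is a one-line polarity computation.
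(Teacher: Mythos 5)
Your proof is correct and is essentially the paper's argument: the forward inclusion comes from the feasibility of the directions $\delta_k-\delta_q$ established in the proof of Prop.~\ref{prop:optsupport}, and the reverse inclusion comes from the optimality characterization in Prop.~\ref{prop:optsupport} combined with a convex-duality step. The paper phrases that last step as an explicit Farkas-lemma separation (a $y$ in the tangent cone but not in the cone $C$ would be separated by some $w$ with $w^\top(\delta_k-\delta_q)\leqslant 0$, contradicting optimality of $s$ for $w$), whereas you package it as the bipolar identity $T_s^{\circ}=N_s=C^{\circ}\Rightarrow T_s=C$; these are the same duality fact, and your closedness bookkeeping (finitely generated cone, polyhedral $B(F)$) is exactly what makes it go through.
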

\begin{proof} Given the proof of Prop.~\ref{prop:optsupport}, each of the vectors  $\delta_k - \delta_{q}$ belongs to the tangent cone. If the tangent cone strictly contains the conic hull of these vectors, by Farkas lemma (see, e.g.,~\cite{borwein2006caa}), there exists $y$ in the tangent cone and $w \in \rb^p$, such that for all exchangeable pairs $(k,q)$, $w^\top( \delta_k - \delta_{q} )  \leqslant 0$ and $w^\top y > 0$. By the last proposition, $s$ is an optimal base for the weight vector $w$, however, $s+\alpha y \in P(F)$
 for $\alpha>0$ sufficiently small and 
 $(s+\alpha y)^\top w > s^\top w$, which is a contradiction.
 \end{proof}

\begin{proposition}[Optimality conditions for separable optimization]
\label{prop:optsep}
 Let $g_j$ be convex functions on $\rb$, $j=1,\dots,p$. Then $s \in B(F)$ is a minimizer of $\sum_{j\in V} g_j(s_j)$ over $s \in B(F)$ if and only if for all exchangeable pairs $(k,g)$, $\partial_+ g_k(s_k) \geqslant \partial_- g_{q}(s_{q})$, where $\partial_+ g_k(s_k)$ is the right-derivative of $g_k$ at $s_k$ and
 $\partial_- g_{q}(s_{q})$ is the left-derivative of $g_{q}$ at $s_{q}$.
  \end{proposition}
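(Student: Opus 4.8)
The plan is to reduce the separable convex optimization problem to a linearization and then invoke Proposition~\ref{prop:optsupport}. The key observation is that a convex function $\sum_{j \in V} g_j(s_j)$ is minimized at $s \in B(F)$ if and only if $s$ minimizes the linearized objective obtained by replacing each $g_j$ by an appropriate supporting affine function at $s_j$; the subtlety is that $g_j$ may be nondifferentiable, so the ``gradient'' is a subgradient lying in the interval $[\partial_- g_j(s_j), \partial_+ g_j(s_j)]$. Concretely, by convexity, $s$ is optimal if and only if the directional derivative of the objective is nonnegative along every feasible direction, i.e., along every vector in the tangent cone of $B(F)$ at $s$.

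First I would write down the first-order optimality condition: $s \in B(F)$ is a minimizer of $\sum_j g_j(s_j)$ if and only if for every $y$ in the tangent cone $T_s B(F)$, the one-sided directional derivative $\sum_j g_j'(s_j; y_j) \geqslant 0$, where $g_j'(s_j; y_j) = y_j \partial_+ g_j(s_j)$ if $y_j \geqslant 0$ and $y_j \partial_- g_j(s_j)$ if $y_j < 0$. This is standard convex analysis (the objective is convex, $B(F)$ is convex, so local directional optimality in all feasible directions is equivalent to global optimality). Next I would invoke Prop.~\ref{prop:tangent}, which states that $T_s B(F)$ is generated as a conic hull by the vectors $\delta_k - \delta_q$ over all exchangeable pairs $(k,q)$. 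Because the directional derivative is positively homogeneous and subadditive in the direction, and because it is linear in $y_j$ on each sign-orthant, it suffices to check nonnegativity on the generating vectors $\delta_k - \delta_q$: along such a direction only coordinates $k$ and $q$ move, coordinate $k$ increases (so contributes $\partial_+ g_k(s_k)$) and coordinate $q$ decreases (so contributes $-\partial_- g_q(s_q)$), giving the condition $\partial_+ g_k(s_k) - \partial_- g_q(s_q) \geqslant 0$, i.e., $\partial_+ g_k(s_k) \geqslant \partial_- g_q(s_q)$.

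The one genuinely careful point — and the step I expect to be the main obstacle — is justifying that checking the directional derivative on the generators of the tangent cone suffices, given that the directional derivative $y \mapsto \sum_j g_j'(s_j; y_j)$ is sublinear but \emph{not} linear (it is only piecewise linear across the orthant boundaries). The clean way around this is: if $\partial_+ g_k(s_k) \geqslant \partial_- g_q(s_q)$ holds for all exchangeable pairs, then define the weight vector $w$ by $w_j = $ any fixed element of $[\partial_- g_j(s_j), \partial_+ g_j(s_j)]$ chosen so that $w_k \leqslant w_q$ for all exchangeable pairs (this is possible precisely because of the inequality $\partial_+ g_k(s_k) \geqslant \partial_- g_q(s_q)$, which says the intervals can be ``separated'' consistently across all exchangeable pairs — one can take $w_j = \partial_+ g_j(s_j)$ or argue via the lattice structure of tight sets); then Prop.~\ref{prop:optsupport} says $s$ maximizes $t \mapsto w^\top t$ over $t \in B(F)$, i.e., $w^\top(t - s) \leqslant 0$ for all $t \in B(F)$. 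Since $w_j \in [\partial_- g_j(s_j), \partial_+ g_j(s_j)] = \partial g_j(s_j)$, the vector $w$ is a subgradient of $\sum_j g_j(\cdot)$ at $s$, so $\sum_j g_j(t_j) \geqslant \sum_j g_j(s_j) + w^\top(t-s) \geqslant \sum_j g_j(s_j)$, proving optimality. The converse direction follows by the tangent-cone argument above: if some exchangeable pair violated the inequality, then $y = \delta_k - \delta_q$ is a feasible direction with strictly negative directional derivative, contradicting optimality. The delicate bookkeeping is checking that a single consistent choice of $w_j \in \partial g_j(s_j)$ with $w_k \leqslant w_q$ on all exchangeable pairs exists, which ultimately rests on the fact that the exchangeability relation is transitive on chains of tight sets (a consequence of Prop.~\ref{prop:lattice} and Prop.~\ref{prop:pairs}).
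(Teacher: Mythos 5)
Your overall architecture is the same as the paper's (whose proof is literally ``immediate from Prop.~\ref{prop:tangent}''), and you are right that the sufficiency direction is \emph{not} immediate: the directional derivative $y \mapsto \sum_j g_j'(s_j;y_j)$ is only sublinear, so nonnegativity on the generators $\delta_k-\delta_q$ of the tangent cone does not by itself yield nonnegativity on their conic hull. Your necessity argument (a violated pair gives a feasible descent direction $\delta_k-\delta_q$) is correct.

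However, your patch for sufficiency has a sign error that breaks the final chain. If you pick $w_j \in \partial g_j(s_j)$ with $w_k \leqslant w_q$ on exchangeable pairs, then Prop.~\ref{prop:optsupport} makes $s$ a \emph{maximizer} of $t \mapsto w^\top t$, hence $w^\top(t-s)\leqslant 0$, and your concluding display $\sum_j g_j(t_j) \geqslant \sum_j g_j(s_j) + w^\top(t-s) \geqslant \sum_j g_j(s_j)$ fails at the second inequality. What you actually need is a subgradient selection $w_j \in [\partial_- g_j(s_j),\partial_+ g_j(s_j)]$ with $w_k \geqslant w_q$ on exchangeable pairs, so that $s$ \emph{minimizes} $w^\top t$ over $B(F)$ (equivalently, maximizes $(-w)^\top t$, which is what Prop.~\ref{prop:optsupport} certifies when $-w_k \leqslant -w_q$); then $w^\top(t-s)\geqslant 0$ and the subgradient inequality closes the proof. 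The hypothesis $\partial_+ g_k(s_k)\geqslant \partial_- g_q(s_q)$ is calibrated exactly for this orientation, not for $w_k\leqslant w_q$. Also, your parenthetical candidate $w_j=\partial_+ g_j(s_j)$ does not satisfy the required monotonicity even after the sign is fixed (the hypothesis compares $\partial_+ g_k$ with $\partial_- g_q$, not with $\partial_+ g_q$). A selection that does work is $w_j = \max_{k \in {\rm Dep}(s,j)} \partial_- g_k(s_k)$: it lies in $[\partial_- g_j(s_j),\partial_+ g_j(s_j)]$ by the hypothesis applied to the maximizing $k$, and it satisfies $w_k \geqslant w_q$ for $q\in{\rm Dep}(s,k)$ because ${\rm Dep}(s,q)\subset{\rm Dep}(s,k)$ (the lattice property of tight sets, Prop.~\ref{prop:lattice}, gives this transitivity). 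With that correction your argument is complete and indeed more careful than the paper's one-line proof.
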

\begin{proof} This is immediate from Prop.~\ref{prop:tangent} related to the tangent cone of $B(F)$.
 \end{proof}

We can give an alternative description of optimality conditions based on Prop.~\ref{prop:optsupporttight}, which we give only for differentiable functions for simplicity.

\begin{proposition}[Alternative optimality conditions for separable optimization]
\label{prop:optsepalt}
 Let $g_j$ be \emph{differentiable} convex functions on $\rb$, $j=1,\dots,p$. Let $s \in B(F)$  and $w \in \rb^p$ defined as $\forall k \in V, w_k = g_k'(s_k)$; define $B(\alpha) = \{ w \leqslant \alpha\}$ for $\alpha \in \rb$. Then, $s$ is a minimizer of $\sum_{j\in V} g_j(s_j)$ over $s \in B(F)$ if and only if for all $\alpha \in \rb$, the sets $B(\alpha)$ are tight. 
   \end{proposition}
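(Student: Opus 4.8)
The plan is to reduce everything to Proposition~\ref{prop:optsupporttight} via the characterization of maximizers of the support function of $B(F)$. First I would set up the connection between separable minimization and linear optimization: since each $g_j$ is differentiable and convex, the standard first-order optimality condition for $\min_{s \in B(F)} \sum_j g_j(s_j)$ is that the gradient $w$, with $w_k = g_k'(s_k)$, satisfies $\langle w, t - s \rangle \geqslant 0$ for all $t \in B(F)$; equivalently, $-w$ defines a supporting hyperplane at $s$, i.e., $s$ is a maximizer of $\max_{t \in B(F)} (-w)^\top t$. (One should be slightly careful about signs; the condition is that $s$ maximizes $\max_{t\in B(F)} \langle -w, t\rangle$, or what amounts to the same thing, $s$ minimizes $\langle w, t\rangle$ over $B(F)$.)

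Next I would invoke Proposition~\ref{prop:optsupporttight} applied to the weight vector $-w$. That proposition says $s$ maximizes $(-w)^\top t$ over $B(F)$ if and only if, writing the distinct values of $-w$ in decreasing order as $v_1 > \cdots > v_m$ taken on sets $A_1, \dots, A_m$, we have $s(A_1 \cup \cdots \cup A_i) = F(A_1 \cup \cdots \cup A_i)$ for all $i$. Now the distinct values of $-w$ in decreasing order correspond exactly to the distinct values of $w$ in \emph{increasing} order, so the nested sets $A_1 \cup \cdots \cup A_i$ are precisely the sublevel sets $\{k \in V : w_k \leqslant \alpha\} = B(\alpha)$ as $\alpha$ ranges over the (finitely many) values of $w$. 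So the optimality condition becomes: $B(\alpha)$ is tight for every $\alpha$ equal to some component value of $w$. It remains to observe that for any other $\alpha$, $B(\alpha)$ coincides with $B(\alpha')$ for $\alpha'$ the largest component value of $w$ not exceeding $\alpha$ (or $B(\alpha) = \varnothing$, which is trivially tight since $F(\varnothing) = 0$), so ``$B(\alpha)$ tight for all $\alpha \in \rb$'' is equivalent to the finite list of conditions, giving the stated result.

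The main obstacle, such as it is, is bookkeeping with signs and orderings: one must translate cleanly between ``maximize $\langle -w, \cdot\rangle$'' and ``minimize $\langle w, \cdot\rangle$'', and between decreasing-order superlevel sets of $-w$ (as in Prop.~\ref{prop:optsupporttight}) and increasing-order sublevel sets of $w$ (as in the statement); getting the nesting direction right is where an error could creep in. A secondary minor point is handling the degenerate cases $B(\alpha) = \varnothing$ and $B(\alpha) = V$ (the latter is automatically tight for any $s \in B(F)$ by definition of the base polyhedron), and noting that only finitely many distinct sets $B(\alpha)$ arise, so quantifying over all real $\alpha$ costs nothing beyond the finite condition. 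Once those are pinned down, the proof is essentially a one-line appeal to Prop.~\ref{prop:optsupporttight} together with the elementary fact that a convex differentiable $g = \sum_j g_j$ is minimized over the polytope $B(F)$ at $s$ iff $\nabla g(s)$ lies in the normal cone at $s$.
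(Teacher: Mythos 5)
Your proof is correct and follows essentially the same route as the paper: reduce the separable minimization to the linear problem $\min_{t \in B(F)} w^\top t$ via the first-order optimality condition, then apply Prop.~\ref{prop:optsupporttight} (with the sign flipped so that the superlevel sets of $-w$ become the sublevel sets $B(\alpha)$ of $w$), noting that only the finitely many values taken by $w$ matter. The paper's proof is exactly this argument, stated more tersely; your extra care with the sign/ordering bookkeeping and the degenerate sets $\varnothing$ and $V$ is consistent with it.
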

\begin{proof} Note that the condition has to be checked only for $\alpha$ belonging to the of values taken by $w$.
We consider the unique values $v_1 < \cdots < v_m $, taken at sets $A_1,\dots,A_m$ (i.e., $V = A_1 \cup \cdots \cup A_m$ and $\forall k \in A_i, \ w_k = v_i$). The condition then becomes that all $B_i = A_1\cup \dots \cup A_i$ are tight for $s$.
This  is immediate from    Prop.~\ref{prop:optsupporttight}. Indeed, $s$ is optimal if and only if $s$ is optimal for the problem $\min_{s \in B(F)} w^\top s$.
 \end{proof}

\subsection{Lexicographically optimal bases}
We can give another interpretation to optimality conditions in Prop.~\ref{prop:optsep}. Given a vector $s \in \rb^p$, we denote by $T(s) \in \rb^p$, the sequence of components of $s$ in order of increasing magnitude. That is, if $s_{j_1} \leqslant s_{j_2} \leqslant \cdots \leqslant s_{j_p}$, then $T(s)=(s_{j_1},\dots,s_{j_p})$. Given two vectors $s$ and $s'$ in $\rb^p$, $s$ is said lexicographically greater than or equal to $s'$, if either (a) $s=s'$, or, (b) $s\neq s'$, and for the minimum index $i$ such that $s_i \neq s_i'$, then $s_i \geqslant s_i'$.

We now show that finding a base $s \in B(F)$ that lexicographically maximizes the ordered vector of derivatives $g_k'(s_k)$ is equivalent to minimizing $\sum_{k \in V} g_k(s_k)$ over the base polyhedron. Many algorithms for proximal problems are in fact often cast as maximization for such lexicographical orders~(see, e.g.~\cite{megiddo1974optimal}).

\begin{proposition}[Lexicographically optimal base]
 Let $g_j$ be differentiable strictly convex functions on $\rb$, $j=1,\dots,p$. Then $s \in B(F)$ lexicographically maximizes the vector $T(g'(s)) = T[ (g_1'(s_1),\dots,g_p'(s_p)) ]$ over $s \in B(F)$ if and only if $s$ is a minimizer of $\sum_{k \in V} g_k(s_k)$ over the base polyhedron $B(F)$.
 \end{proposition}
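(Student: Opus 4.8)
The plan is to prove the equivalence by relating the lexicographic maximality condition to the local optimality condition of Proposition~\ref{prop:optsep}, using the machinery of exchangeable pairs. Throughout, write $w_k = g_k'(s_k)$; since each $g_k$ is differentiable and strictly convex, $g_k'$ is a strictly increasing continuous bijection onto its range, so the map $s \mapsto w$ (on each coordinate) is a bijection and preserves/reverses nothing unexpected: in particular $s_k \mapsto g_k'(s_k)$ is order-preserving in $s_k$. Because $\sum_k g_k$ is strictly convex and $B(F)$ is a nonempty compact convex set (Prop.~\ref{prop:greedy}), the minimizer of $\sum_k g_k(s_k)$ over $B(F)$ is unique; I will also note (or cite as standard) that the lexicographic maximizer of $T(g'(s))$ over $B(F)$ is unique, so it suffices to show the two unique points coincide.

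First I would establish the easy direction: if $s$ lexicographically maximizes $T(g'(s))$, then $s$ satisfies the optimality condition of Prop.~\ref{prop:optsep}, hence is the minimizer of $\sum_k g_k(s_k)$. Suppose not; then there is an exchangeable pair $(k,q)$ — i.e., $q \in \mathrm{Dep}(s,k)$ — with $g_k'(s_k) > g_q'(s_q)$, that is $w_k > w_q$. By the proof of Prop.~\ref{prop:optsupport}/Prop.~\ref{prop:tangent}, the vector $\delta_k - \delta_q$ lies in the tangent cone of $B(F)$ at $s$, so $s(\varepsilon) = s + \varepsilon(\delta_k - \delta_q) \in B(F)$ for small $\varepsilon > 0$. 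Moving from $s$ to $s(\varepsilon)$ increases $s_k$ (hence increases $w_k = g_k'(s_k)$), decreases $s_q$ (hence decreases $w_q$), and leaves all other coordinates of $g'(\cdot)$ unchanged. Since before the move $w_q < w_k$, for $\varepsilon$ small the smallest coordinate that changes in the sorted vector $T(g'(\cdot))$ is the slot occupied by the value $w_q$ (or values $\leqslant w_q$ are untouched, and the $w_q$-slot strictly increases), so $T(g'(s(\varepsilon)))$ is lexicographically strictly greater than $T(g'(s))$ — contradicting lexicographic maximality. Hence the condition of Prop.~\ref{prop:optsep} holds. The one technical point to handle carefully here is the sorting bookkeeping: I would argue that among all coordinates whose $g'$-value strictly changes, $q$ has the (weakly) smallest value and only increases, so no coordinate strictly below it in the order is disturbed.

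Next the converse: if $s$ is the minimizer of $\sum_k g_k(s_k)$ over $B(F)$, then $s$ lexicographically maximizes $T(g'(s))$. Here I would argue directly. Let $v_1 < \cdots < v_m$ be the distinct values of $w = g'(s)$, taken on the sets $A_1, \dots, A_m$, and let $B_i = A_1 \cup \cdots \cup A_i$. By Prop.~\ref{prop:optsep} applied to all exchangeable pairs (equivalently, by the argument in Prop.~\ref{prop:optsepalt}), each $B_i$ is tight: $s(B_i) = F(B_i)$. Now take any other base $t \in B(F)$ and suppose, for contradiction, that $T(g'(t))$ is lexicographically $\geqslant T(g'(s))$ but $t \neq s$. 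Consider the smallest value level: the sorted vector's first block corresponds to the coordinates with smallest $w$-value, i.e., $A_1$, with value $v_1$ on $|A_1|$ slots. Since $s(A_1) = F(A_1)$ and $t(A_1) \leqslant F(A_1)$ with $t \in B(F)$, we get $\sum_{k \in A_1} t_k \leqslant \sum_{k \in A_1} s_k$; combined with the lexicographic inequality on the bottom block (each of the smallest $|A_1|$ entries of $T(g'(t))$ is $\geqslant$ the corresponding entry $v_1$ of $T(g'(s))$), and using that $g_k'$ is increasing, one forces $t_k = s_k$ for all $k \in A_1$ and that the bottom block of $T(g'(t))$ is exactly $v_1$ on $|A_1|$ slots. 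Then I would iterate: having $t = s$ on $B_{i-1}$ and $s(B_{i-1}) = t(B_{i-1}) = F(B_{i-1})$, the same argument applied to the contracted/restricted problem on $V \setminus B_{i-1}$ (using that $A_i$ is the next-smallest level and $s(B_i) = F(B_i)$, so $t(B_i) \leqslant F(B_i) = s(B_i)$ gives $\sum_{k \in A_i} t_k \leqslant \sum_{k \in A_i} s_k$) forces $t = s$ on $A_i$. By induction $t = s$, so $s$ is the (unique) lexicographic maximizer.

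The main obstacle I anticipate is the converse direction, specifically making rigorous the inductive "peeling" argument: turning the tightness of the nested sets $B_i$ together with a lexicographic domination hypothesis into the coordinatewise equalities $t_k = s_k$ on each level set $A_i$. The subtlety is that lexicographic domination of sorted vectors is a global statement about order statistics, not an immediately coordinatewise one, so one must argue that the bottom $|A_1|$ order statistics of $g'(t)$ cannot all be $\geqslant v_1$ unless they are all exactly $v_1$ and occur precisely on $A_1$ — this uses the sum constraint $\sum_{A_1} t_k \leqslant \sum_{A_1} s_k = |A_1| \cdot (g')^{-1}$-type bound together with monotonicity of each $g_k'$, plus strict convexity to get uniqueness of the value achieving a given derivative. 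Everything else is routine once the exchangeable-pair tangent-cone description (Prop.~\ref{prop:tangent}) and the tightness-of-nested-sets characterization (Prop.~\ref{prop:optsepalt}) are in hand.
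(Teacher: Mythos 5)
Your overall strategy is the same as the paper's: one direction via a perturbation along an exchangeable direction $\delta_k - \delta_q$ combined with Prop.~\ref{prop:optsep}, the other via the ``peeling'' induction on the nested tight sets $B_i = A_1 \cup \cdots \cup A_i$ coming from Prop.~\ref{prop:optsupporttight}. The converse direction is essentially the paper's own proof, and your sketch of it is sound; the order-statistics bookkeeping you flag as the main subtlety is indeed the only delicate point (and the paper glosses over it at least as much as you do).

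The first direction, however, contains a sign error that makes the step fail as written. The optimality condition of Prop.~\ref{prop:optsep} for an exchangeable pair $(k,q)$ with $q \in {\rm Dep}(s,k)$ is $g_k'(s_k) \geqslant g_q'(s_q)$; its negation is therefore $w_k = g_k'(s_k) < g_q'(s_q) = w_q$, not $w_k > w_q$ as you wrote. With your configuration $w_k > w_q$, the feasible perturbation $s(\varepsilon) = s + \varepsilon(\delta_k - \delta_q)$ increases $w_k$ and decreases $w_q$; since $w_q$ is the smaller of the two changed values and it \emph{decreases} (you note yourself that $w_q$ decreases, yet two lines later assert that ``the $w_q$-slot strictly increases''), the sorted vector $T(g'(s(\varepsilon)))$ is lexicographically \emph{smaller} than $T(g'(s))$, and no contradiction with lexicographic maximality is obtained --- indeed $w_k > w_q$ is not a violation of optimality at all. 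The fix is immediate: take the true violation $w_k < w_q$; then the same perturbation strictly increases the smaller changed value $w_k$ while the larger value $w_q$ stays above it for $\varepsilon$ small and all values below $w_k$ are untouched, so $T(g'(s(\varepsilon)))$ is lexicographically strictly greater than $T(g'(s))$, contradicting maximality. With that correction your argument coincides with the paper's.
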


\begin{proof}
First assume that  $s \in B(F)$ lexicographically maximizes the vector $T(g'(s)) = T[ (g_1'(s_1),\dots,g_p'(s_p)) ]$ over $s \in B(F)$. Then, for any exchangeable pair $(k,q)$ associated with $s$, we have that $t =s + \alpha (\delta_k - \delta_{q}) \in B(F)$ for $\alpha$ sufficiently small (from Prop.~\ref{prop:tangent}). Moreover,  all components $g'_j(s_j)$ are unchanged, except the $k$-th and $q$-th position, fo which we have $g'_k(t_k) >g'_k(s_k)$ and $g'_{q}(t_{q}) < g'_{q}(s_{q})$.
Thus, if $g_k'(s_k) <  g'_{q}(s_{q})$, $T(g'(t))$ is lexicographically strictly greater than $T(g'(s))$, which is a contradiction. This implies that for all exchangeable pairs, $g_k'(s_k) \geqslant  g'_{q}(s_{q})$, which implies, by Prop.~\ref{prop:optsep} that $s$ is indeed a minimizer.

Let now $s$ be a minimizer of $\sum_{k \in V} g_k(s_k)$ over the base polyhedron $B(F)$. Let $t$ be a base in $B(F)$ such that $ T(g'(t))$   is lexicographically greater than or equal to $ T(g'(s))$. We consider $v=g'(t) \in \rb^p$ and $w=g'(s) \in \rb^p$. We denote by $w_{B_1} < \cdots < w_{B_m}$ the $m$ distinct values of $w \in \rb^p$, taken on the subsets $A_j$, $j=1,\dots,m$. From Prop.~\ref{prop:optsupporttight}, the sets $B_j = A_1 \cup \cdots \cup A_j$ are tight for $s$. We show by induction on $j$ that for  $k \in B_j$, $s_k = t_k$, which will show that we must have $s=t$, and thus that $T(g'(s))$ is lexicographically optimal.

This is true for $j=0$, and if we assume it is true for $j$, then, since $T(v)$ is lexicographically greater than or equal to $T(w)$, we have for all $k \in A_{j+1}$, $v_k \geqslant w_k$ (since all the smaller ones are equal by the induction assumption), which implies, by strict convexity of $g_k$ that $t_k \geqslant s_k$. Moreover, since $B_{j+1}$ is tight, we
have
$F(B_{j+1}) \geqslant t(B_{j+1}) \geqslant s(B_{j+1}) = F(B_{j+1})$, which implies that $t_k=s_k$ for $k \in A_{j+1}$.

\end{proof}

\subsection{Optimization for proximal problems}
We can now obtain from the base polyhedron perspective the previous results linking problems in \eq{prox} and \eq{proxalpha}, i.e., give an alternative proof of Prop.~\ref{prop:subfromprox} from \mysec{prox}.

Indeed, from Prop.~\ref{prop:optsep}, $s$ is optimal for $\max_{s \in B(F)}   - \sum_{j=1}^p \psi_j^\ast(-s_j)$ if and only if for all exchangeable pairs $(k,q)$ for $s$, $(\psi_k^\ast)'(-s_k) \leqslant 
(\psi_{q}^\ast)'(-s_{q})$. If we denote $w_k = (\psi_k^\ast)'(-s_k)$ (which is equivalent to $s_k = - \psi_k'(w_k)$), then $s$ is optimal if $w_k \leqslant w_{q}$ for all exchangeable pairs $(k,q)$.

Let $\alpha \in \rb$, we consider the optimization problem
\BEQ
\label{eq:proxalphadual}
\max_{s \in B(F) } \sum_{k \in V} ( s_k + \psi_k'(\alpha) )_- =
 ( s  + \psi'(\alpha) )_-(V).
\EEQ

From Prop.~\ref{prop:optsep} and the fact that the right-derivative of $s_k \mapsto  ( s_k + \psi_k'(\alpha) )_-$ is $-1$ for $s_k < \psi_k'(\alpha)$ and zero otherwise, and its left-derivative of $s_k \mapsto  ( s_k + \psi_k'(\alpha) )_-$ is $-1$ for $s_k \leqslant - \psi_k'(\alpha)$ and zero otherwise, 
 $s$ is optimal if and only if for all exchangeable pairs $(k,q)$ for $s$, we have
 $1_{ \{ s_k < - \psi_k'(\alpha) \}} \leqslant  1_{ \{ s_{q} \leqslant - \psi'_{q}(\alpha) \}}$, which is equivalent to the fact that $s_k<- \psi_{k}'(\alpha)$ implies that $s_{q} \leqslant - \psi_{q}'(\alpha)$.
 
 If $s$ is optimal for \eq{proxdual}, then $(\psi_k^\ast)'(-s_k) \leqslant 
(\psi_{q}^\ast)'(-s_{q})$ for all exchangeable pairs.
  Thus, if $s$ is optimal for \eq{proxdual}, then $s$ is optimal for the maximization of \eq{proxalphadual} for all $\alpha \in \rb$.

Finally, from Prop.~\ref{prop:dualmin}, solving \eq{proxalphadual} is equivalent to minimizing the submodular function $F+ \psi'(\alpha)$, which is exactly \eq{proxalpha}. Also, from Prop.~\ref{prop:dualmin}, we have that any optimal $A^\alpha$ satisfies $ \{ s + \psi'(\alpha) <0 \} \subset A^\alpha \subset \{ s + \psi'(\alpha) \leqslant 0 \} $. Moreover, since at the optimum, $w_k + \psi_k'(s_k)=0$, we thus have $ s_k + \psi'_k(\alpha) <0$ if and only if
$w_k > \alpha$, and $ s_k + \psi'_k(\alpha) \leqslant 0$ if and only if
$w_k \geqslant \alpha$. We thus get back Prop.~\ref{prop:subfromprox}.

\section{Submodular function minimization}
\label{sec:sfm}

Several generic algorithms may be used for the minimization of a submodular function. They are all based on a sequence of evaluations of $F(A)$ for certain subsets $ A \subset V$. For specific functions, such as the ones defined from cuts, faster algorithms exist (see, e.g., \cite{gallo1989fast,hochbaum2001efficient} and \mysec{cuts}).

Note that maximizing submodular functions is a hard combinatorial problem in general. However, when maximizing a non-decreasing submodular function under a cardinality constraint, the simple greedy method allows to obtain a $(1-1/e)$-approximation~\cite{nemhauser1978analysis}.

In this section, we first review classical approaches for submodular function minimization. The first approach presented in \mysec{minnorm} is the most efficient in practice, but has no complexity bound. We briefly mention in \mysec{comb} existing combinatorial algorithms with theoretical complexity bounds, but these are not used in practice. In \mysec{posi}, we consider certain submodular functions, so-called posimodular functions, for which simple combinatorial algorithms exist with better complexity.

We then present algorithms which are based on a sequence of submodular function minimization, and that can be used for problems such as line search in the submodular polyhedron  or proximal problems.

\subsection{Minimum-norm point algorithm}
\label{sec:minnorm}
From \eq{proxalpha} or Prop.~\ref{prop:subfromprox}, we obtain that if we know how to minimize $f(w) + \frac{1}{2} \| w\|_2^2$, or equivalently, minimize $\frac{1}{2} \| s\|_2^2$ such that $s \in B(F)$, then we get all minimizers of $F$ from the negative components of $s$.

The minimum-norm point algorithm computes the minimum of $\| s\|_2^2$ for $s \in B(F)$. It uses an old algorithm from~\cite{wolfe1976finding} that will find a minimum-norm base $s \in B(F)$ in a finite number of steps.  This is made possible by the fact that we know how to efficiently maximize linear functions over $B(F)$, where solutions are obtained by the greedy algorithm from Prop.~\ref{prop:greedy}. 

The complexity of each step of the algorithm is essentially $O(p)$ function evaluations and operations of order $O(p^3)$. However, there are no known upper bounds on the number of iterations.

Note that
once we know which values of the optimum values $s$ should be equal, greater or smaller, then, we obtain in closed form all values.
Indeed, let $c_1 < c_2 < \cdots < c_m$ the $m$ different values taken by $s$ (or $w$), and $A_i$ the corresponding sets such that $w_{k} = c_j$ for $k \in A_j$. We then have:
$$
c_j = \frac{ f( A_1 \cup \cdots \cup A_{j}) -  f( A_1 \cup \cdots \cup A_{j-1}) }{ | A_{j}| }
$$
which allows to compute the values $c_j$ knowing only the sets $A_j$.

\subsection{Combinatorial algorithms}
\label{sec:comb}
Algorithms are based on Prop.~\ref{prop:dualmin}, i.e., on the identity $\min_{A\subset V} F(A) = \max_{ s \in B(F) } s_-(V)$.  Combinatorial algorithms will usually output the subset $A$ and a base $s \in B(F)$ such that $A$ is tight for $s$ and $\{ s < 0 \} \subset A \subset \{ s \leqslant 0\}$, as a certificate of optimality.

Most algorithms, will also output the largest   minimizer $A$ of $F$, or sometimes describe the entire lattice of minimizers.
Best algorithms have polynomial complexity~\cite{schrijver2000combinatorial,iwata2001combinatorial,orlin2009faster}, but still have high complexity (typically $O(p^6)$ or more).

\subsection{Minimizing posimodular functions}
\label{sec:posi}
A submodular function $F$ is said symmetric if for all $B \subset V$, $F(V \backslash B)  = F(B)$. By applying submodularity, get that $2 F(B) = F(V \backslash B)  +  F(B) \geqslant F(V) + F( \varnothing) = 2 F(\varnothing) = 0$, which implies that $F$ is non-negative. Hence its global minimum is attained at $V$ and $\varnothing$.

Such functions can be minimized in time $O(p^3)$ over all \emph{non-trivial} (i.e., different from $\varnothing$ and $V$) subsets of $V$~\cite{queyranne1998minimizing}. Moreover, the algorithm is valid for the regular minimization of \emph{posimodular} functions~\cite{nagamochi1998note}, i.e., of functions that satisfies
$$
\forall A,B \subset V, \ F(A) + F(B) \geqslant F( A \backslash B) +  F( B \backslash A).
$$
These include symmetric submodular functions as well as modular functions, and hence the sum of any of those (in particular, cuts with sinks and sources, as presented  in \mysec{cuts}).

\subsection{Line search in submodular polyhedron}
\label{sec:line}

The general line search problem in the submodular polyhedron amounts to start from $s \in P(F)$ and search on the direction $t \in \rb^p$, i.e., find the maximal $\lambda \geqslant 0$ such that $s+ \lambda t \in P(F)$, which is equivalent to $\lambda t \in P(F - s)$. Note that since $s \in P(F)$, $F-s$ is submodular and non-negative.

We thus now assume that $F$ is non-negative and that $s=0$.
Given $t \in \rb^p$, we consider the problem of finding the largest $\lambda \geqslant 0$ such that 
$\lambda t \in P(F)$. We denote by $\mu$ the optimal value (which is finite, as soon as there is at least one $t_k >0$, which we assume). We have
$\lambda \leqslant \mu$ if and only if $g(\lambda) = \min_{A \subset V} F(A) - \lambda t(A) \geqslant 0$. More precisely,  $g(\lambda) \geqslant 0$ if and only if $\lambda t\in P(F)$. Moreover, $g(0)=0$ and $g$ is non-increasing, which implies that $g$ is zero on $[0,\mu]$ and then strictly negative.

We thus need to find the zero of the function $g(\lambda)$, which is piecewise affine. This can be done with the secant method, once we have a $\lambda>0$ such that $g(\lambda)< 0$. Such a $\lambda$ can be obtained by noting that $P(F)$ is included in $\{ s, \forall k \in V, s_k \leqslant F(\{k\})\}$, which implies that if $\lambda > \min_{k \in V} \frac{F(\{k\})}{t_k}$, then $g(\lambda)<0$. 

The secant method is simply starting with a $\lambda$ such that $g(\lambda)>0$, and then find the minimizer $A$ in the definition of $g(\lambda)$, and set $\lambda = F(A) / t(A)$, and start again in $g(\lambda)<0$ (see~\cite{nagano2007strongly} for more details). Note that if the minimum-norm point algorithm is used for submodular function minimization, then we obtain instead a minimizer
of $w \mapsto f(w)  - \lambda w^\top t + \frac{1}{2} \| w\|_2^2$, and we can also update $\lambda$ as
$\lambda = ( f(w) + \frac{1}{2} \| w\|_2^2 )  / ( w^\top t)$.

\subsection{Homotopy method  for proximal problems}
\label{sec:proxcomb}
We review in \mysec{proxcomb} and \mysec{decomp} two strategies for maximizing separable concave functions on the base polyhedron. One strategy is based on the equivalence with the sequence of minimizations of submodular functions (Prop.~\ref{prop:proxmin}). The other one is based on a decomposition strategy.

 The first method is based on the fact that if $\alpha$ is large enough, then $A^\alpha = \varnothing$ is optimum for \eq{proxalpha}. From Prop.~{prop:dualmin}, this is valid as long as $0 \in P(F +\psi'(\alpha))$, i.e., $-\psi'(\alpha) \in P(F)$. The minimum $\alpha \in \rb$ such that this is valid can be obtained by line search.
 
 Once the minimal $\alpha$ is found, and $A$ is the maximal tight set associated with $-\psi'(\alpha)$, then if $A = V$, $w = \alpha 1_V$. Otherwise, we let $w_A = \alpha 1_A$, and in order to determine $w_{V \backslash A}$ we recursively apply the same procedure to the function $F_{V \backslash A}: 2^{V \backslash A} \to \rb$, defined as $F_{V \backslash A}(B)= F(B)$ (i.e., restriction of $F$ to $V \backslash A$).
 
 This algorithm, adapted from~\cite{fujishige1980lexicographically} (see also~\cite[Sec.~9.2]{fujishige2005submodular}), requires to be able to find the minimum $\alpha$ such that $-\psi'(\alpha) \in P(F)$. This may be done as follows (same procedure as in \mysec{line}, but extended to non quadratic functions).

 Consider $g(\alpha) = \min_{A \subset V} F(A) + \psi'(\alpha)(A)$. The function is piecewise smooth and strictly increasing. It is equal to zero if and only if $-\psi'(\alpha) \in P(F)$, and it is strictly negative otherwise. We start with a point $\alpha_0$ such that $g(\alpha_0)<0$, we let $A_0$ be a minimizer in the definition of $g(\alpha_0)$. We find the unique $\alpha_1$ such that  $F(A) + \psi'(\alpha_1)(A_0)=0$ and we start again, until we have $g(\alpha_1)=0$.
 
 In order to find $\alpha_0$ such that $g(\alpha_0)<0$, we use the fact that $P(F) \subset \prod_{k \in V} (-\infty; F(\{k\})]$, and thus if there exists $k \in V$, $\psi'_k(\alpha)>-F(\{k\})$, then $-\psi'(\alpha) \notin P(F)$. We can thus consider $\alpha_0 = \min_{ k \in V} (\psi'_k)^{-1}(-F(\{k\}))$.

\subsection{Decomposition algorithm for proximal problems}
\label{sec:decomp}
We adapt the algorithm of~\cite{groenevelt1991two} and~\cite[Sec.~8.2]{fujishige2005submodular}. Note that it can be slightly modified for problems with non-decreasing submodular functions~\cite{groenevelt1991two} (see also \mysec{polym}).

For simplicity, we consider \emph{strictly convex differentiable} functions $g_j$, $j=1,\dots,p$, and the following algorithm:

\BNUM
\item Find the unique minimizer $t \in \rb^p$ of  $\sum_{j\in V} g_j(t_j)$ such that $t(V) = F(V)$.
\item Minimize the submodular function $F-t$, i.e., find the \emph{largest} $A \subset V$  that minimizes $F(A)-t(A)$.
\item If $A = V$, then $t$ is optimal. Exit.
\item Find a minimizer $s_A$ of $\sum_{j \in A} g_j(s_j)$ over $s$ in the base polyhedron associated to $F_A$,
the restriction of $F$ to  $A$.
\item Find a minimizer $s_{V \backslash A }$ of $\sum_{j \in V \backslash A} g_j(s_j)$ over $s$ in the base polyhedron associated to  the contraction $F^A$ of $F$ on A, defined as $F^A(B) = F(A \cup B) - F(A)$.
\item Concatenate $s_A$ and  $s_{V \backslash A }$. Exit.
\ENUM

The algorithm must stop after \emph{at most} $p$ iterations. Indeed, if $A\neq V$ in Step 3, then we must have $A \neq \varnothing$ (indeed, $A= \varnothing$ implies that $t \in P(F)$, which in turns implies that $A = V$ because by construction $t(V)=F(V)$, which leads to a contradiction). Thus we actually split $V$ into two non-trivial parts $A$ and $V \backslash A$.

We now  need to prove optimality. Let $s$ be the output of the algorithm. We first show that $s \in B(F)$. We have for any $B \subset V$:
\BEAS
s(B) & = & s(B \cap A) + s( B \cap ( V \backslash A) ) \\
&  \leqslant &   F(B \cap A) + F( A \cup B) - F(A) \mbox{ by definition of } s_A \mbox{ and } s_{V \backslash A }\\
& \leqslant & F(B) \mbox{ by submodularity}.
\EEAS
Thus $s$ is indeed in the submodular polyhedron $P(F)$. Moreover, we have
$s(V) = s_A(A) + s_{V \backslash A }(V \backslash A ) = F(A) + F(V) - F(A) = F(V)$, i.e., $s$ is in the base polyhedron $B(F)$.

 We now construct a second base $\bar{s} \in B(F)$ as follows: $\bar{s}_A$ is the minimizer of $\sum_{j \in A} g_j(s_j)$ over $s$ in the base polyhedron associated to the submodular polyhedron $P(F_A) \cap \{ s_A \leqslant t_A \}$. From Prop.~\ref{prop:conv}, the associated submodular function is
 $H_A(B) = \min_{C \subset B} F(C) + t( B \backslash C)$. We have
 $H_A(A) = \min_{C \subset A} F(C) - t(C)  + t( A ) = F(A)$ because $A$ is the largest minimizer of $F-t$. Thus, the base polyhedron associated with $H_A$ is simply $B(F_A) \cap \{ s_A \leqslant t_A \}$. Moreover, from Prop.~\ref{prop:conv}, we have that $H_A \leqslant F_A$, and thus if $s_A$ is tight for $F_A$ then $s_A$ is tight for $H_A$.

 Morover, we define $\bar{s}_{V \backslash A }$ as the minimizer of $\sum_{j \in V \backslash A} g_j(s_j)$ over the base polyhedron $B(J^A)$ where we define the submodular function $J^A$ on $V \backslash A$ as follows:
 $J^A(B)  = \min_{ C \supset B } F(C \cup A) - F(A) -t(C) + t(B)$. Then $J^A - t$ is non-decreasing and submodular (by Proposition~\ref{prop:monotone}). Moreover, $J^A(V \backslash A) = F(V) - F(A)$ and $J^A \leqslant F^A$. Finally
 $B(F^A) \cap \{ s_{V \backslash A }\geqslant t_{V \backslash A } \} = B(J^A)$ and thus if $s_A$ is tight for $F^A$ then $s_A$ is tight for $J^A$.

 We now show that $\bar{s}$ is optimal for the problem. Since $\bar{s}$ has a higher objective value than~$s$, the base $s$ will then be optimal as well. If we take an exchangeable pair $(k,q)$ for $\bar{s}$. Then, we have several cases (note that $A$ is tight for $\bar{s}$):
 \BIT
 \item $k \in A$, implies $q \in A$ (by Prop.~\ref{prop:pairs}, since $A$ is tight), and thus the optimality condition stems from the sub-problem on $A$ (since being tight for $F_A$ implies being tight for $H$)
 \item $  k \notin A$, $q\in A$, it comes from $\bar{s}_A \leqslant t_A$ and $\bar{s}_{V \backslash A }\geqslant t_{V \backslash A }$, which implies $g_k'(\bar{s}_k) \geqslant g_{q}'(\bar{s}_{q})$ (since all $g_k'(t_k)$ are equal by definition of $t$).
 \item $ k \notin A$, $q \notin A$, it comes from the optimality of the subproblem on $V \backslash A $, (since being tight for $F^A$ implies being tight for $J^A$).
 \EIT
 
 In all cases, for exchangeable pairs $(k,q)$, we have $g_k'(\bar{s}_k) \geqslant g'_{q}(\bar{s}_{q})$ and thus, by Prop.~\ref{prop:optsep}, $\bar{s}$ is optimal and hence $s$ is optimal. Note that we could also have used Prop~\ref{prop:optsepalt} to show optimality.
 
 Note finally that similar algorithms may be applied when we restrict $s$ to be integers (see, e.g.,~\cite{groenevelt1991two, hochbaum2001efficient}).

\section{Polymatroids (non-increasing submodular functions)}
\label{sec:polym}

When the submodular function $F$ is also \emph{non-decreasing}, i.e., when for $A, B \subset V$, $A \subset B \Rightarrow F(A) \leqslant F(B)$, then a truncated greedy algorithm may be applied for all linear functions (i.e., with potentially negative coefficients). Such non-decreasing and submodular functions are often referred to as  \emph{polymatroid set-functions}~\cite{fujishige2005submodular} or \emph{$\beta$-functions}~\cite{edmonds}. Note that in this situation, the \lova extension is non-decreasing with respect to all components, i.e., if $w \leqslant w'$, then $f(w) \leqslant f(w')$.

\begin{proposition}[Truncated greedy algorithm]
\label{prop:truncgreedy}
Assume $F$ is submodular and non-decreasing. Let $w \in \rb^p$; a maximizer of $\max_{ s \in P(F), \ s \geqslant 0 } w^\top s$ may be obtained by the following algorithm: order all the strictly positive components of $w$, as $w_{j_1} \geqslant \cdots \geqslant w_{j_m} >0 $ and define
$s_{j_k} =  F(\{j_1,\dots,j_k\}) - F(\{j_1,\dots,j_{k-1}\})$ for $k\leqslant m$, and zero otherwise. Moreover, 
 $\max_{ s \in P(F), \ s \geqslant 0 } w^\top s = f(w_+)$.
\end{proposition}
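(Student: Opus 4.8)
The plan is to sandwich $\max_{s \in P(F),\, s \geqslant 0} w^\top s$: first bound it above by $f(w_+)$ using Proposition~\ref{prop:greedy}, then exhibit the vector $s$ produced by the truncated greedy algorithm as a feasible point attaining this bound.

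For the upper bound, take any $s \in P(F)$ with $s \geqslant 0$. Then $w^\top s \leqslant w_+^\top s$, since on coordinates $k$ with $w_k \geqslant 0$ the two expressions agree, while on coordinates with $w_k < 0$ we have $w_k s_k \leqslant 0 = (w_+)_k s_k$ because $s_k \geqslant 0$. Since $w_+ \in \rb_+^p$, Proposition~\ref{prop:greedy} gives $w_+^\top s \leqslant \max_{s' \in P(F)} w_+^\top s' = f(w_+)$, so $\max_{s \in P(F),\, s \geqslant 0} w^\top s \leqslant f(w_+)$.

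For feasibility of the candidate $s$, I would complete the ordering $j_1,\dots,j_m$ of the indices with $w_j>0$ by appending the remaining indices $j_{m+1},\dots,j_p$ (those with $w_j \leqslant 0$) in arbitrary order, and consider the full greedy output $\hat s$ defined by $\hat s_{j_k} = F(\{j_1,\dots,j_k\}) - F(\{j_1,\dots,j_{k-1}\})$ for all $k=1,\dots,p$. The feasibility argument in the proof of Proposition~\ref{prop:greedy} (decomposing a set $A$ into integer intervals and applying submodularity) uses only the fixed ordering and submodularity, not the sign of $w$, so $\hat s \in P(F)$. Because $F$ is non-decreasing, each marginal $\hat s_{j_k} \geqslant 0$; hence the candidate $s$, which equals $\hat s$ on $\{j_1,\dots,j_m\}$ and is zero elsewhere, satisfies $0 \leqslant s \leqslant \hat s$. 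By Proposition~\ref{prop:nonemptyinterior}, $s \in P(F)$, and clearly $s \geqslant 0$.

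For the value, apply Eq.~(\ref{eq:lova1}) to $w_+$ using the ordering $j_1,\dots,j_p$, which is legitimate since $(w_+)_{j_1} \geqslant \cdots \geqslant (w_+)_{j_p}$: all terms indexed by $k > m$ drop out because $(w_+)_{j_k} = 0$ there, leaving $f(w_+) = \sum_{k=1}^m w_{j_k}\big[F(\{j_1,\dots,j_k\}) - F(\{j_1,\dots,j_{k-1}\})\big] = w^\top s$. Together with the upper bound, this shows $s$ is optimal and the optimal value equals $f(w_+)$. The proof is essentially routine once the reduction to Proposition~\ref{prop:greedy} is spotted; the only point requiring care is the observation that the feasibility half of that proposition's proof does not use $w \geqslant 0$ and therefore applies to the completed ordering used here.
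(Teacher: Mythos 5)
Your proof is correct, but it takes a different route from the paper's. The paper proves this by rerunning the Lagrangian duality argument of Prop.~\ref{prop:greedy} with the extra constraint $s \geqslant 0$: the dual feasibility condition $w_k = \sum_{A \ni k} \lambda_A$ relaxes to $w_k \leqslant \sum_{A \ni k} \lambda_A$, which the same multipliers satisfy even when some $w_k$ are nonpositive, and primal feasibility again follows from monotonicity. You instead avoid redoing the duality altogether: you get the upper bound $\max_{s \in P(F),\, s \geqslant 0} w^\top s \leqslant f(w_+)$ from the elementary pointwise inequality $w^\top s \leqslant w_+^\top s$ on the nonnegative orthant combined with Prop.~\ref{prop:greedy} applied to $w_+ \in \rb_+^p$ as a black box, and you certify the candidate by the downward closedness of $P(F)$ (Prop.~\ref{prop:nonemptyinterior}) applied to the full greedy vector for a completed ordering, whose entries are nonnegative by monotonicity. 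Your observation that the interval-decomposition feasibility argument in Prop.~\ref{prop:greedy} depends only on the chosen ordering and on submodularity, not on the sign of $w$, is exactly the point that makes this reduction legitimate. The trade-off: the paper's duality route also produces an explicit dual certificate (the $\lambda_A$'s), which is reused elsewhere (e.g., in Prop.~\ref{prop:optsupporttightSUB}) to characterize all maximizers, whereas your argument is shorter and more modular but yields only the optimal value and one maximizer.
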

\begin{proof}
The proof is similar to that of Prop.~\ref{prop:greedy}. The constraint $w_k = \sum_{A \ni k} \lambda_A$ is simply replaced
by $w_k  \leqslant \sum_{A \ni k} \lambda_A$ (because of the new constraint $s \geqslant 0$). The vector $s$ is then feasible because of the monotonicity of $F$.
\end{proof}

We can also specialize several other results to polymatroids. In this setting, it is easy to see that the base polyhedron $B(F)$ is included in positive orthant $\rb_+^p$ (this is for example a consequence of the greedy algorithm from Prop.~\ref{prop:greedy}). However, $P(F)$ is not included in the positive orthant, and it is common to consider the positive polyhedron
$$
P_+(F) = P(F) \cap \rb_+^p = \{ s \geqslant 0, \ \forall A \subset V, s(A) \leqslant F(A) \},
$$
which is compact (while $P(F)$ is never, as it is unbounded).

We now extend Prop.~\ref{prop:optsupporttight} and Prop.~\ref{prop:optsupporttightSUB} related to support functions, to the independence polyhedron $P_+(F)$, as well as proposition Prop.~\ref{prop:faces}, related to faces of the polyhedron.

\begin{proposition}[Maximizers of the support function of independence polyhedron]
\label{prop:optsupporttight_indep}
Let $F$ be a non-decreasing submodular function such that $F(\varnothing)=0$. Let $w \in \rb^p$, with unique values $v_1 > \cdots > v_m$, taken at sets $A_1,\dots,A_m$. Then $s$ is optimal for $\max_{s \in P_+(F)} w^\top s$ if and only if for all $i=1,\dots,m$, $v_i < 0 \Rightarrow s_{A_i}=0$, and $v_i \geqslant 0 \Rightarrow s(A_1 \cup \cdots \cup A_i) = F(A_1 \cup \cdots \cup A_i)$.
 \end{proposition}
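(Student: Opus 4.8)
The plan is to mimic the proof of Prop.~\ref{prop:optsupporttightSUB}: exhibit an explicit primal--dual certificate for the linear program $\max_{s \in P_+(F)} w^\top s$ and read the optimality conditions off complementary slackness. The only new ingredient compared with Prop.~\ref{prop:optsupporttightSUB} and Prop.~\ref{prop:optsupporttight} is the extra box constraint $s\geqslant 0$, which carries its own Lagrange multipliers $\mu_k \geqslant 0$.

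First I would reorder so that the $i$-th largest value $v_i$ of $w$ is attained on $A_i$, write $B_i = A_1 \cup \cdots \cup A_i$ and let $k^\ast$ be the largest index with $v_{k^\ast} \geqslant 0$ (set $k^\ast = 0$ if $w<0$ componentwise). Dualizing the constraints $s(A)\leqslant F(A)$ with multipliers $\lambda_A \geqslant 0$ and $-s_k \leqslant 0$ with multipliers $\mu_k \geqslant 0$, and using that $P_+(F)$ is a non-empty bounded polyhedron (so linear programming duality applies), one gets
$$ \max_{s \in P_+(F)} w^\top s \ = \ \min\Big\{ \sum_{A \subset V} \lambda_A F(A) \ : \ \lambda_A,\mu_k \geqslant 0, \ w_k + \mu_k = \sum_{A \ni k} \lambda_A \ \mbox{ for all } k \in V \Big\}. $$
The candidate dual point is $\lambda_{B_i} = v_i - v_{i+1}$ for $i<k^\ast$, $\lambda_{B_{k^\ast}} = v_{k^\ast}$, all other $\lambda_A = 0$, together with $\mu_k = -v_i$ for $k \in A_i$ with $i>k^\ast$ and all other $\mu_k = 0$. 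A one-line verification (telescoping on the non-negative part, $v_i<0$ on the negative part) shows this is feasible and non-negative and has objective value $C := \sum_{i<k^\ast}(v_i-v_{i+1})F(B_i) + v_{k^\ast}F(B_{k^\ast})$.

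Second I would check that $C$ is the optimal value. By Prop.~\ref{prop:truncgreedy} the maximum equals $f(w_+)$, and evaluating the \lova extension on the piecewise-constant vector $w_+$ via property~(h) of Prop.~\ref{prop:lova} --- using the partition $A_1,\dots,A_{k^\ast},V\backslash B_{k^\ast}$ with constant value $0$ on the last block --- gives exactly $f(w_+) = C$; equivalently, the truncated greedy base of Prop.~\ref{prop:truncgreedy} is an explicit primal maximizer. Then, for every $s \in P_+(F)$, Abel summation yields the identity $C = \sum_{i<k^\ast}(v_i-v_{i+1})\,[F(B_i)-s(B_i)] + v_{k^\ast}\,[F(B_{k^\ast})-s(B_{k^\ast})] + w^\top s - \sum_{i>k^\ast} v_i\, s(A_i)$, so that $C$ equals $w^\top s$ plus three manifestly non-negative quantities: the first sum because $v_i-v_{i+1}>0$ and $s\in P(F)$; the term $v_{k^\ast}[F(B_{k^\ast})-s(B_{k^\ast})]$ because $v_{k^\ast}\geqslant 0$ and $s\in P(F)$; and $-\sum_{i>k^\ast}v_i\,s(A_i) = -\sum_{i>k^\ast} v_i \sum_{k\in A_i} s_k \geqslant 0$ because $v_i<0$ and $s\geqslant 0$. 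Hence $s$ is optimal iff all three vanish, i.e.\ $s(B_i)=F(B_i)$ for $i<k^\ast$ (and also for $i=k^\ast$ when $v_{k^\ast}>0$), and $s(A_i)=0$ --- hence $s_{A_i}=0$ since $s\geqslant 0$ --- for every $i$ with $v_i<0$. This is exactly the stated characterization, with the borderline case $v_{k^\ast}=0$ treated as in the remark following Prop.~\ref{prop:optsupporttightSUB}.

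I expect the only real difficulty to be bookkeeping: keeping the split $\{v\geqslant 0\}$ versus $\{v<0\}$ consistent throughout, getting the Abel-summation rearrangement exactly right, and minding the corner $v_{k^\ast}=0$, where the multiplier $\lambda_{B_{k^\ast}}$ degenerates to zero so that tightness of $B_{k^\ast}$ is not forced.
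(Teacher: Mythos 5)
Your proof is correct and is exactly the argument the paper intends: its own ``proof'' is a one-line pointer to Prop.~\ref{prop:optsupporttightSUB}, and your explicit dual certificate with the extra multipliers $\mu_k$ for $s\geqslant 0$, combined with Prop.~\ref{prop:truncgreedy} to identify the optimal value $f(w_+)$, is the standard way to fill that in. You are also right about the borderline case: when $v_{k^\ast}=0$ the tightness of $B_{k^\ast}$ is not actually forced (e.g.\ $p=1$, $w=0$), so the statement's ``$v_i\geqslant 0 \Rightarrow s(B_i)=F(B_i)$'' should really read ``$v_i>0$'', mirroring the remark after Prop.~\ref{prop:optsupporttightSUB}.
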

\begin{proof} The proof follows the same arguments than for Prop.~\ref{prop:optsupporttightSUB}, with a special treatment for the negative values of $w$.
 \end{proof}

\begin{proposition}[Faces of the independence polyhedron]
\label{prop:faces_indep}
Let $F$ be a non-decreasing submodular function such that $F(\varnothing)=0$. Let $B$ be a stable set (i.e., such that all strict larger subsets have  strictly greater function values), and $A_1 \cup \cdots \cup A_m$ an ordered partition of $B$, such that for all $j \in \{1,\dots,m\}$, $A_j$ is inseparable for the function $G_j: B \mapsto F( A_1 \cup \cdots  \cup A_{j-1} \cup B) - F( A_1 \cup \cdots  \cup A_{j-1})$ defined on subsets of $A_j$, then  the set of $s \in P_+(F)$ such that for all $j \in \{1,\dots,m\}$, $s(A_1 \cup \cdots \cup A_i) = F( A_1 \cup \cdots  \cup A_i)$, and $s_{V \backslash B}=0$, is a proper face of $P_+(F)$ with non-empty relative interior.
\end{proposition}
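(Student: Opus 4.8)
The plan is to mirror the proof of Prop.~\ref{prop:faces}, reducing the whole statement to a face of the base polyhedron of the restriction $F_B$ of $F$ to $B$. Throughout, write $B_i = A_1 \cup \cdots \cup A_i$, so that $B_m = B$.

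First I would show the set is a face. Pick $w \in \rb^p$ taking the values $v_1 > \cdots > v_m > 0$ on $A_1,\dots,A_m$ and, if $V\backslash B \neq \varnothing$, a negative value $v_{m+1}<0$ on $V\backslash B$. By Prop.~\ref{prop:optsupporttight_indep}, $s$ maximizes $w^\top s$ over $P_+(F)$ precisely when $s(B_i)=F(B_i)$ for $i=1,\dots,m$ (the blocks with $v_i \geqslant 0$) and $s_{V\backslash B}=0$ (the block with $v_{m+1}<0$); this is exactly the set in the statement, which is therefore a face, and it is proper since $P_+(F)$ is not contained in the corresponding supporting hyperplane.

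Next I would reinterpret this face. Since $F$ is non-decreasing, any $s$ with $s_{V\backslash B}=0$ satisfies $s(C)=s(C\cap B)\leqslant F(C\cap B)\leqslant F(C)$ for every $C\subset V$, so along $\{s_{V\backslash B}=0\}$ the constraints for $C\not\subseteq B$ are redundant; hence $P_+(F)\cap\{s_{V\backslash B}=0\}$ is, via $s\mapsto s_B$, isomorphic to $P(F_B)\cap \rb_+^B$. Adding $s(B_m)=s(B)=F(B)$ forces $s_B\in B(F_B)$, and since $F_B$ is non-decreasing $B(F_B)\subseteq \rb_+^B$, so the coordinatewise nonnegativity on $B$ is inactive. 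Thus our face is (isomorphic to) the face $\{\, s_B\in B(F_B) : s_B(B_i)=F(B_i),\ i=1,\dots,m\,\}$ of $B(F_B)$. Because all arguments of the functions $G_j$ lie in $B$, each $G_j$ coincides with the analogous function built from $F_B$, so Prop.~\ref{prop:faces} applies to $F_B$ and yields that this is a proper face of $B(F_B)$ with non-empty relative interior; I would take $\bar s_B$ in that relative interior and set $\bar s=(\bar s_B,0)\in\rb^p$.

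The main obstacle is the final step: showing $\bar s$ lies in the relative interior of the face inside $P_+(F)$, i.e., that the only constraints of $P_+(F)$ active at $\bar s$ are $s_k=0$ ($k\notin B$), $s(B_i)=F(B_i)$, and their consequences. As in the proofs of Prop.~\ref{prop:faces} and Prop.~\ref{prop:fulldim}, the inseparability hypotheses ensure that the only tight subsets of $B$ for $\bar s$ are $\varnothing,B_1,\dots,B_m$. For $C\not\subseteq B$, write $D=C\cap B$ and $D'=C\backslash B\neq\varnothing$; then $\bar s(C)=\bar s_B(D)\leqslant F(D)\leqslant F(D\cup D')=F(C)$. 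If $D\notin\{B_1,\dots,B_m\}$ the first inequality is strict (the case $D=\varnothing$ with $F(D')=0$ being harmless, since then the active constraint is already implied by $s_{D'}=0$); if $D=B_i$, then by submodularity (first-order differences, Prop.~\ref{prop:firstorder}) and $B_i\subseteq B$ one gets $F(B_i\cup D')-F(B_i)\geqslant F(B\cup D')-F(B)$, which is \emph{strictly} positive precisely because $B$ is stable and $B\cup D'\supsetneq B$, so again $\bar s(C)<F(C)$. This use of the stable-set assumption is the only genuinely new ingredient relative to Prop.~\ref{prop:faces}; once it is in place the remaining bookkeeping of active constraints is routine, and $\bar s$ exhibits the non-empty relative interior.
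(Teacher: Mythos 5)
Your proof is correct and follows essentially the same route as the paper's (two-line) proof: the set is a face by Prop.~\ref{prop:optsupporttight_indep}, and its non-empty relative interior is obtained by applying Prop.~\ref{prop:fulldim} to each $G_j$ (which you do via Prop.~\ref{prop:faces} for the restriction $F_B$, an equivalent packaging), with the stability of $B$ invoked exactly where the paper indicates, namely to control the constraints $s(C)\leqslant F(C)$ for $C\not\subseteq B$. You have simply filled in the details the paper leaves implicit.
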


\begin{proof}
We have a face from Prop.~\ref{prop:optsupporttight_indep}, and it has non empty interior by applying
Prop.~\ref{prop:fulldim} on each submodular function $G_j$, and using the stability of $B$.
\end{proof}

We now show how to minimize a separable convex function on the submodular polyhedron or the positive submodular polyhedron (rather than on the base polyhedron).
We first show the following proposition for the submodular polyhedron of any submodular function (non necessarily non-decreasing).

\begin{proposition}[Separable optimization on the submodular polyhedron]
\label{prop:sepsub}
Assume that $F$ is submodular.
Let $\psi_j$, $j=1,\dots,p$ be $p$ convex functions such that $\psi_j^\ast$ is defined and finite on $\rb$. Let $(v,t)$ be a primal-dual optimal pair for the problem
$$\min_{v \in \rb^p } \max_{ t \in B(F)} t^\top v + \sum_{k \in V} \psi_k(v_k)
= \min_{v \in \rb^p} f(v) + \sum_{k \in V} \psi_k(v_k)
=  \max_{ t \in B(F)} - \sum_{k \in V} \psi_k^\ast(-t_k). $$

For $k \in V$, let $s_k$ be a maximizer of $-\psi_k^\ast(-s_k)$ on $(-\infty,t_k]$. Define $w = v_+$.  Then $(w,s)$ is a primal-dual optimal pair for
the problem
$$\min_{w\in \rb^p } \max_{ s \in P(F)} s^\top w + \sum_{k \in V} \psi_k(w_k)
= \min_{w \in \rb^p_+} f(w) + \sum_{k \in V} \psi_k(w_k)
=  \max_{ s \in P(F)} - \sum_{k \in V} \psi_k^\ast(-s_k). $$
\end{proposition}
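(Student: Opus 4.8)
The plan is to exhibit $(w,s)$ as a saddle point of $\Phi(w',s')=s'^\top w'+\sum_{k\in V}\psi_k(w'_k)$ over $\rb^p\times P(F)$. Indeed, by Prop.~\ref{prop:support}(b)--(c), $\max_{s'\in P(F)}\Phi(w',s')$ equals $f(w')+\sum_k\psi_k(w'_k)$ when $w'\geqslant 0$ and is $+\infty$ otherwise, while $\min_{w'}\Phi(w',s')=-\sum_k\psi_k^\ast(-s'_k)$ by definition of the conjugates; so a saddle point yields at once the three displayed equalities and the claimed primal--dual optimality of $(w,s)$. Thus it suffices to check: (1) $s\in P(F)$; (2) $s$ maximizes $s'\mapsto w^\top s'$ over $P(F)$; and (3) for every $k$, $-s_k\in\partial\psi_k(w_k)$, equivalently $w_k$ attains $\min_{w'}\{s_kw'+\psi_k(w')\}=-\psi_k^\ast(-s_k)$.

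First I record what optimality of $(v,t)$ gives (the saddle-point conditions for the base-polyhedron problem, exactly as in Prop.~\ref{prop:prox} but stated with subdifferentials): $t\in B(F)$, the base $t$ maximizes $t'\mapsto v^\top t'$ over $B(F)$, and $-t_k\in\partial\psi_k(v_k)$ for all $k$; the last is equivalent, by Fenchel--Young, to $v_k\in\partial\psi_k^\ast(-t_k)$, i.e.\ to $v_k$ being a supergradient at $t_k$ of the concave function $h_k:\sigma\mapsto-\psi_k^\ast(-\sigma)$. Point (1) is then immediate: $s_k\leqslant t_k$ for all $k$ by construction and $t\in B(F)\subset P(F)$, so $s\in P(F)$ by Prop.~\ref{prop:nonemptyinterior}.

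The structural fact underlying (2) and (3) is that $s_k=t_k$ whenever $v_k>0$: a strictly positive supergradient of the concave $h_k$ at $t_k$ forces the supergradients of $h_k$ to be $\geqslant v_k>0$ on all of $(-\infty,t_k)$, so $h_k$ is strictly increasing there and its unique maximizer over $(-\infty,t_k]$ is $t_k$. Hence $s$ agrees with $t$ on $\{v>0\}$. Writing $A_1,\dots,A_m$ for the level sets of $v$ ordered by decreasing value and $B_i=A_1\cup\cdots\cup A_i$, the base $t$ being a maximizer over $B(F)$ gives, by Prop.~\ref{prop:optsupporttight}, $t(B_i)=F(B_i)$ for all $i$; therefore $s(B_i)=t(B_i)=F(B_i)$ for every $B_i$ contained in $\{v>0\}$. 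Since $w=v_+$ has precisely those $B_i$ as the unions of its positive level sets, Prop.~\ref{prop:optsupporttightSUB} — together with the remark following it, which handles the level set on which $w=0$ — shows that this is exactly the optimality condition for $s$ to maximize $w^\top s'$ over $P(F)$; this proves (2).

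The remaining point (3) is where the truncation $s_k\leqslant t_k$ and the positive part $w=v_+$ must be reconciled, and I expect this to be the only delicate step. Write the optimality of $s_k$ as a maximizer of $h_k$ over $(-\infty,t_k]$ via a multiplier $\mu\geqslant 0$ with $\mu\in\partial h_k(s_k)=\partial\psi_k^\ast(-s_k)$ and $\mu(s_k-t_k)=0$. If $s_k<t_k$, then $\mu=0$, and $v_k\leqslant 0$ by the structural fact above, so $w_k=0=\mu\in\partial\psi_k^\ast(-s_k)$. If $s_k=t_k$ and $v_k\geqslant 0$, then $w_k=v_k\in\partial\psi_k^\ast(-t_k)$. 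If $s_k=t_k$ and $v_k<0$, then the interval $\partial\psi_k^\ast(-t_k)$ contains both $v_k<0$ and $\mu\geqslant 0$, hence contains $0=w_k$. In every case $w_k\in\partial\psi_k^\ast(-s_k)$, i.e.\ $-s_k\in\partial\psi_k(w_k)$. With (1)--(3) in hand, $(w,s)$ is a saddle point of $\Phi$ on $\rb^p\times P(F)$, so it is a primal--dual optimal pair and the three displayed expressions coincide.
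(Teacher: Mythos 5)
Your proof is correct and follows essentially the same route as the paper's: show that $s$ agrees with $t$ on $\{v>0\}$, deduce that the positive level sets of $w=v_+$ are tight for $s$ so that $s$ maximizes $w^\top s'$ over $P(F)$ by Prop.~\ref{prop:optsupporttightSUB}, and verify the coordinatewise Fenchel coupling. Your version is somewhat more careful than the paper's, which tacitly treats $\psi_k^\ast$ as differentiable and leaves implicit that $s_k<t_k$ forces $v_k\leqslant 0$; your supergradient argument and multiplier case analysis fill in exactly those points.
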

\begin{proof}
 The pair $(w,s)$ is optimal if and only if $ w_k s_k + \psi_k(w_k) + \psi_k^\ast(-s_k) = 0$, i.e., $(w_k,s_k)$ is a Fenchel-dual pair for $\psi_k$, and $f(w) = s^\top w$. The first statement is true by construction (indeed, if $s_k = t_k$, then this is a consequence of optimality for the first problem, if $s_k<t_k$, then $w_k = (\psi_k^\ast)'(-s_k) = 0$).
 
 For the second statement,   notice that $s$ is obtained from $t$ by keeping the components of $t$ corresponding to strictly positive values of $v$ (let $K$ denote that subset), and lowering the ones for $V \backslash K$. For $\alpha > 0$, the level sets $\{ w \geqslant \alpha \}$ are equal to $\{ v \geqslant \alpha\} \subset K$. Thus, by Prop.~\ref{prop:optsupporttight},  all of these are tight for $t$ and hence for $s$ because these sets are included in $K$, and $s_K = t_K$. This shows, by  Prop.~\ref{prop:optsupporttightSUB}, that $s \in P(F)$ is optimal for $\max_{ s \in P(F)} w^\top s$.
  \end{proof}

Note that Prop.~\ref{prop:sepsub} involves primal-dual pairs $(w,s)$ and $(v,t)$, but that we can define $w$ from $v$ only, and define $s$ from $t$ only; thus,  primal-only views and dual-only views are possible. This also applies to Prop.~\ref{prop:sepsubpos}.

 \begin{proposition}[Separable optimization on the positive submodular polyhedron]
\label{prop:sepsubpos}
Assume that $F$ is submodular and non-increasing.
Let $\psi_j$, $j=1,\dots,p$ be $p$ convex functions such that $\psi_j^\ast$ is defined and finite on $\rb$. Let $(v,t)$ be a primal-dual optimal pair for the problem
$$\min_{v \in \rb^p } \max_{ t \in B(F)} t^\top v + \sum_{k \in V} \psi_k(v_k)
= \min_{v \in \rb^p} f(v) + \sum_{k \in V} \psi_k(v_k)
=  \max_{ t \in B(F)} - \sum_{k \in V} \psi_k^\ast(-t_k). $$

For $k \in V$, let $s_k$ be a maximizer of $-\psi_k^\ast(-s_k)$ on $[0,t_k]$. For all $k$,  define $w_k$ through
$s_k + \psi_k'(w_k)=0$.  Then $(w,s)$ is a primal-dual optimal pair for
the problem
$$\min_{w\in \rb^p } \max_{ s \in P_+(F)} s^\top w + \sum_{k \in V} \psi_k(w_k)
= \min_{w \in \rb^p} f(w_+) + \sum_{k \in V} \psi_k(w_k)
=  \max_{ s \in P_+(F)} - \sum_{k \in V} \psi_k^\ast(-s_k). $$
\end{proposition}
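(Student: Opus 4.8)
The plan is to mimic the proof of Proposition~\ref{prop:sepsub}, with the role of $P(F)$ replaced by $P_+(F) = P(F) \cap \rb^p_+$, and the role of the projection $w = v_+$ replaced by the construction $s_k + \psi_k'(w_k) = 0$ that recovers $w$ from the dual variable $s$. As in the earlier proof, optimality of the pair $(w,s)$ for the positive-polyhedron problem is equivalent to two facts: (i) for each $k$, $(w_k,s_k)$ is a Fenchel-dual pair for $\psi_k$, i.e.\ $w_k s_k + \psi_k(w_k) + \psi_k^\ast(-s_k)=0$; and (ii) $s \in P_+(F)$ attains $\max_{s \in P_+(F)} w^\top s = f(w_+)$. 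The first statement holds essentially by construction: $s_k$ is chosen as a maximizer of $-\psi_k^\ast(-\cdot)$ on $[0,t_k]$, so $w_k := -(\psi_k^\ast)'(-s_k)$ (the solution of $s_k + \psi_k'(w_k)=0$) is its Fenchel conjugate value; one just has to note that $w_k \geqslant 0$ whenever $s_k$ is strictly below $t_k$ because then $w_k=(\psi_k^\ast)'(-s_k)$ must be $0$, and when $s_k=t_k$ we inherit $w_k = v_k$ from the first (base-polyhedron) problem --- but here we must be slightly careful: unlike in Prop.~\ref{prop:sepsub} we clip at $0$ rather than going down to $-\infty$, so the components with $v_k<0$ get $s_k=0$, which still gives a valid Fenchel pair with $w_k = (\psi_k^\ast)'(0) \leqslant 0$; and $f(w_+) = f(v_+)$ because $w$ and $v$ have the same positive part on the coordinates that matter.

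For (ii), the structure mirrors the earlier proof. Partition $V$ according to the sign of $v$: let $K = \{k : v_k > 0\}$, $L = \{k : v_k = 0\}$, $N = \{k : v_k < 0\}$. On $K$ we have $s_k = t_k$ and $w_k = v_k > 0$; on $L \cup N$ we have $s_k \in [0,t_k]$ and $w_k \leqslant 0$, so $w_+$ is supported on $K$ and the level sets $\{w \geqslant \alpha\}$ for $\alpha>0$ coincide with $\{v \geqslant \alpha\} \subset K$. By Prop.~\ref{prop:optsupporttight} applied to the base-polyhedron optimizer $t$, each such level set is tight for $t$; since these sets lie inside $K$ and $s_K = t_K$, they are tight for $s$ as well. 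Then Prop.~\ref{prop:optsupporttight_indep} (the support-function characterization for $P_+(F)$) gives exactly what is needed: the positive level sets of $w$ are tight, and on the coordinates where $w_k < 0$ (i.e.\ where $v_i<0$ in the notation of that proposition) we need $s$ restricted there to vanish --- this is where I would need to additionally ensure $s_k = 0$ on $N$, which should follow from the clipping (on $N$, $v_k<0$ so the first problem's complementary slackness forces the right behavior, or we adjust the choice of $s_k$ on $N$ to be $0$, which is consistent with $s_k$ being a maximizer on $[0,t_k]$ since $t_k \geqslant 0$ as $B(F) \subset \rb^p_+$ for non-increasing... here I must be careful about whether ``non-increasing'' was meant to be ``non-decreasing'').

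The main obstacle I anticipate is bookkeeping around the boundary coordinates --- precisely the set $N$ where $v_k<0$ and the set $L$ where $v_k=0$ --- and making sure the clipped dual variable $s$ satisfies \emph{both} the Fenchel-pair condition for every $k$ \emph{and} the tightness-plus-vanishing condition of Prop.~\ref{prop:optsupporttight_indep}. Concretely, one needs: $s \geqslant 0$ (immediate from clipping to $[0,t_k]$, using $t_k \geqslant 0$), $s \in P(F)$ (the sets $\{w\geqslant\alpha\}$, $\alpha>0$, tight for $s$, plus $s$ no larger than $t$ elsewhere, so $s(A) \leqslant t(A) \leqslant F(A)$ via $t \in B(F) \subset P(F)$), and the explicit vanishing of $s$ on the negative-weight block. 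Once these are in place, weak duality between the two displayed problems closes the argument: the constructed $(w,s)$ are primal and dual feasible with equal objective values, hence both optimal. I would present it in the order: reduce to (i) and (ii); verify (i) coordinatewise; verify $s\in P_+(F)$; verify the support-function optimality via Prop.~\ref{prop:optsupporttight_indep}; conclude by weak duality.
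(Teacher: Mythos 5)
Your argument is sound, but it takes a genuinely different route from the paper. The paper proves Prop.~\ref{prop:sepsubpos} by a two-line reduction: it applies Prop.~\ref{prop:sepsub} to the monotone envelopes $\tilde{\psi}_k(w_k) = \min_{v_k \leqslant w_k} \psi_k(v_k)$, whose Fenchel conjugates equal $\psi_k^\ast$ on the half-line that forces the dual variable to be non-negative, so the constraint set $P(F)$ of Prop.~\ref{prop:sepsub} automatically becomes $P_+(F)$; the primal form $f(w_+)$ then follows because $f$ is coordinate-wise non-decreasing. You instead redo the direct verification of Prop.~\ref{prop:sepsub} for the positive polyhedron: Fenchel pairing coordinatewise, feasibility $s \in P_+(F)$ via $0 \leqslant s \leqslant t$, tightness of the positive level sets of $w$ inherited from $t$ through Prop.~\ref{prop:optsupporttight} and Prop.~\ref{prop:optsupporttight_indep}, and closure by weak duality. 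Both work; the paper's reduction is shorter and avoids the case analysis, while yours makes the structure of the optimal pair explicit. Your bookkeeping on the boundary coordinates is slightly muddled, though the conclusion is right: the clean statement is that $\sigma \mapsto -\psi_k^\ast(-\sigma)$ is concave with derivative $(\psi_k^\ast)'(-\sigma) = w_k$ at $\sigma = s_k$, so $w_k > 0$ forces $s_k = t_k$ (and then $w_k = v_k$), $w_k < 0$ forces $s_k = 0$, and $s_k$ interior forces $w_k = 0$; the vanishing condition of Prop.~\ref{prop:optsupporttight_indep} is indexed by the sign of $w_k$, not of $v_k$, so it holds exactly where needed. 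You are also right to flag that ``non-increasing'' in the statement should read ``non-decreasing'' (this is the polymatroid setting, and $B(F) \subset \rb_+^p$ as well as the monotonicity of $f$ used in both proofs require it).
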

\begin{proof} We first apply Prop~\ref{prop:sepsub} to the convex functions $\tilde{\psi}_k(w_k) = \min_{ v_k \leqslant w_k} \psi_k(v_k)$, which Fenchel-conjugates equal to $\psi_k^\ast(s_k)$ if $s_k \leqslant 0$ and $+ \infty$ otherwise.
We obtain the minimum over $\rb^p_+$ of $f(w) + \sum_{j \in V} \tilde{\psi}_k(w_k)$. Since $f$ non-decreasing with respect to each variable taken separately (because $F$ is non-decreasing), it is equivalent to minimizing on $\rb^p$, $\min_{w \in \rb^p} f(w_+) + \sum_{k \in V} \psi_k(w_k)$.
 \end{proof}

\section{Examples of submodular functions}
\label{sec:examples}
We now present classical examples of submodular functions. For each of these, we also describe the corresponding \lova extensions, and, when appropriate, the associated submodular polyhedra.

\subsection{Cardinality-based functions}
We consider functions that depend only on $s(A)$ for a certain $s \in \rb_+^p$. If $s = 1_V$, these are functions of the cardinality. The next proposition shows that only concave functions lead to submodular functions, and is coherent with the diminishing return property from \mysec{definitions} (Prop.~\ref{prop:firstorder}).

\begin{proposition}[Submodularity of cardinality-based set-functions]
If $s \in \rb^p_+$ and $g:\rb_+ \to \rb$ is a concave function, then $F:A \mapsto g( s(A) )$  is submodular. If $F:A \mapsto g( s(A) )$  is submodular for all $s \in \rb_+^p$, then $g$ is concave.
\end{proposition}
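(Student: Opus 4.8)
The plan is to prove the two implications separately, in each case reducing to the diminishing-return characterization of submodularity from Prop.~\ref{prop:firstorder}. For the forward direction, suppose $s \in \rb_+^p$ and $g$ is concave on $\rb_+$, and set $F(A) = g(s(A))$. First I would check $F(\varnothing)=g(0)$, and note that adding a constant does not affect submodularity, so we may assume $g(0)=0$ if convenient. Then, for $A \subset B \subset V$ and $k \notin B$, I would compute the first-order difference $F(A \cup \{k\}) - F(A) = g(s(A) + s_k) - g(s(A))$ and similarly $F(B \cup \{k\}) - F(B) = g(s(B) + s_k) - g(s(B))$. Since $A \subset B$ gives $s(A) \leqslant s(B)$, and $s_k \geqslant 0$, the inequality $g(s(A)+s_k) - g(s(A)) \geqslant g(s(B)+s_k) - g(s(B))$ is exactly the statement that the slope of a chord of $g$ over an interval of fixed length $s_k$ is nonincreasing as the interval moves to the right — which is a standard consequence of concavity of $g$. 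By Prop.~\ref{prop:firstorder}, $F$ is submodular.

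For the converse, suppose $F: A \mapsto g(s(A))$ is submodular for every choice of $s \in \rb_+^p$. I want to extract concavity of $g$, i.e., the midpoint inequality $g\big(\tfrac{a+b}{2}\big) \geqslant \tfrac{1}{2}(g(a) + g(b))$ for all $a,b \in \rb_+$, or more generally $g(x+h) - g(x) \geqslant g(y+h) - g(y)$ whenever $0 \leqslant x \leqslant y$ and $h \geqslant 0$. The idea is to choose $p$ large enough (say $p = 3$) and pick $s$ so that for a suitable nested pair $A \subset B$ and element $k$ we have $s(A) = x$, $s(B) = y$, and $s_k = h$; concretely, with $V = \{1,2,3\}$, take $A = \{1\}$, $B = \{1,2\}$, $k = 3$, and $s = (x, y-x, h)$, which lies in $\rb_+^3$ precisely because $x \geqslant 0$, $y \geqslant x$, $h \geqslant 0$. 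Then submodularity of $F$ for this $s$ gives, via Prop.~\ref{prop:firstorder}, $g(x+h) - g(x) \geqslant g(y+h) - g(y)$. Since $x \leqslant y$ and $h \geqslant 0$ were arbitrary, this "nonincreasing chord slope" property over all shifts and positions is equivalent to concavity of $g$ on $\rb_+$ (one recovers the usual definition by taking $h \to 0$ along rationals, or by a direct interval-splitting argument; no regularity of $g$ need be assumed beyond being real-valued, as the discrete inequality already forces midpoint concavity, and midpoint concavity plus the full family of shift inequalities yields concavity).

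The main obstacle is a minor one: being careful that the converse genuinely needs the quantifier "for all $s$" and that a single well-chosen $s$ suffices to isolate an arbitrary instance of the concavity inequality, together with handling the technical point that the diminishing-return inequality of Prop.~\ref{prop:firstorder} is stated for integer increments of the cardinality (a single new element $k$) — which is exactly why we encode the continuous parameters $x, y, h$ as the masses $s(A), s(B), s_k$ rather than as cardinalities. One should also remark that no continuity assumption on $g$ is imposed or needed: the family of inequalities obtained is strong enough to give concavity directly (an additive-style argument shows a midpoint-concave function satisfying the uniform chord-slope condition is concave). I would close by noting the consistency with the diminishing-returns discussion after Prop.~\ref{prop:firstorder}: the submodular, cardinality-based functions are precisely the concave transforms of modular functions.
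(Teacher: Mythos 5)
Your proof is correct and follows essentially the same route as the paper: both reduce to a difference characterization of submodularity (you use the first-order one from Prop.~\ref{prop:firstorder}, the paper the second-order one) and identify submodularity of $A \mapsto g(s(A))$ over all $s\in\rb_+^p$ with the non-increasing chord-slope property $g(x+h)-g(x)\geqslant g(y+h)-g(y)$ for $0\leqslant x\leqslant y$ and $h\geqslant 0$, your explicit choice $s=(x,y-x,h)$ with $p=3$ making the converse direction slightly more concrete. One caveat: your parenthetical claim that this family of inequalities yields concavity with no regularity assumption on $g$ is an overstatement --- the condition you derive is Wright concavity, and (assuming the axiom of choice) a non-linear additive solution of Cauchy's functional equation satisfies it with equality while failing to be concave, so some measurability or local boundedness of $g$ is in fact needed to conclude. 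The paper's proof glosses over exactly the same point, so this shared gap does not distinguish your argument from the one in the text.
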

\begin{proof} The function
$F:A \mapsto g( s(A) )$ is submodular if and only if for all $A \subset V$ and $j,k \in V \backslash A $: 
$g(s(A) + s_k ) - g(s(A)) \geqslant g(s(A) + s_k +s _j) -  g(s(A) + s_j)$. If $g$ is concave and $a \geqslant 0$, $ t \mapsto g(a+t)-g(t) $ is non-increasing, hence the first result. Moreover, if $ t \mapsto g(a+t)-g(t) $ is  non-increasing for all $a \geqslant 0$, then $g$ is concave, hence the second result.
\end{proof}

\begin{proposition}[\lova extension of cardinality-based set-functions]
Let $s \in \rb^p_+$ and $g:\rb_+ \to \rb$ be a concave function such that $g(0)=0$, the \lova extension of the submodular function $F:A \mapsto g( s(A) )$  is equal to
$$
f(w) =  \sum_{k=1}^p w_{j_k} [ g( s_{j_1}+ \cdots + s_{j_k}) -  g( s_{j_1}+ \cdots + s_{j_{k-1}}) ].
$$
If $s = 1_V$, i.e., $F(A) = g(|A|)$, then 
$f(w) =  \sum_{k=1}^p w_{j_k} [ g(k) -g(k-1) ]$.
\end{proposition}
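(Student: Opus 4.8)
The plan is to substitute directly into the closed-form expression \eq{lova1} for the \lova extension. First I would check that $F$ is admissible as a set-function to extend: since $F(\varnothing) = g(s(\varnothing)) = g(0) = 0$ by hypothesis, the \lova extension $f$ of $F$ is well-defined. Fix $w \in \rb^p$ and an ordering of its components $w_{j_1} \geqslant \cdots \geqslant w_{j_p}$, as in the definition.

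Next, introduce the partial sums $S_k = s_{j_1} + \cdots + s_{j_k}$ for $k \in \{0,1,\dots,p\}$, with $S_0 = 0$. Then $F(\{j_1,\dots,j_k\}) = g\big(s(\{j_1,\dots,j_k\})\big) = g(S_k)$, and in particular the first term of \eq{lova1} is $w_{j_1}F(\{j_1\}) = w_{j_1}g(S_1) = w_{j_1}\big(g(S_1) - g(S_0)\big)$, using $g(0) = 0$. Plugging these values into \eq{lova1} yields
$$
f(w) = w_{j_1}\big(g(S_1) - g(S_0)\big) + \sum_{k=2}^p w_{j_k}\big(g(S_k) - g(S_{k-1})\big) = \sum_{k=1}^p w_{j_k}\big(g(S_k) - g(S_{k-1})\big),
$$
which is precisely the asserted formula once $S_k$ is written out as $s_{j_1} + \cdots + s_{j_k}$. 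Independence of the result from the particular choice of ordering when $w$ has repeated values is already part of the general well-definedness of the \lova extension, so nothing extra need be verified.

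For the special case $s = 1_V$, I would simply note that $s$ of any $k$-element subset equals $k$, so $S_k = k$ for every admissible ordering, and the general formula collapses to $f(w) = \sum_{k=1}^p w_{j_k}\big(g(k) - g(k-1)\big)$. I do not expect any genuine obstacle here: the argument is a direct bookkeeping substitution into an already-established formula, and the only subtlety worth a single sentence is the use of $g(0) = 0$ to make the $k = 1$ term match the telescoping pattern $g(S_k) - g(S_{k-1})$ of the remaining terms.
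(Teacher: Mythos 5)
Your proof is correct and is exactly the direct substitution into \eq{lova1} that the paper leaves implicit (it states this proposition without any proof, treating it as immediate from the definition). The one point worth making explicit --- that $g(0)=0$ turns the leading term $w_{j_1}g(s_{j_1})$ into $w_{j_1}[g(S_1)-g(S_0)]$ so the sum starts at $k=1$ --- is handled properly in your write-up.
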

 The \lova extension is thus a function of order statistics.

\subsection{Cut functions}
\label{sec:cuts}
Given a set of (non necessarily symmetric) weights $d:V \times V \to \rb_+$, define  
$$F(A) = \sum_{k \in A, j \in V \backslash A} d(k,j),$$
 which we denote $d(A,V \backslash A)$. 
Note that for a cut function and disjoint subsets $A,B,C$, we always have:
\BEAS
F(A \cup B \cup C) & = &  F(A \cup B )  + F(A   \cup C)  + F( B \cup C)  - F(A )  - F(  B  )  - F(  C)  + F(\varnothing) \\
F( A \cup B) & = & d(A \cup B,  (A \cup B)^c ) = d(A, A^c \cap B^c) + d(B, A^c \cap B^c)
\\
& \leqslant &  d(A,A^c)+ d(B,B^c) = F(A) + F(B),
\EEAS
where we denote $A^c = V \backslash A$.
We then have, for any sets $A,B \subset V$:
\BEAS 
F(A \cup B) 
& = & F([A \cap B] \cup [A \backslash B ]\cup [B \backslash A]) \\
& = & F([A \cap B] \cup [A \backslash B ] ) + F([A \cap B] \cup    [B \backslash A]) + F(  [A \backslash B ]\cup [B \backslash A]) \\
& &  - 
F(A \cap B ) - F(  A \backslash B ) - F(  B \backslash A) + F(\varnothing)  \\
& = & F(A) + F(B) + F( A \Delta B )  - 
F(A \cap B ) - F(  A \backslash B ) - F(  B \backslash A)  \\
& = & F(A) + F(B)   - 
F(A \cap B ) + [ F( A \Delta B )- F(  A \backslash B ) - F(  B \backslash A) ] \\
& \leqslant & F(A) + F(B)    - 
F(A \cap B ), \EEAS
which shows submodularity.
Moreover, the \lova extension is equal to 
$$f(w) = \sum_{k,j \in V} d(k,j) ( w_k - w_j )_+.$$ Then, if  the weight function $d$ is symmetric, then the submodular function is also symmetric and the \lova extension is even (from Prop.~\ref{prop:lova}). Examples of such cuts are shown in Figure~\ref{fig:cuts} (left and middle). A instance of these \lova extensions plays a crucial role in signal and image processing; indeed, for a graph composed a two-dimensional grid with $4$-connectivity (see \myfig{2dgrid}), we obtain the total variation. In fact, some of the results presented in this tutorial were first tackled on this particular case (see, e.g.,~\cite{chambolle2009total} and references therein).

\begin{figure}

\begin{center}
 
\includegraphics[scale=.7]{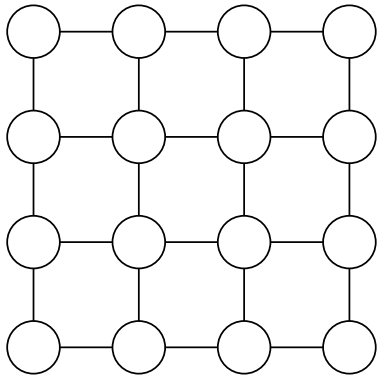} 
 \end{center}

\caption{Two-dimensional grid with $4$-conenctivity.}
\label{fig:2dgrid}
\end{figure}

We can also consider partial minimization to obtain ``regular functions''~\cite{boykov2001fast}. Examples lead to
$f(w) = \max_{k \in G}w_k - \min_{k \in G} w_k$, which corresponds
to $F(A) = 1_{ A \cap G \neq \varnothing }  - 1_{ A \cap G = \varnothing } $.

It may also lead to ``noisy cuts'', i.e., for a given a weight function $d: V \times V \to \rb_+$, we add $p$ nodes, each of them associated to the original nodes, and consider the convex and submodular functions
\BEAS
f(w) &  = &  \min_{v \in \rb^p}  \sum_{k,j \in V} d(k,j) ( v_k - v_j )_+ + \lambda \sum_{k \in V}  \alpha_k |v_k - w_k|,
\\
F(A) &  = &  \min_{B \subset V}  \sum_{k \in B,j \in B^c} d(k,j) + \lambda \sum_{k \in V} \alpha_k |1_{k \in A} - 1_{k \in B}|, 
\EEAS
 which are associated to each other due to Prop.~\ref{prop:partial}. An example of such cut is shown in Figure~\ref{fig:cuts} (right).

This example is particularly interesting, because it leads to a family of   submodular functions for which dedicated fast algorithms exist. Indeed, minimizing the cut functions or the partially minimized cut, plus a modular function defined by $z \in \rb^p$, may be done with a min-cut/max-flow algorithm (see, e.g.,~\cite{cormen89introduction}). Indeed, following~\cite{boykov2001fast,chambolle2009total}, we add two nodes to the graph, a source $s$ and a sink $t$. All original edges have non-negative capacities $d(k,j)$, while, the edge that links the source $s$ to the node $k \in V$ has capacity $(z_k)_+$ and the edge that links the node $k \in V$ to the sink $t$ has weight $-(z_k)_-$ (see bottom line of \myfig{cuts}). Finding a minimum cut or maximum flow in this graph leads to a minimizer of $F-z$.
 
 For proximal methods, such as defined in \eq{proxalpha} (\mysec{prox}), we have $z = \psi(\alpha)$ and we need to solve an instance of a \emph{parametric max-flow} problem, which may be done using efficient dedicated algorithms~\cite{gallo1989fast,hochbaum2001efficient,chambolle2009total}. See also \mysec{proxcomb} for generic algorithms based on a sequence of singular function minimizations.

\begin{figure}

\begin{center}

\hspace*{.7cm}
\includegraphics[scale=.6]{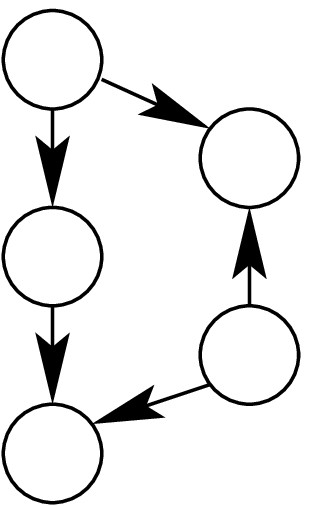} \hspace*{2.5cm}
\includegraphics[scale=.6]{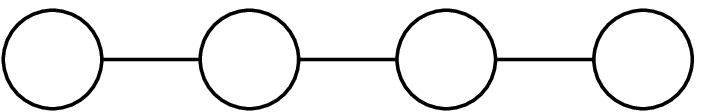}  \hspace*{1cm}
\includegraphics[scale=.6]{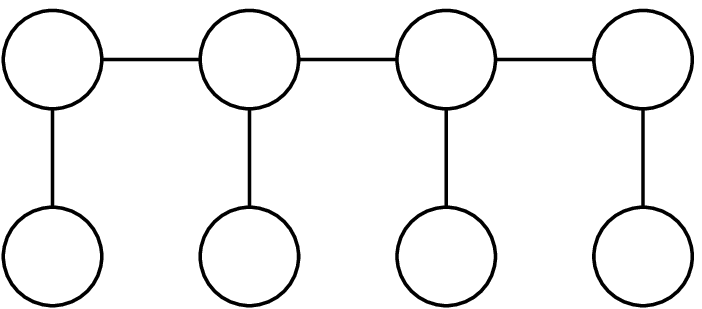}

\vspace*{.8cm}

\includegraphics[scale=.6]{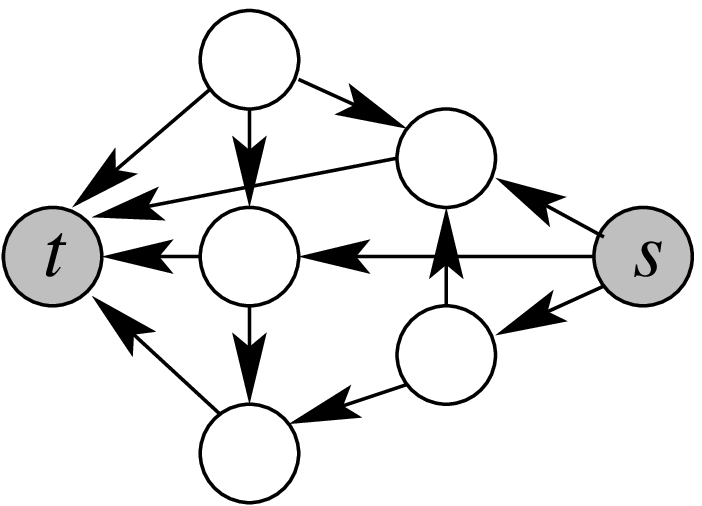} \hspace*{1cm}
\includegraphics[scale=.6]{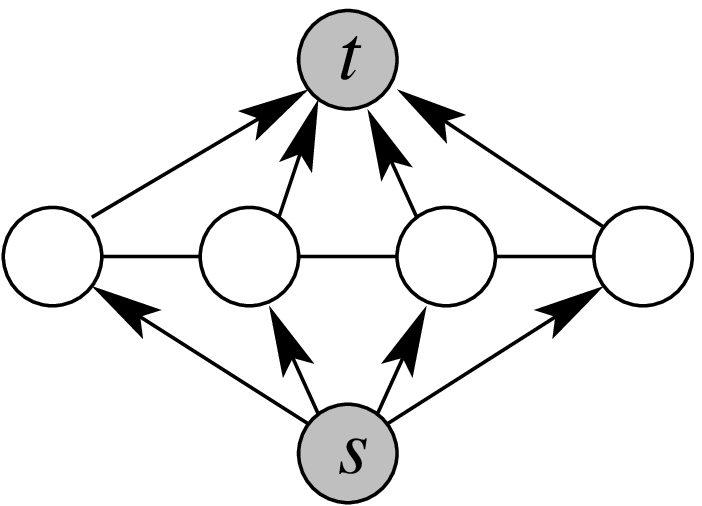}  \hspace*{1cm}
\includegraphics[scale=.6]{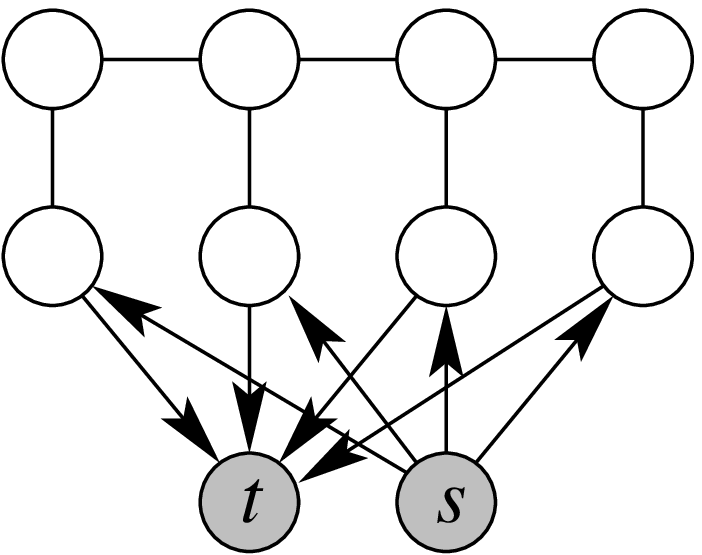}
 \end{center}

\caption{Top: graphs for symmetric (left) and non-symmetric cost functions. Bottom: corresponding networks (note that for the right plot, this corresponds to a partial minimization, we refer to in the text as noisy cuts).}
\label{fig:cuts}
\end{figure}
 
 \subsection{Set covers}
\label{sec:cover}
Given a \emph{non-negative} function $D: 2^V \to \rb_+$, then we can define 
$$F(A) = \sum_{G \subset V, G  \cap A \neq \varnothing} { \rm Dep}(G),$$
 with $f(w) = \sum_{G \subset V} { \rm Dep}(G) \max_{k \in G} w_k$. The submodularity and the \lova extension can be obtained using linearity and the fact that the \lova extension of $A \mapsto 1_{G \cap A = \varnothing}$ is $w \mapsto \max_{ k \in G } w_k$.

\paragraph{M\"obius inversion.}
Note that any set-function $F$ may be written as 
$$ F(A) = \sum_{G \subset V, G  \cap A \neq \varnothing} { \rm Dep}(G)
=\sum_{ G \subset V  } { \rm Dep}(G)   -  \sum_{ G \subset V \backslash A } { \rm Dep}(G), $$ for a certain set-function~$D$, \emph{which is not usually non-negative}. Indeed, by M\"obius inversion formula (see, e.g.,~\cite{mobius}), we have:
$${ \rm Dep}(G) = \sum_{ A \subset G} (-1)^{|G| - |A|} \big[ 
F(V) - F(A) \big] 
.$$
Thus, functions for which $D$ is non-negative are a specific subset of submodular functions. Moreover, these functions are always non-decreasing. Such functions are used in the context of sparsity-inducing norms~\cite{bach2010structured, jenattonmairal,mairal10}.

\paragraph{Reinterpretation in terms of set-covers.}
Let $W$ be any ``base'' set.
Given for each $k \in V$, a set $S_k \subset W$, we define $F(A) = \big| \bigcup_{k \in A} S_k \big|$. More generally, we can define
$F(A) = \sum_{j \in W} \Delta(j) 1_{ \exists k \in A, S_{k} \ni j}$, if we have weights $\Delta(j) \in \rb_+$ for $j \in W$ (this corresponds to replace the cardinality function on $W$, by a weighted cardinality function, with weights~$\Delta$). Then, $F$ is submodular (as a consequence of the equivalence with the previously defined functions, which we now prove).
 
 These two types of functions are in fact equivalent. Indeed, for a weight function $D: 2^V \to \rb_+$, we let $W = 2^V$ and $S_{k} = \{ G \subset V, G \ni k \}$, and $\Delta(G) = { \rm Dep}(G)$, to obtain a set cover.
 
 For a certain set cover define by $W$, $S_k \subset W$, $k \in V$, and $\delta$, define 
 $${ \rm Dep}(G) = \sum_{j \in W} \Delta_j 1_{ G_j = \bigcup_{k \in V ,\  S_k \ni j} S_k },$$
  to obtain a set-function expressed in terms of groups and non-negative weight functions.

\paragraph{Examples.} In Figure~\ref{fig:tree}, we show a set of groups (i.e., only the groups $G \subset V$ for which ${ \rm Dep}(G)>0$), which can be embedded into a hierarchy, as well as the corresponding flow interpretation from \mysec{flows}. We also show in \myfig{1d} and \myfig{1dbis} examples in one dimension.

\begin{figure}

\begin{center}
 
\includegraphics[scale=.7]{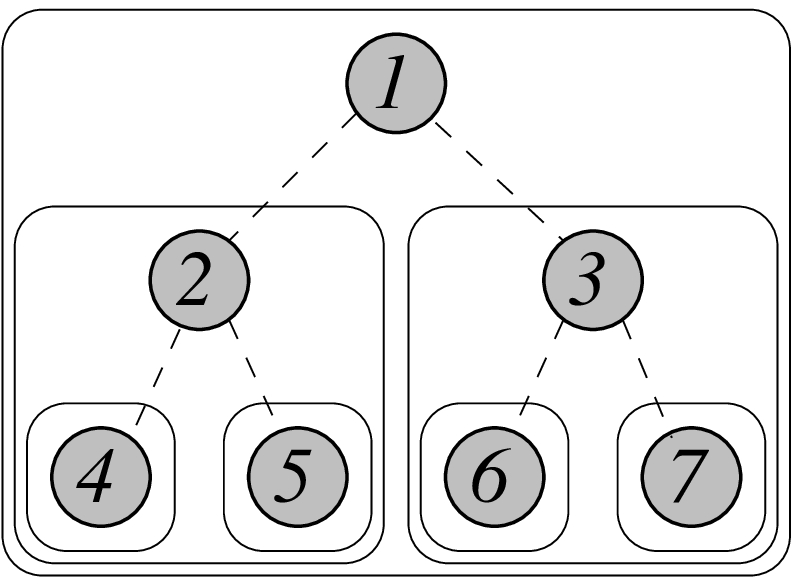} \hspace*{1cm}
\includegraphics[scale=.7]{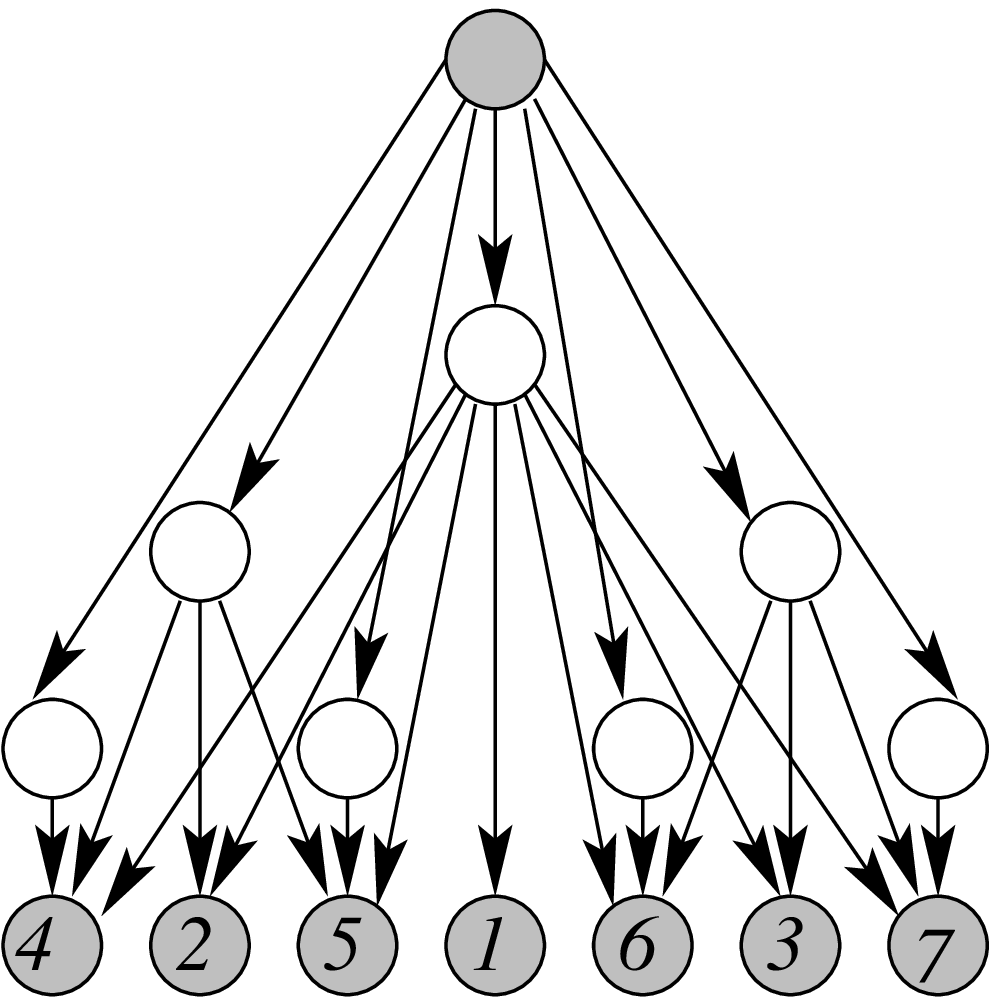}
 \end{center}

\caption{Left: Groups corresponding  to a hierarchy. Right: network flow interpretation of same submodular function.}
\label{fig:tree}
\end{figure}

\subsection{Flows}
\label{sec:flows}
Following~\cite{megiddo1974optimal}, we can obtain a family of non-decreasing submodular set-functions (which include set covers) from multi-sink multi-source networks. We define a weight function on a set $W$, which includes a set $S$ of sources and a set $V$ of sinks (which will be the set on which the submodular function will be defined). We assume that we are given capacities, i.e., a function $c$ from $W \times W$ to $\rb_+$. For all functions $\varphi: W \times W \to \rb$, we use the notation $\varphi(A,B) = \sum_{k \in A, \ j \in B} \varphi(k,j)$.

A flow is a function $\varphi: W \times W \to \rb_+$ such that (a) $\varphi \leqslant c$ for all arcs, (b) for all $w \in W \backslash (S \cup V )$, the net-flow at $w$, i.e., $\varphi(W,\{w\}) - \varphi( \{w\},W)$, is null, (c) for all sources $s \in S$,  the net-flow at $s$ is non-positive, i.e., $\varphi(W,\{s\}) - \varphi( \{s\},W) \leqslant 0$, (d) for all sinks $t \in V$, 
the net-flow at $t$ is non-negative, i.e., $\varphi(W,\{t\}) - \varphi( \{t\},W) \geqslant 0$.
We denote by $\mathcal{F}$ the set of flows.

For $A \subset V$ (the set of sinks), we define 
$$ \displaystyle F(A) = \max_{ \varphi \in \mathcal{F}} \ \varphi(W,A) - \varphi(A,W),$$
which is the maximal net-flow getting out of $A$. From the max-flow/min-cut theorem (see, e.g.,~\cite{cormen89introduction}), we have immediately that 
$$
F(A) = \min_{X \in W, \ S \subset X, \ A \subset W \backslash X } c(X, W \backslash X).
$$

One then obtain that $F$ is submodular (as the partial minimization of a cut function) and non-decreasing by construction. One particularity is that for this type of submodular  non-decreasing functions, we have an explicit description of the positive submodular polyhedron. Indeed, $x \in \rb_+^p$ belongs to $P(F)$ if and only if, there exists a flow $\varphi \in \mathcal{F}$ such that for all $k \in V$, $x_k = \varphi(W,\{k\}) - \varphi( \{k\},W)$ is the net-flow getting out of $k$.

Similarly to other cut-derived functions, there are dedicated algorithms for proximal methods and submodular minimization~\cite{hochbaum1995strongly}. See also~\cite{mairal10} for  applications to sparsity-inducing norms.

\paragraph{Flow interpretation of set-covers.} Following~\cite{mairal10}, we now show that the submodular functions defined in this section includes
the ones defined in \mysec{cover}. Indeed, consider a  {non-negative} function $D: 2^V \to \rb_+$, and define $F(A) = \sum_{G \subset V, G  \cap A \neq \varnothing} { \rm Dep}(G)$. The \lova extension may be written as, for all $w \in \rb_+^p$,
\BEAS
f(w) & = &  \sum_{G \subset V} { \rm Dep}(G) \max_{k \in G} w_k
\\
& = &  \sum_{G \subset V} \ \ \max_{ t^G \in \rb_+^p, \ t^G_{V \backslash G}=0, \ t^G(G) =  { \rm Dep}(G)}   w^\top t^G \\
& = &   \max_{ t^G \in \rb_+^p, \ t^G_{V \backslash G}=0, \ t^G(G) =  { \rm Dep}(G)}   \sum_{G \subset V}  w^\top t^G \\
& = &   \max_{ t^G \in \rb_+^p, \ t^G_{V \backslash G}=0, \ t^G(G) =  { \rm Dep}(G)}   \sum_{ k \in V} \bigg(  \sum_{G \subset V}  t^G_k \bigg) w_k .
\EEAS
Thus $s \in P(F)$, if and only there exists $t^G \in \rb_+^p, \ t^G_{V \backslash G}=0, \ t^G(G) =  { \rm Dep}(G)$ for all $G \subset V$, such that $ s = \sum_{G \subset V} t^G$. This can be given a network flow interpretation on the graph composed of a single source $s$, one node per subset $G \subset V $ such that ${ \rm Dep}(G)>0$, and the sink set $V$. The source is connected to all subsets $G$, with capacity ${ \rm Dep}(G)$, and each subset is connected to the variables it contains, with infinite capacity. We give examples of such networks in \myfig{1d} and \myfig{1dbis}.

\begin{figure}

\begin{center}
 \includegraphics[scale=.7]{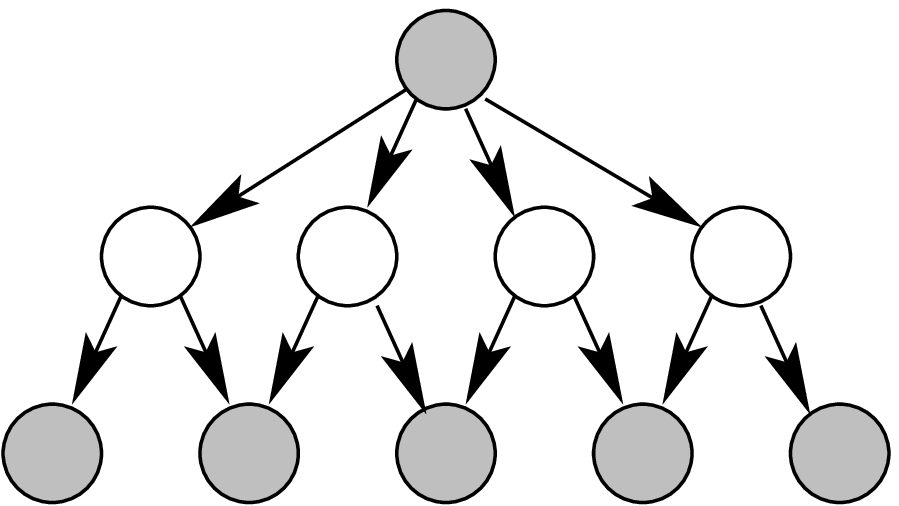} \hspace{1cm}
\includegraphics[scale=.7]{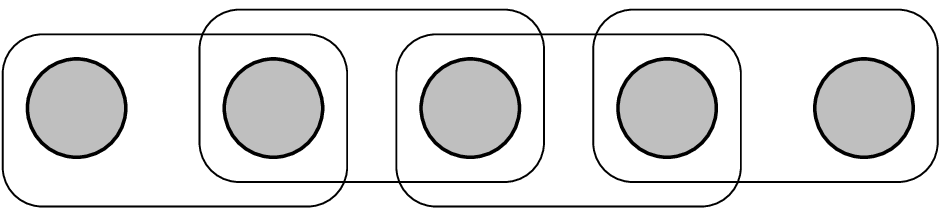}
  \end{center}

\caption{Flow (left) and set of groups (right).}

\label{fig:1d}

\end{figure}

\begin{figure}

\begin{center}
 \includegraphics[scale=.7]{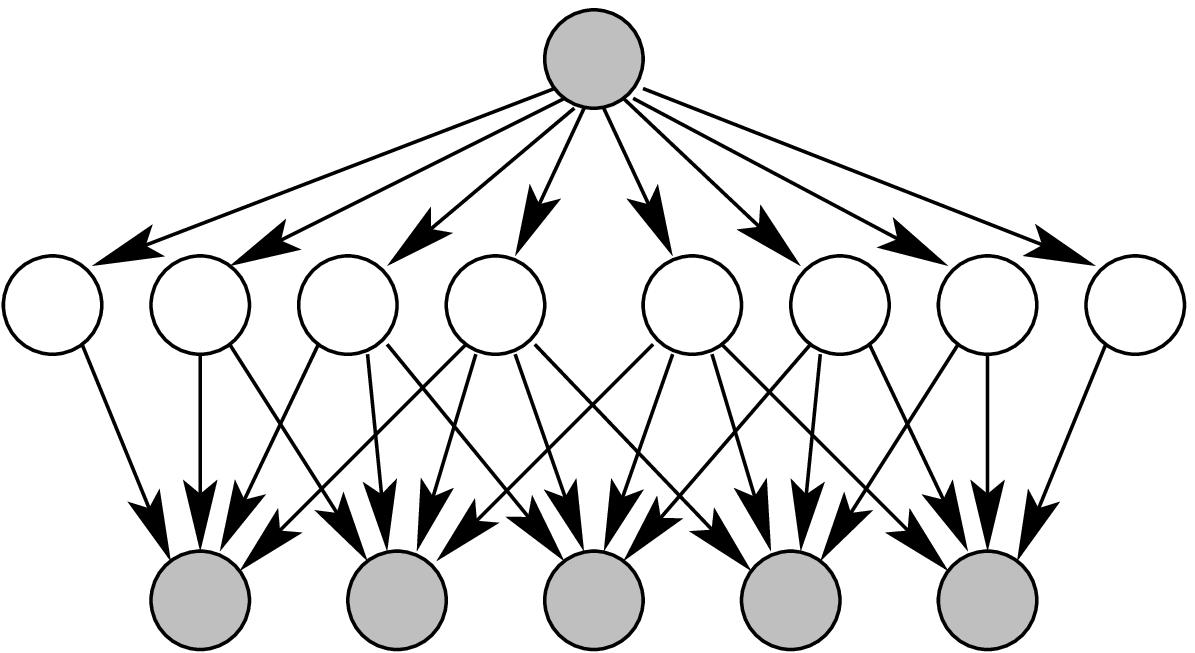} \hspace{1cm}
\includegraphics[scale=.7]{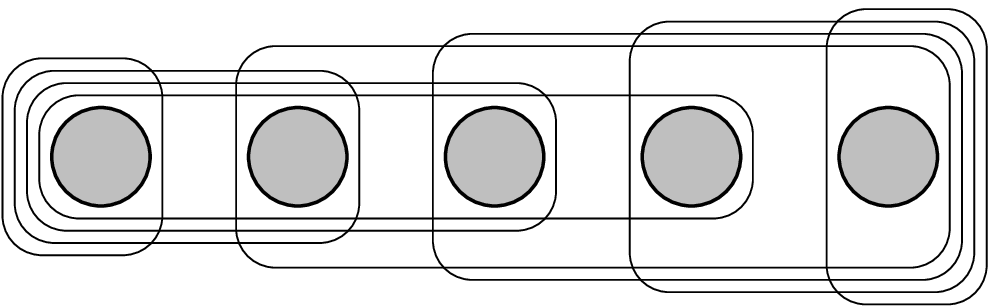}
 \end{center}

\caption{Flow (top) and set of groups (bottom).}

\label{fig:1dbis}

\end{figure}

\subsection{Entropies}
Given $p$ random variables $X_1,\dots,X_p$ which all take a finite number of values, we define $F(A)$ as the joint entropy of the variables $(X_k)_{k \in A}$. This function is submodular because, if $A \subset B$ and $k \notin B$,
$F(A \cup \{ k\}) - F(A) = H(X_A,X_k)-H(X_A) = H(X_k|X_A)  \geqslant  H(X_k|X_B) =   F(B \cup \{ k\}) - F(B)
$ (by the data processing inequality~\cite{cover91elements}).

This can be extended to any distribution by considering differential entropies. One application is for Gaussian random variables, leading to the submodularity of the function defined through $F(A) = \log \det Q_{AA}$, for some positive definite matrix $Q \in \rb^{p \times p}$ (see further related examples in 
\mysec{spectral}).

\subsection{Spectral functions of submatrices}
\label{sec:spectral}

Given a positive semidefinite matrix $Q \in \rb^{ p \times p }$ and a real-valued  function $h$ from $\rb_+ \to \rb$, one may define $\tr [h(Q)]$ as $\sum_{i=1}^p h(\lambda_i)$ where $\lambda_1,\dots,\lambda_p$ are the (nonnegative) eigenvalues of $Q$~\cite{horn1990matrix}. We can thus define the function $F(A) = \tr h(Q_{AA})$ for $A \subset V$.

The concavity of $h$ is not sufficient for submodularity (as can be seen by generating random examples with $h(\lambda) = \lambda/( \lambda+1)$).

We know however that the functions $h(\lambda) = \log( \lambda + t)$ for $t\geqslant 0$ lead to submodular functions; thus, since for $p \in (0,1)$,
$\lambda^p = \frac{ p \sin p \pi }{\pi} \int_0^\infty  \log (1+ \lambda/t) t^{p-1} dt$~(see, e.g.,~\cite{ando1979concavity}), $h(\lambda) = \lambda^p$ for $p \in (0,1]$ are positive linear combinations of functions that lead to non-decreasing submodular set-functions.  We thus obtain a
non-decreasing submodular function. Applications may be found in~\cite{bach2010structured}.

This can be generalized to functions of the singular values of $X(A,B)$ where $X$ is a rectangular matrix, by considering the fact that singular values of a matrix $X$ are related to the eigenvalues of
$\left( \begin{array}{cc} 0 & X \\ X^\top & 0 \end{array} \right)$ (see, e.g.,~\cite{golub83matrix}).

\subsection{Best subset selection}
Following~\cite{das2008algorithms}, we consider $p$ random variables (covariates) $X_1,\dots,X_p$, and a random response $Y$ with unit variance, i.e., $\var (Y)=1$. We consider predicting $Y$ linearly from $X$. We consider
$F(A) = \var(Y|X_A)$. The function $F$ is a non-increasing function.

A variable $X_j$ is a suppressor for variable $X_i$, if $|{\rm Corr}(Y,X_i|X_j)| > | {\rm Corr}(Y,X_i)| $. Following~\cite{das2008algorithms}, we assume that there are no suppressor variables given any set $A$, i.e., we assume that
for all $A \subset V$, $i,j \notin A$,
$$|{\rm Corr}(Y,X_i|X_j,X_A)| \leqslant | {\rm Corr}(Y,X_i|X_A) |, $$

We then have:
$$
\var(Y|X_A,X_k) - \var(Y|X_A) = - {\rm Corr}( Y, X_k | X_A)^2,
$$
$$
\var(Y|X_A,X_j,X_k) - \var(Y|X_A,X_j) = - {\rm Corr}( Y, X_k | X_A,X_j)^2.
$$

This implies that $F$ is supermodular. Note however that the condition on suppressors is rather strong.

\subsection{Matroids}
Given a set $V$, we consider a family $\mathcal{I}$ of subsets of $V$ such that
(a) $\varnothing \in \mathcal{I}$, (b) $I_1 \subset I_2 \in \mathcal{I} \Rightarrow I_1 \in \mathcal{I}$, and (c) for all $I_1,I_2 \in \mathcal{I}$, $|I_1| < |I_2| \Rightarrow \exists k \in I_2 \backslash I_1, \ I_1 \cup \{k \} \in \mathcal{I}$. The pair $(V, \mathcal{I})$ is then referred to as a matroid, with $\mathcal{I}$ its family of independent sets. Then the rank function of the matroid, defined as
$\rho(A) = \max_{ I \subset A, \ A \in \mathcal{I} } |I| $, is submodular.

The classical example is the \emph{graphic matroid}; it corresponds to $V$ being an edge set of a certain graph, and $\mathcal{I}$ being the set of subsets of edges which do not contain any cycle. The rank function $\rho(A)$ is then equal to $p$ minus the number of connected components of the subgraph induced by $A$.

The other one is the \emph{linear matroid}. Given a matrix $M$ with $p$ columns, then a set $I$ is independent if and only if the set of columns indexed by $I$ is independent. The rank function $\rho(A)$ is then the rank of the columns indexed by $A$ (this is also an instance of functions from \mysec{spectral}).

\subsection*{Acknowledgements}

This tutorial was partially supported by grants from the Agence Nationale de la Recherche (MGA
Project) and from the European Research Council (SIERRA Project). The author would like to
thank Rodolphe Jenatton, Armand Joulin, Julien Mairal and Guillaume Obozinski for discussions related to submodular functions.
\bibliographystyle{unsrt}
\bibliography{submodular}

\end{document}